\documentclass[12pt]{article}
\usepackage{fullpage}

\usepackage{amsmath,amsthm,amssymb,mathtools,amsfonts,bbm}

\usepackage{graphicx,caption,color,comment,hyperref,natbib}
\usepackage{algorithm}
\usepackage[noend]{algpseudocode}
\usepackage{graphicx,comment,boldline,natbib,pifont,color}
\usepackage{amsmath,amsthm,amssymb,mathtools,amsfonts,bbm}
\usepackage{exscale}

\theoremstyle{definition}

\newtheorem{example}{Example}[section]
\newtheorem{theorem}{Theorem}[section]
\newtheorem{lemma}{Lemma}[section]
\newtheorem{corollary}{Corollary}[section]

\DeclareMathOperator{\adj}{\textbf{A}}
\DeclareMathOperator{\gro}{\textit{g}}
\DeclareMathOperator{\no}{\textit{n}}
\DeclareMathOperator{\noc}{\textit{k}}
\DeclareMathOperator{\wrs}{\textit{W}} 
\DeclareMathOperator*{\argmin}{argmin} 
\DeclareMathOperator*{\argmax}{argmax}
\DeclareMathOperator{\E}{\mathbb{E}}
\DeclareMathOperator{\pri}{\textit{o}}
\DeclarePairedDelimiter\abs{\lvert}{\rvert}%
\DeclarePairedDelimiter\norm{\lVert}{\rVert}

\newcommand{\cmark}{\ding{51}}
\newcommand{\xmark}{\ding{55}}
\newcommand{\pzo}{f}
\newcommand{\poo}{h}
\newcommand{\per}{\sigma}
\newcommand{\trn}{'}
\newcommand{\maj}{M}
\newcommand{\asor}{\alpha}
\newcommand{\y}{Y} 
\newcommand{\q}{q} 
\newcommand{\p}{\rho} 
\newcommand{\wt}{\varpi} 

\newcommand{\eps}{\epsilon} 

\title{Block-Structure Based Time-Series Models For Graph Sequences}
\author{Mehrnaz Amjadi and Theja Tulabandhula\\
University of Illinois at Chicago\\
\{mamjad2,theja\}@uic.edu}
\date{}

\begin{document}
\maketitle

\begin{abstract}
Although the computational and statistical trade-off for modeling single graphs, for instance, using block models is relatively well understood, extending such results to sequences of graphs has proven to be difficult. In this work, we take a step in this direction by proposing two models for graph sequences that capture: (a) link persistence between nodes across time, and (b) community persistence of each node across time. In the first model, we assume that the latent community of each node does not change over time, and in the second model we relax this assumption suitably. For both of these proposed models, we provide statistically and computationally efficient inference algorithms, whose unique feature is that they leverage community detection methods that work on single graphs. We also provide experimental results validating the suitability of our models and methods on synthetic and real instances.
\end{abstract}

\section{Introduction}\label{sec:introduction}

There are many statistical approaches to modeling a single network (or graph), while techniques to work with graph sequences have been relatively nascent. For instance, the popular stochastic block model (SBM), a random graph model with planted communities\footnote{We sidestep the concern of formally defining what a community means here.}, has been extremely popular in various disciplines such as statistical physics, statistics, and computer science, because it can capture certain desirable properties of real-world networks~\cite{abbe_community_2017}. The SBM and its variants assume that the interaction between nodes (members) in a network is only based on the identities of the latent communities that they belong to. As such, the model has been used to study online and offline social networks, protein-protein interaction networks, ecological and communication networks, and has also been used to build recommendation systems and identify pathways in biological networks. Recent research has focused heavily on efficient detectability and recovery of the latent community structure as well as the model parameters for the SBM, with near optimal algorithms in both the statistical and computational sense~\cite{abbe_recovering_2015,gao_achieving_2015,xu_optimal_2017}. Building on this line of research, our paper attempts to define models for \emph{sequences} of graphs that resemble the SBM (upon suitable marginalization), and provides efficient inference algorithms for inferring the model parameters. These new models, which work with sequences of graphs, can find use in applications that involve controlling or modulating network growth across time.

Our original goal was to propose new models for graph sequences based on latent block structure, while keeping the following aspects in mind: the first is whether one can reuse the computational advances made for the SBM while designing our inference procedures, and the second is whether one can parallel the statistical guarantees for detectability and recovery available for the SBM in our setting. In this paper, we partially achieve both goals.  For instance, in~\cite{abbe_detection_2015}, the authors propose a computationally efficient community detection algorithm (a linearized acyclic belief propagation method), and this method can be easily used in our inference methods. We can also use the well known spectral clustering technique. The reason we are able to use many of these algorithms, which are applicable for the SBM, is that we carefully take into account the Markovian dynamics involved with how links/edges and community memberships change over time in our proposed models. A consequence of this choice is that marginalized observations retain the SBM structure, and hence efficient methods such as spectral clustering can be utilized. In the paper, we also discuss how such a careful design does not make our models less flexible or narrow. In fact, recent works such as~\cite{barucca_detectability_2017} also propose similar Markovian models, but fall back on the general Expectation-Maximization (EM) algorithmic template for  inference that does not allow reuse of estimation methods designed for the SBM. 

To be precise, we propose two general models for sequences of graphs as shown in Figure \ref{fig_model}, where the marginal distributions of graphs retain an SBM structure. The first model focuses on \emph{link persistence}, where links observed at a given time step depend on history, but the latent community memberships of nodes remain unchanged throughout. In this setting, we capture two special cases of link evolution. In the first, links evolve over time based on a conditional Bernoulli distribution, and in the second, links either remain the same as before with some probability $\xi > 0$, or they are generated according to a fixed unknown SBM model with probability $(1-\xi)$ . In terms of analogy, this is akin to a lazy random walk on a suitable Markov chain. Our proposed inference procedure under this model stitches together the estimated community memberships from multiple different SBM estimates, which are potentially \emph{permuted} with respect to each other. Once we infer the latent communities in this model, we use maximum likelihood estimation (MLE) to estimate the remaining parameters. In the special cases described above, we also develop systems of nonlinear equations that relate estimates made at different time indices to the model parameters. This latter approach can potentially be more transparent about errors in the parameter estimates compared to performing a non-linear MLE optimization. In addition to correctness claims, we provide a finite sample statistical guarantee for one of the intermediate steps using matrix concentration results (we also discuss how analysis techniques developed in the literature on clustering under noise can be reused in our setting). 

In the second model, we capture \emph{community persistence} (in addition to link persistence) by assuming that between two consecutive graph snapshots, a small \emph{non-stochastic} number of nodes (\emph{minority}) change their communities in a stochastic manner. This is a fairly reasonable assumption to make if one can take measurements of graphs at increasingly frequent intervals of time such that most of the edges and other network properties including community memberships remain the same. In a sense, our partially-stochastic model can be considered as an alternative to the fully-stochastic approach of~\cite{xu_stochastic_2015,barucca_detectability_2017,ghasemian_detectability_2016,rastelli2017exact}, where every node can change its latent community with some probability. Under the assumption that we know the majority and minority labels of nodes (this is much coarser information than knowing the community memberships themselves), we propose an inference procedure to infer the community memberships of most nodes across time, building on the previous strategy while accounting for community changes. We extensively motivate when such coarser information is available in Section~\ref{sec:group-changing-model}.

\begin{figure}	
\begin{center}
\includegraphics[width=0.42\columnwidth]{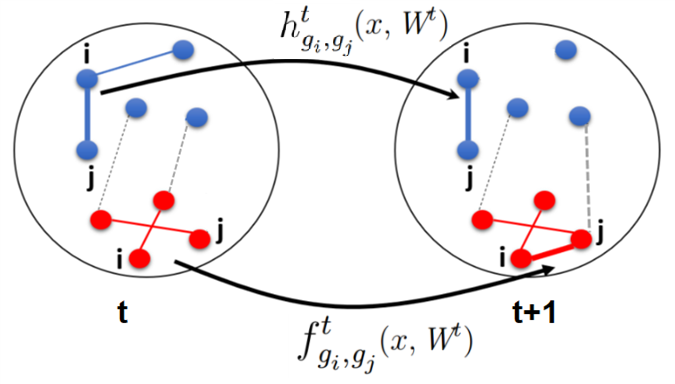}{}
	\includegraphics[width=0.45\columnwidth]{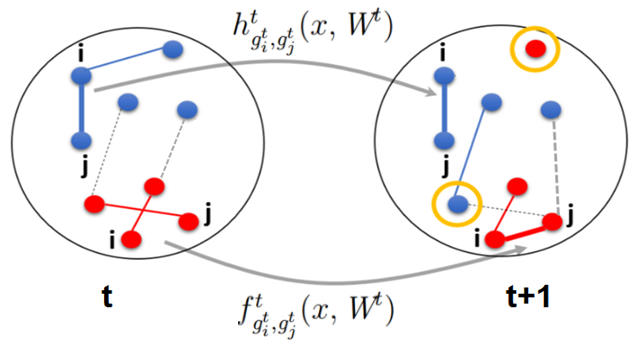}
\caption{Left: Illustration of link persistence (Section~\ref{sec:group-fixed-model}). Right: Illustration of link and community persistence (Section~\ref{sec:group-changing-model}). Functions $\pzo^t_{\gro_i,\gro_j}(x,\wrs^t)$ and $\poo^t_{\gro_i,\gro_j}(x, \wrs^t)$ model the Markovian dynamics in Section~\ref{sec:group-fixed-model}. Functions $\pzo^t_{\gro_i^t,\gro_j^t}(x,\wrs^t)$ and $\poo^t_{\gro_i^t,\gro_j^t}(x, \wrs^t)$ model the Markovian dynamics in Section~\ref{sec:group-changing-model}. } \label{fig_model}
\end{center}
\end{figure}

The notions of link and community persistence have been captured in varied previous works such as~\cite{anagnostopoulos_community_2016,ghasemian_detectability_2016,herlau_modeling_2013,pensky_spectral_2017,xing_state-space_2010,xu_dynamic_2014,han_consistent_2015,xu_stochastic_2015,yang_detecting_2011,zhang_random_2016}, and ~\cite{barucca_detectability_2017} among others. While some of them are concerned with detectability questions (i.e., parameter regimes where their model is identifiable), others have proposed inference algorithms (based on EM, Markov Chain Monte Carlo, belief propagation, MLE, spectral clustering, etc.) that tend to be computationally inefficient (some of the largest sizes of graphs they can handle is $\sim 1000$). One of the hurdles that hampers scalability in these time series methods is that many of these attempt to jointly estimate community memberships (a hard combinatorial problem) and the model dynamics at the same time (sometimes this is a consequence of the model dynamics). On the other hand, due to our modeling choices, the corresponding inference algorithms are much more leaner and faster. They rely on subroutines that perform community membership inference on single graphs assuming an SBM structure. This allows them to side-step any joint estimation issues by efficiently solving multiple single graph community estimation problem instances (this can be done in \emph{parallel}) and then stitching the inferred memberships together. While the paper does not run extensive experiments comparing the scalability of our methods, it is evident that as better methods become available for solving a single graph SBM instance, our inference methods for graph sequences can also become faster. Note that because our methods rely on SBM subroutines, the detectability thresholds levels at which they work also apply to our inference procedures. No additional investigation of such thresholds is necessary, while for other models and algorithms, one has to analyze these identifiability regimes on a case-by-case basis.

Among these related works, we discuss a few closest ones briefly. \cite{zhang_random_2016} propose a degree-corrected model that captures link persistence assuming a conditional Poisson distribution on the number of edges between each pair of nodes, and builds on the EM framework to recover parameters. 
\cite{ghasemian_detectability_2016} propose a particular \emph{symmetric}  model (this assumption reduces the number of parameters drastically) in which community memberships of nodes persist or uniformly change to other values based on natural Markovian dynamics, whereas the links do not persist. They also describe a belief propagation (BP) based algorithm for inference that achieves limited scalability. A model which is very similar to ours is presented by \cite{barucca_detectability_2017}, who capture both community and link persistence. In their model, either the community membership of each node persists with some probability, or it changes uniformly at random. Also, the links are either retained from one time step to another with some probability, or are generated afresh. The authors focus on detectability issues under various parameter regimes, which is different from our goal of designing efficient inference algorithms. A BP based inference algorithm is proposed for parameter recovery. Unlike their approach, we capture a more realistic property that an \emph{arbitrary} but small number of nodes may change their community memberships from one time step to another. 

In their model, \cite{pensky_spectral_2017} assume that the probabilities governing the Markovian dynamics of communities and links are smooth across time, and some nodes can switch their community memberships. In contrast to our work where we start with community estimation, they first estimate the parameters for their dynamics using tensor-based methods and subsequently use spectral clustering to find the community memberships. In \citet{rastelli2017exact} the author extends the stochastic block transition model (SBTM) model of \citet{xu_stochastic_2015} by bringing in a Bayesian hierarchical structure on the model's edge generation dynamics, and proposing a Bayesian inference framework. In doing so, the author is able to integrate out many parameters leading to a slightly more tractable likelihood function (they call it the 'integrated completed likelihood'). Unfortunately, similar to the Kalman filtering (followed by a local search heuristic) approach for SBTM, they also end up devising a  greedy heuristic to estimate the changing community memberships and other parameters. While they obtain an attractive property that the number of communities can also be learned from data, the procedure is not neither computationally efficient (as is the one by \citet{xu_stochastic_2015}), nor does it come with any strong statistical guarantees. It also loses the desirable property of the marginalized graph snapshots retaining a SBM structure. In contrast, by defining our models carefully, we reuse methods for SBM in the first stage of our inference method, which take care of the combinatorial nature of community estimation, allowing for better overall computational efficiency. \citet{rastelli2017choosing} is a precursor to \citet{rastelli2017exact} that introduces a Bayesian specification to the SBTM model. It specifies priors and proposes inference schemes that come without any computational or statistical guarantees, again by defining appropriate integrated competed likelihood functions.

Finally, we also note the works of \citet{han_consistent_2015} and ~\cite{Bhattacharyya2017}, who suggest the use of Spectral-Mean that estimates communities using an averaged adjacency matrix, and give comprehensive performance bounds. This method can be viewed as an alternative to our aggregation methods, so we experimentally examine how Spectral-Mean performs compared to our proposed methods. The models in these works assume independent generation of edges across time in contrast to our Markovian dynamics, making them quite limited. A comparison with select related work is presented in Table \ref{Compare_Lit} for ease of reference.  

	\begin{table}
	\centering
	\caption{Comparison of this work to previous graph sequence literature}\label{Compare_Lit}
	\label{tab:summary}
	\resizebox{\textwidth}{!}{
	\begin{tabular}{cccccc}
	\hlineB{2}
    \textbf{Papers} & Community  & Link & Approach & Theory & Dataset(s) \\
    			  & persistence  & persistence & & & \\
	\hlineB{2}
\citep{han_consistent_2015} & \textcolor{red}{\xmark} & \textcolor{red}{\xmark} & Maximum Likelihood Estimation & Consistency & MIT Reality Mining\\
\citep{Bhattacharyya2017} & \textcolor{red}{\xmark} & \textcolor{red}{\xmark} & Spectral clustering & Convergence rates & -\\
	\citep{xu_dynamic_2014} & \textcolor{green}{\cmark} & \textcolor{green}{\cmark} & Kalman Filtering+Local Search & - & MIT Reality Mining, Enron emails\\
    	\citep{xu_stochastic_2015} & \textcolor{green}{\cmark} & \textcolor{green}{\cmark} & Kalman Filtering+Local Search & - & Facebook wall posts\\    
	\citep{zhang_random_2016} & \textcolor{red}{\xmark} & \textcolor{green}{\cmark} & Expectation Maximization & - & Internet AS graphs, Friendship networks\\
	\citep{ghasemian_detectability_2016} & \textcolor{green}{\cmark} & \textcolor{red}{\xmark} & Expectation Maximization & Detectability thresholds & Synthetic\\
	\citep{barucca_detectability_2017} & \textcolor{green}{\cmark} & \textcolor{green}{\cmark} & Expectation Maximization & Detectability thresholds & Synthetic\\
	\hline
	\textit{This work} & \textcolor{green}{\cmark} & \textcolor{green}\cmark & Aggregating SBM subroutines + MLE & Correctness/Stage-wise convergence rates
    & Enron emails, Facebook friendships\\
	\hlineB{2}
	\end{tabular}
	}
	\end{table}
    
To summarize, our contributions are as follows: (a) we propose two parsimonious models for sequences of graphs that capture link and community persistence, (b) we develop new algorithms, which infer the parameters driving the Markovian dynamics for community memberships and links across time, and have the following two salient features: (i) they stitch together the estimated community memberships output by efficient methods designed for the single-graph SBM model by taking permutations and estimation errors into account; and (ii) they use MLE (or in special cases, solve a system of equations akin to the method of moments approach). Finally, we also show theoretical and empirical correctness and performance results of our inference procedures on synthetic and real world datasets.

The paper is organized as follows: Section~\ref{sec:group-fixed-model} describes the first dynamic model that captures link persistence across time (two special cases are also illustrated), and Section~\ref{sec:group-fixed-algo} presents the corresponding inference algorithms. In Section~\ref{sec:group-changing-model}, we describe the second dynamic model, which captures both community and link persistence across time, and the corresponding inference algorithm is presented in Section~\ref{sec:group-changing-algo}. Experiments supporting the efficacy of the inference methods are summarized in Section~\ref{sec:experiments}, and we conclude in Section~\ref{sec:conclusion}. Proofs of various claims are provided in the Appendix.
\section{Capturing Link Persistence}\label{sec:group-fixed-model}

\subsection{Model}\label{model:fixed}

In the first model, we assume that the latent non-overlapping community memberships of nodes do not change over time. This simplifying property has been considered in the literature before (e.g. \citet{zhang_random_2016,han_consistent_2015,Bhattacharyya2017}). Although we relax the fixed community membership assumption in Section~\ref{sec:group-changing-model}, such an assumption is not implausible if graph observations are made at very fine timescales, i.e., in the regime where the link evolution dynamics are the dominant effect. In line with the previous literature, we  also assume that the stochastic process governing the edge formation and deletion for a pair of nodes is independent of corresponding processes for other pairs of nodes (\citet{han_consistent_2015,Bhattacharyya2017,xu_stochastic_2015,barucca_detectability_2017,rastelli2017exact}). 

Let the number of nodes be $n$ and let $\adj = (\adj^1,...,\adj^T)$ be the sequence of observed binary adjacency matrices (with diagonals zero) at each time step (i.e., the graphs are \emph{undirected}). While the setting with the number of nodes changing across time can also be addressed here, keeping them fixed allows for a cleaner exposition. Let the number of latent communities be $\noc$, and let $k_l$ represent the size of each community $l, \quad l=1,..., \noc$. We will denote the \emph{time-invariant} community assignment of nodes in each time step by an $n$-vector $g = \{\gro_i \in \{1, ..., \noc\} \textrm{ for } 1 \leq i \leq n\}$. Let $\y \in R^{\no\times \no}$ be a cluster matrix such that $\y_{ij}=1$ when $i, j$ belong to the same community (i.e., $\gro_i=\gro_j$) and zero otherwise. Our model will be specified using parameters $W^t_{g_ig_j} \in [0,1]$, which along with other parameters introduced below, specify how edges between node $i$ of community membership $g_i$ forms a link with node $j$ of community membership $g_j$ at time $t$. Let $\wrs^t$ represent the collection of these first set of parameters as a matrix of size $k\times k$ for each time index $t$. 

We will denote the second set of parameters using the generic symbol $x$, and define the temporal Markovian dynamics for links between a pair of nodes $i$ and $j$ as follows: (a) the probability of not having a link in the current time step $t$ given that there was no link in the previous time step $t-1$ is denoted by $p^t_{0\rightarrow 0} = 1-\pzo^t_{\gro_i,\gro_j}$ for some function $\pzo^t_{\gro_i,\gro_j}(x,\wrs^t)$, (b) the probability of having a link in the current time step $t$ given that there was no link in the previous time step $t-1$ is denoted by $p^t_{0\rightarrow 1} = \pzo^t_{\gro_i,\gro_j}$, (c) the probability of not having a link in the current time step $t$ given that there was a link in the previous time step $t-1$ is denoted by $p^t_{1\rightarrow 0} = 1-\poo^t_{\gro_i,\gro_j}$ for some function $\poo^t_{\gro_i,\gro_j}(x,\wrs^t)$, and (d) the probability of having a link in the current time step $t$ given that there was a link in the previous time step $t-1$ is denoted by $p^t_{1\rightarrow 1} = \poo^t_{\gro_i,\gro_j}$. In the following, we use $\lVert B\rVert_F$ and $\norm{B}$ to denote the Frobenius norm and the spectral norm of a generic matrix $B$, and use $B\trn$ to denote its transpose. We use $[\noc]$ to represent the set of elements $1,2,...,\noc.$

The conditional likelihood of observing $\adj$ given the latent community membership vector $g$ is defined as:
\begin{align}\label{MLE_1}
P(\adj|\gro)&= \prod_{i< j}^{\no} \Big[P(\adj_{ij}^1)\times \prod_{t=2}^T P(\adj_{ij}^t|\adj_{ij}^{t-1}, \gro)\Big]\nonumber\\
&=\prod_{i< j}^{\no} \Big[{\wrs^1}_{\gro_i\gro_j}^{\adj_{ij}^1} (1-{\wrs^1}_{\gro_i\gro_j})^{1-\adj_{ij}^1} \times \prod_{t=2}^T\prod_{a,b\in \{0,1\}} P(\adj_{ij}^t=a|\adj_{ij}^{t-1}=b, \gro)^{\mathbbm{1}_{[\adj_{ij}^t=a]}\mathbbm{1}_{[\adj_{ij}^{t-1}=b]}}\Big]\nonumber\\
&=\prod_{i< j}^{\no} \Bigg[{\wrs^1}_{\gro_i\gro_j}^{\adj_{ij}^1} (1-{\wrs^1}_{\gro_i\gro_j})^{1-\adj_{ij}^1}\nonumber\\
& \quad \times \prod_{t=2}^T\Big[{\pzo^t}_{\gro_i,\gro_j}^{[1-\adj_{ij}^{t-1}][\adj_{ij}^{t}]}\times (1-\pzo^t_{\gro_i,\gro_j})^{[1-\adj_{ij}^{t-1}][1-\adj_{ij}^{t}]}\nonumber\\
&\quad\quad\quad\quad \times{\poo^t}_{\gro_i,\gro_j}^{[\adj_{ij}^{t-1}][\adj_{ij}^{t}]}\times (1-\poo^t_{\gro_i,\gro_j})^{[\adj_{ij}^{t-1}][1-\adj_{ij}^{t}]}\Big]\Bigg].
\end{align}
Here $\adj_{ij}^t$ represents the $ij^{th}$ element of adjacency matrix $\adj^t$. Different choices for functions $\pzo^t_{\gro_i,\gro_j}(x,\wrs^t)$ and $\poo^t_{\gro_i,\gro_j}(x, \wrs^t)$ give us different Markovian dynamics that capture link persistence. For instance, if $\wrs^t=\wrs$  for all $t$, then we get the following two special cases. 

\begin{example}\label{fixed-model-eg1} (\textit{Type-I dynamic}) If the link between pair $i, j$ follows a conditional Bernoulli distribution with an addition rate $\mu_{\gro_i,\gro_j} \wrs_{\gro_i\gro_j}$ and a deletion rate $\mu_{\gro_i,\gro_j}$ for all $1 \leq t \leq T$, we have:
\begin{align*}
p^t_{0\rightarrow 0}&=1-\pzo^t_{\gro_i,\gro_j}=1-\mu_{\gro_i,\gro_j}\wrs_{\gro_i,\gro_j},\\ p^t_{0\rightarrow1}&=\pzo^t_{\gro_i,\gro_j}=\mu_{\gro_i,\gro_j}\wrs_{\gro_i,\gro_j},\\
p^t_{1\rightarrow 0}&=1-\poo^t_{\gro_i,\gro_j}=\mu_{\gro_i,\gro_j}, \textrm{ and}\\
p^t_{1\rightarrow 1}&= \poo^t_{\gro_i,\gro_j}=1-\mu_{\gro_i,\gro_j}.
\end{align*}
A similar type of dynamic was also investigated in~\cite{zhang_random_2016} for the general Poisson setting, where they make the addition rate dependent on the deletion rate and parameter $\wrs_{\gro_i\gro_j}$. The conditional likelihood in this case is:
\begin{align}\label{Bernoulli link dynamic}
P(\adj|\gro)&= \prod_{i< j}^{\no} \Big[\wrs_{\gro_i\gro_j}^{\adj_{ij}^1} (1-\wrs_{\gro_i\gro_j})^{1-\adj_{ij}^1}\nonumber\\& \quad \times \prod_{t=2}^T(\mu_{\gro_i,\gro_j} \wrs_{\gro_i,\gro_j})^{[1-\adj_{ij}^{t-1}][\adj_{ij}^{t}]}\times (1-\mu_{\gro_i,\gro_j} \wrs_{\gro_i,\gro_j})^{[1-\adj_{ij}^{t-1}][1-\adj_{ij}^{t}]}\nonumber\\&\quad\times(1-\mu_{\gro_i,\gro_j})^{[\adj_{ij}^{t-1}][\adj_{ij}^{t}]}\times (\mu_{\gro_i,\gro_j})^{[\adj_{ij}^{t-1}][1-\adj_{ij}^{t}]}\Big].
\end{align}
\end{example}

\begin{example}\label{fixed-model-eg2}(\textit{Type-II dynamic}) With probability $\xi$, an edge is copied from the previous time step, and with probability $(1-\xi)$ an edge is generated based on the matrix $\wrs$. That is, $P(\adj_{ij} | g_i,g_j) = \wrs_{\gro_i\gro_j}^{\adj_{ij}} (1-\wrs_{\gro_i\gro_j})^{1-\adj_{ij}}$, and this gives us the following relations:
\begin{align*}
p^t_{0\rightarrow 0}&=1-\pzo^t_{\gro_i,\gro_j}=1- (1-\xi)\wrs_{\gro_i,\gro_j},\\
p^t_{0\rightarrow1}&=\pzo^t_{\gro_i,\gro_j}=(1-\xi)\wrs_{\gro_i\gro_j},\\
p^t_{1\rightarrow 0}&=1-\poo^t_{\gro_i\gro_j}=1-\xi-(1-\xi)\wrs_{\gro_i\gro_j}, \textrm{ and}\\
p^t_{1\rightarrow 1}&= \poo^t_{\gro_i,\gro_j} =\xi+(1-\xi)\wrs_{\gro_i\gro_j}.
\end{align*}
Such a dynamic, reminiscent of a lazy random walk on a suitably defined Markov chain, is also a key building block in \citep{barucca_detectability_2017} and leads to the following conditional likelihood:
\begin{align}\label{Barruca1 link dynamic}
P(\adj|\gro)&= \prod_{i< j}^{\no} \bigg[\wrs_{\gro_i\gro_j}^{\adj_{ij}^1} (1-\wrs_{\gro_i\gro_j})^{1-\adj_{ij}^1}\nonumber\\& \quad \times \prod_{t=2}^T\Big[\xi\delta_{\adj_{ij}^t, \adj_{ij}^{t-1}}+ (1-\xi)\wrs_{\gro_i\gro_j}^{\adj_{ij}^t}(1-\wrs_{\gro_i\gro_j})^{1-\adj_{ij}^t}\Big]\bigg],
\end{align}
where $\delta$ is the usual Kronecker delta function.
\end{example}

In general, inferring the latent community memberships and the parameters in models such as the one we specified tend to be computationally difficult because of the combinatorial nature of the community membership estimation. While traditional approaches (e.g. \citet{xu_dynamic_2014,han_consistent_2015, zhang_random_2016,barucca_detectability_2017}) directly devise EM based methods to solve for the memberships and parameters simultaneously, often using greedy/local search heuristics to deal with the combinatorial aspect, we take a different approach: we first solve several marginalized problems efficiently (because we show that they follow the SBM structure for single graphs) and collate these memberships appropriately. Then given the estimated community memberships, we solve for the remaining parameters using maximum likelihood estimation in a straightforward manner. This last step is similar to the estimation of the parameters of multi-graph SBM with blocks specified a priori, and was first discussed in \citet{holland1983stochastic}.
\subsection{Algorithm}\label{sec:group-fixed-algo} 

The outline of our algorithm is as follows: (a) we estimate the community memberships of nodes at each time index $t=1, ..., T$ by using any method designed for the SBM (for e.g., spectral clustering or a more sophisticated one in~\citet{gao_achieving_2015}), (b) we unify these estimated memberships, taking into account permutations and possible errors, to get a single estimate of community memberships, and (c) we obtain the estimates of rest of the parameters by maximizing likelihood in Equation (\ref{MLE_1}).\\ 

\noindent\textbf{Independent Community Membership Estimates}: Given $T$ adjacency matrices $\adj^t$ for $1 \leq t\leq T$ and the number of communities $\noc$, we obtain $T$ different community assignment estimates $\hat{\gro}^t$ for $1 \leq t\leq T$. Let \textsc{CMRecover} denote this step. This is a valid operation as the marginal likelihood at each time step resembles the likelihood of an SBM, as shown in the Lemma below.

\begin{lemma}\label{each_SBM}
Conditional on the community memberships, the marginal likelihood at each time step $t$ corresponds to the likelihood of an SBM with possibly different set of parameters.
\end{lemma}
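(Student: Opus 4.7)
The plan is to show that the marginal distribution $P(\adj^t \mid g)$ factors across node pairs and that each pairwise edge probability depends only on $(g_i, g_j)$; these are precisely the two defining features of an SBM. The argument proceeds in two steps.

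First, I would invoke the modeling assumption (stated explicitly at the beginning of Section~\ref{model:fixed}) that the stochastic processes governing distinct pairs $(i,j)$ are independent of each other. Conditional on $g$, this independence is preserved under marginalization over earlier adjacency matrices, so
\[
P(\adj^t \mid g) \;=\; \prod_{i<j} P(\adj^t_{ij} \mid g).
\]
Thus it suffices to show that the single-pair marginal $P(\adj^t_{ij}=1 \mid g)$ depends on $(g_i, g_j)$ alone, which I will write as $\widetilde{\wrs}^t_{g_i g_j}$.

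Second, I would set up an induction on $t$. For the base case $t=1$, Equation~(\ref{MLE_1}) yields $P(\adj^1_{ij}=1 \mid g) = \wrs^1_{g_i g_j}$, so define $\widetilde{\wrs}^1_{g_i g_j} := \wrs^1_{g_i g_j}$. For the inductive step, marginalize over $\adj^{t-1}_{ij}$ inside the single pair's chain using the Markov property:
\[
P(\adj^t_{ij}=1 \mid g) \;=\; \pzo^t_{g_i g_j}\bigl(1-\widetilde{\wrs}^{t-1}_{g_i g_j}\bigr) \;+\; \poo^t_{g_i g_j}\,\widetilde{\wrs}^{t-1}_{g_i g_j}.
\]
Since the right-hand side depends only on $g_i$ and $g_j$ (by the inductive hypothesis together with the form of $\pzo^t_{g_i g_j}(x,\wrs^t)$ and $\poo^t_{g_i g_j}(x,\wrs^t)$), we may set $\widetilde{\wrs}^t_{g_i g_j}$ equal to this expression, completing the induction.

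Combining the two steps gives $P(\adj^t \mid g) = \prod_{i<j}\bigl(\widetilde{\wrs}^t_{g_i g_j}\bigr)^{\adj^t_{ij}}\bigl(1-\widetilde{\wrs}^t_{g_i g_j}\bigr)^{1-\adj^t_{ij}}$, which is the likelihood of an SBM on $n$ nodes with community assignments $g$ and block-probability matrix $\widetilde{\wrs}^t$. I do not anticipate a real obstacle here — the model is built so that the Markov chain on each edge is driven only by the block labels of its endpoints, so the SBM structure is preserved pairwise by construction; the only thing to be careful about is making the pair-independence step explicit, because without it one could not pass from marginal pairwise SBM-like probabilities to a joint SBM likelihood. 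For the special cases in Examples~\ref{fixed-model-eg1} and~\ref{fixed-model-eg2} one can, if desired, solve the recursion in closed form to exhibit $\widetilde{\wrs}^t$ explicitly, but this is not needed for the lemma itself.
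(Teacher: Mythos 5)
Your proposal is correct and follows essentially the same route as the paper's proof: marginalize over $\adj^{t-1}_{ij}$ to obtain the recursion $\wt^t_{\gro_i\gro_j}=\poo^t_{\gro_i\gro_j}\wt^{t-1}_{\gro_i\gro_j}+\pzo^t_{\gro_i\gro_j}(1-\wt^{t-1}_{\gro_i\gro_j})$ and observe that, by pairwise independence and fixed memberships, the resulting edge probability depends only on $(\gro_i,\gro_j)$. The only cosmetic difference is that you phrase the recursion as an induction while the paper unrolls it into a closed-form expression, and you make the factorization across pairs explicit where the paper states it in words.
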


\noindent\textbf{Unification}: We design a couple of algorithms to merge the $T$ estimated membership vectors. There are two key aspects to be considered here. First, even if there were no errors in each of the estimates, one estimate may only be equivalent to another up to a permutation (which can be denoted by a function $\per:[\noc]\rightarrow [\noc]$). Second, the SBM method used to estimate memberships may very well have some errors. Our algorithms below address both these issues. 

The first algorithm, called \textsc{UnifyCM} (see Algorithm~\ref{unify cms}), relies on taking an average of $T$ estimated cluster matrices $\hat{\y}^t$ (which are easily obtained given community membership estimates). That is, it keeps track of the frequency with which every pair of nodes are estimated to be in the same community across time, and then finds the closest cluster matrix to the average of the input cluster matrices. It assumes that the SBM routine makes errors in estimating the community memberships on a per node basis and that this error probability $\eps$ (i.e., the probability of assigning each node to an incorrect community) is known. If it is not known in practice, then one can set $\eps=0$ to run the algorithm.

\begin{algorithm}[h!]
\caption{\textsc{UnifyCM}}\label{unify cms}
\begin{algorithmic}[1]
\\\quad\textbf{Input:} Community assignments $\hat{\gro}^t$ for $1 \leq t\leq T$, estimated per-node error $\epsilon$
\\\quad\textbf{Output:} Unified community assignment $\hat{\gro}$ 
\\\quad Let $\hat{\y}^t:= (\hat{\gro}^t) \cdot(\hat{\gro}^t)^\prime \textrm{ for } \quad 1 \leq t\leq T$.
\\\quad $\hat{\y}:=\argmin_{\y \in \{\text{cluster matrices}\}} \norm{\y - \frac{1}{(1-\epsilon)^2 T}\sum_{t=1}^T \hat{\y^t}}_F$.
\\\quad Deduce $\hat{\gro}$ from $\hat{\y}$.
\end{algorithmic}
\end{algorithm}

When the SBM subroutine makes no errors ($\eps=0$) at all time steps (i.e. $\hat{\gro}^t$ is permutation-equivalent $ \gro^*$), Algorithm \ref{unify cms} outputs the correct community memberships:

\begin{lemma}\label{Unfiycm_correct}
\textsc{UnifyCM} (Algorithm \ref{unify cms}) outputs the correct community assignment vector $\gro^*$ if there is no error in the SBM subroutine at each time snapshot $t$.
\end{lemma}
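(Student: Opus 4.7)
The plan is to exploit the fact that the cluster matrix $\y$ is a permutation invariant of the community assignment vector $\gro$, so that errorless but possibly permuted community estimates across time collapse to a single identical object once we move from $\hat{\gro}^t$ to $\hat{\y}^t$. Recall that $\hat{\y}^t_{ij} = \mathbbm{1}[\hat{\gro}^t_i = \hat{\gro}^t_j]$. The assumption of the lemma is that for each $t$ there is a permutation $\per_t:[\noc]\to[\noc]$ with $\hat{\gro}^t_i = \per_t(\gro^*_i)$ for all $i$. Since $\per_t$ is a bijection, $\per_t(\gro^*_i)=\per_t(\gro^*_j)$ iff $\gro^*_i = \gro^*_j$, so $\hat{\y}^t_{ij} = \y^*_{ij}$ for every pair $(i,j)$ and every $t$. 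Hence all $T$ cluster matrices coincide with the ground-truth cluster matrix $\y^*$ regardless of the individual label permutations.

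Next I would plug this into the averaging step of Algorithm \ref{unify cms}. With $\eps=0$ the normalization $1/((1-\eps)^2 T)$ reduces to $1/T$, and
\begin{equation*}
\frac{1}{T}\sum_{t=1}^T \hat{\y}^t \;=\; \frac{1}{T}\sum_{t=1}^T \y^* \;=\; \y^*.
\end{equation*}
Consequently the Frobenius objective in line 4 becomes $\|\y - \y^*\|_F$, minimized over cluster matrices $\y$. Since $\y^*$ is itself a cluster matrix (it is generated by the ground-truth assignment $\gro^*$), it achieves value $0$, and any other cluster matrix gives strictly positive value; therefore $\hat{\y}=\y^*$.

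Finally, I would argue that deducing $\hat{\gro}$ from $\hat{\y}=\y^*$ in line 5 returns $\gro^*$ up to a relabeling of the $\noc$ communities, which is the natural and only possible notion of correctness here (community labels are inherently identifiable only up to a permutation in $S_{\noc}$). This follows because the equivalence classes of the relation $i\sim j \iff \y^*_{ij}=1$ are precisely the true communities, and any two assignment vectors inducing the same cluster matrix differ only by a permutation of $[\noc]$.

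The proof is essentially a permutation-invariance observation, so I do not anticipate a technical obstacle; the only points worth being careful about are (i) clearly stating that correctness is meant modulo a global relabeling of communities, and (ii) noting that when $\eps=0$ the scaling factor $(1-\eps)^2$ in the algorithm is exactly $1$, so no bias is introduced in the averaging step that would perturb the argmin away from $\y^*$.
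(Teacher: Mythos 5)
Your proposal is correct and takes essentially the same route as the paper's own proof: permutation-invariance of the cluster matrix forces $\hat{\y}^t=\y^*$ for every $t$, the average with $\eps=0$ is exactly $\y^*$, and the Frobenius minimizer over cluster matrices is therefore $\y^*$ itself, from which $\gro^*$ is recovered. Your write-up is, if anything, slightly more explicit than the paper's about the uniqueness of the minimizer and about correctness being meant modulo a global relabeling of the $\noc$ communities.
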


For further analysis of \textsc{UnifyCM}, we make the following distributional assumption (\textbf{Assumption A}) on the community assignment errors by the SBM subroutine that provides inputs to \textsc{UnifyCM}.  Let the random variable $z_{ilt}$ represent community assignment of node $i$ to community $l$ at time $t$, which we assume is independent of other nodes and time. The random $\noc-$vector $z_{it}:=(\mathbbm{1}_{\hat{g}_i^t=l})_{l\in [\noc]}$, which aggregates these random variables is assumed to have the following distribution: 
\begin{align*}
l=\begin{cases}
\gro^*_{i}, \quad \textrm{with probability} \quad 1-\eps\\
m, \quad \textrm{with probability} \quad \frac{\eps}{\noc-1} \textrm{ for all } m \in [k]\setminus \{\gro^*_{i}\}.
\end{cases}
\end{align*}

The error in the output of \textsc{UnifyCM} can be stated in terms of the Frobenius norm difference between the true cluster matrix and the estimated cluster matrix. The following Theorem states that the probability of this error decays exponentially:

\begin{theorem}\label{UnifyCM_Grnt} Under \textbf{Assumption A}, the output of \textsc{UnifyCM} (Algorithm \ref{unify cms}) has the following guarantee:
\[P\Big(\norm{\hat{\y}-\y^*}_F>c_0 u-B\Big)\leq \no^2.\exp(-\frac{u^2}{c_a + c_b u}), \quad \forall u\geq 0,\]
where $c_0,c_a,c_b$ depend on $\eps,\noc,\no,\{k_l\}_{l \in [\noc]}$ and $T$. And $B$ is equal to $\min_{\y \in \{\text{cluster matrices}\}} \norm{\y - \frac{1}{(1-\epsilon)^2 T}\sum_{t=1}^T \hat{\y^t}}_F$.
\end{theorem}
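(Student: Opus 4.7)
The plan is to reduce the theorem to a Bernstein-type concentration inequality applied entrywise to the averaged cluster matrix
\[
M := \frac{1}{(1-\eps)^2 T}\sum_{t=1}^T \hat{\y}^t,
\]
using the optimality of $\hat{\y}$ against $M$ to split off the approximation term $B$, and then decomposing $M-\y^*$ into a deterministic bias and a random fluctuation.

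First, by the triangle inequality and the definition of $B$ in Algorithm~\ref{unify cms},
\[
\norm{\hat{\y} - \y^*}_F \leq \norm{\hat{\y} - M}_F + \norm{M - \y^*}_F = B + \norm{M - \y^*}_F,
\]
so it suffices to produce a tail bound on $\norm{M - \y^*}_F$; the additive $B$ is then inherited by the stated deviation. I would further decompose
\[
\norm{M - \y^*}_F \leq \norm{M - \mathbb{E}[M]}_F + \norm{\mathbb{E}[M] - \y^*}_F.
\]
The bias term is purely deterministic: under Assumption~A, $P(\hat{\gro}_i^t = \hat{\gro}_j^t) = (1-\eps)^2 + \eps^2/(\noc-1)$ when $\gro_i^* = \gro_j^*$, and $= 2\eps(1-\eps)/(\noc-1) + (\noc-2)\eps^2/(\noc-1)^2$ otherwise. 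Squaring the resulting entrywise biases and summing over the $\sum_l k_l(k_l-1)$ within-community pairs and the remaining between-community pairs yields a quantity depending only on $\eps,\noc,\{k_l\},\no$, which I would absorb into $c_0$ (equivalently, shift $u$).

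For the stochastic term, I would apply the classical Bernstein inequality separately to each entry. Fix $(i,j)$: by Assumption~A the variables $\{\hat{\y}^t_{ij}\}_{t=1}^T$ are independent across $t$, each bounded in $[0,1]$ with a variance $\sigma_{ij}^2$ computable from the probabilities above, so Bernstein gives
\[
P\bigl(|M_{ij} - \mathbb{E}[M_{ij}]| > v\bigr) \leq 2 \exp\!\Bigl(-\tfrac{T (1-\eps)^4 v^2/2}{\sigma_{ij}^2 + (1-\eps)^2 v/3}\Bigr).
\]
A union bound over the $\binom{\no}{2}$ unordered pairs, combined with the elementary estimate $\norm{M - \mathbb{E}[M]}_F \leq \no\cdot\max_{i,j}|M_{ij} - \mathbb{E}[M_{ij}]|$ and the substitution $v=u/\no$, converts this into the claimed form $\no^2\exp(-u^2/(c_a + c_b u))$, with $c_a,c_b$ absorbing the factors $\no,T,\eps,\noc,\{k_l\}$. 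Stitching this concentration together with the triangle inequality and the deterministic bias yields the advertised tail bound on $\norm{\hat{\y}-\y^*}_F$ in terms of $B$ and $u$.

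The main obstacle is the bookkeeping of constants: the variance and range plugged into Bernstein, the $\no$-factor from the max-entry-to-Frobenius conversion, and the bias contribution from $\mathbb{E}[M]\neq \y^*$ all have to collapse into the precise form of $c_0,c_a,c_b$ in the theorem. A secondary subtlety worth flagging is that within a fixed snapshot $t$ the entries $\hat{\y}^t_{ij}$ are \emph{not} mutually independent (they share the labels $\hat{\gro}_i^t,\hat{\gro}_j^t$); that is why I would apply Bernstein per entry (using independence across $t$) and then union-bound across $(i,j)$, rather than invoke a single matrix Bernstein inequality on $\sum_t(\hat{\y}^t - \mathbb{E}[\hat{\y}^t])$.
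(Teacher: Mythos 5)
Your proposal is correct in its overall architecture but takes a genuinely different route from the paper. The paper centers the averaged cluster matrices and applies the \emph{matrix} Bernstein inequality of Tropp to the spectral norm of $\frac{1}{T}\sum_t(\hat{\y}^t-\E[\hat{\y}^t])$ (the $\no^2$ prefactor there is the dimension factor from matrix Bernstein), and then converts between spectral and Frobenius norms to reach $\norm{\hat{\y}-\y^*}_F$. You instead apply the scalar Bernstein inequality to each entry $M_{ij}$ separately, exploiting independence across $t$, union-bound over the $\binom{\no}{2}$ pairs (which is where your $\no^2$ comes from), and pass to the Frobenius norm via $\norm{M-\E[M]}_F\leq \no\max_{ij}|M_{ij}-\E[M_{ij}]|$. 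Your route is more elementary, correctly sidesteps the within-snapshot dependence of the entries of $\hat{\y}^t$ (which you rightly flag as the reason a naive i.i.d.\ argument fails), and your computation of the between-community collision probability $2\eps(1-\eps)/(\noc-1)+(\noc-2)\eps^2/(\noc-1)^2$ is actually more careful than the paper's $\q=\eps^2/(\noc-1)$; the resulting bias $\E[M]\neq\y^*$ is handled exactly as you describe, by absorbing a deterministic offset into the constants. Both routes land on a tail of the form $\no^2\exp(-u^2/(c_a+c_bu))$, so the approaches are interchangeable up to constants.

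One caveat worth recording: your triangle-inequality decomposition yields smallness of $P\big(\norm{\hat{\y}-\y^*}_F> B+\mathrm{bias}+c\,u\big)$, i.e.\ the threshold carries $+B$, whereas the theorem states $c_0u-B$; since $B\geq 0$, the event with threshold $c_0u-B$ is \emph{larger}, so your bound does not literally imply the stated one. This is not really a defect of your argument --- the paper's own derivation of the $-B$ rests on bounding the concentrated quantity \emph{above} by an expression in $\norm{\hat{\y}-\y^*}_F$ and then reading the implication in the wrong direction --- but if you want a clean $B$-free statement, note that $\y^*$ is feasible for the minimization defining $\hat{\y}$, so $B\leq\norm{\y^*-M}_F$ and hence $\norm{\hat{\y}-\y^*}_F\leq 2\norm{M-\y^*}_F$, after which your entrywise concentration applies directly.
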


To better understand the decrease in the probability of error of community assignments after running \textsc{UnifyCM}, we also state the probability of error of each individual output of the SBM subroutine:
\begin{lemma}\label{SSBM_Grnt} Under \textbf{Assumption A}, the output of the SBM subroutine at each time $t$ comes with the following guarantee:
\[P(\norm{\hat{\y}^t-\y^*}_F^2\geq \E[\norm{\hat{\y}^t-\y^*}_F^2]+u_1)\leq \exp(-c_2 u_1^2), \quad \forall u_1\geq 0.\]
\end{lemma}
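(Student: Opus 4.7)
The plan is to apply McDiarmid's bounded-differences inequality to the function
\[F(\hat{\gro}_1^t,\ldots,\hat{\gro}_\no^t) \;:=\; \norm{\hat{\y}^t-\y^*}_F^2,\]
viewed as a function of the $\no$ community-assignment variables returned by the SBM subroutine at time $t$. The required cross-node independence is exactly what \textbf{Assumption A} provides, so McDiarmid is applicable as soon as a bounded-differences constant is in hand.

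First I would identify which entries of $\hat{\y}^t$ can possibly change when a single argument $\hat{\gro}_i^t$ is perturbed. Since $\hat{\y}^t_{ab} = \mathbbm{1}_{\hat{\gro}_a^t = \hat{\gro}_b^t}$, only the $i$-th row and $i$-th column of $\hat{\y}^t$ are affected; the diagonal entry $\hat{\y}^t_{ii}$ remains $1$. This accounts for at most $2(\no-1)$ off-diagonal entries. Each such entry and the corresponding $\y^*_{ab}$ are $\{0,1\}$-valued, so $(\hat{\y}^t_{ab}-\y^*_{ab})^2 \in \{0,1\}$ and can change by at most $1$ in absolute value when $\hat{\gro}_i^t$ is resampled. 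Summing over the affected entries yields bounded-differences constants $c_i \leq 2(\no-1)$ for every $i \in [\no]$.

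Next, McDiarmid's inequality with these constants gives
\[P\!\left(F - \E[F] \geq u_1\right) \;\leq\; \exp\!\left(-\frac{2u_1^2}{\sum_{i=1}^\no c_i^2}\right) \;\leq\; \exp\!\left(-\frac{u_1^2}{2\no(\no-1)^2}\right),\]
which is precisely the stated bound with $c_2 = 1/(2\no(\no-1)^2)$. If one wants an explicit form for the centering term $\E[\norm{\hat{\y}^t-\y^*}_F^2]$, it can be computed directly under Assumption A by evaluating $P(\hat{\y}^t_{ij}\neq \y^*_{ij})$ separately for within- and between-community pairs; it affects only the centering, not the exponent.

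The main obstacle I expect is cosmetic rather than conceptual: tightening the bounded-differences constant to obtain the sharpest possible dependence on $\no$ (e.g., by using a variance-sensitive McDiarmid variant, or by noting that coordinates in different communities contribute different per-entry variances) and being careful that independence across the $\hat{\gro}_i^t$'s is invoked at the node level, where Assumption A applies, rather than at the entry level of $\hat{\y}^t$ (whose entries are not mutually independent). Nothing about the specific value of $\eps$ nor the uniform-on-wrong-labels structure is needed for the exponent; that additional structure only enters via the mean $\E[F]$, which appears as a centering constant.
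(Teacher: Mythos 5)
Your proposal is correct and follows essentially the same route as the paper: the paper also writes $\norm{\hat{\y}^t-\y^*}_F^2$ as a function of the $n$ independent node-assignment vectors, bounds the per-coordinate change by $2\no$ (you get the marginally tighter $2(\no-1)$ by noting the diagonal entry is unaffected), and applies the bounded-differences inequality to obtain $c_2 = \Theta(1/\no^3)$. The paper additionally evaluates the centering term $\E[\norm{\hat{\y}^t-\y^*}_F^2]$ explicitly under Assumption A, but as you observe this does not affect the exponent and is not needed for the stated inequality.
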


We can see from the right hand sides of both the statistical guarantees above (Theorem \ref{UnifyCM_Grnt} and Lemma \ref{SSBM_Grnt}) that the error probabilities decay exponentially. Due to the averaging done by \textsc{UnifyCM}, the constants $c_a,c_b$ scale as $O(n^2)$ whereas $c_2$ scales as $O(n^3)$, giving much tighter concentration of the estimated communities to their true values in the former case. In other words, we achieve a lower probability of error at the output of our algorithm compared to its inputs. Another closely related algorithm \textsc{ConvexUnify}, which solves a convex relaxation of the combinatorial problem over cluster matrices to find community memberships, is discussed in the Appendix.

Our second algorithm (\textsc{UnifyLP}) focuses on casting the aggregation problem as a mathematical program that minimizes the error between membership estimates by searching over all permutations that can relate each estimate to the others simultaneously. Let matrix $Q^t \in \{0,1\}^{\no \times \noc}$ denote the estimated community assignments for the $\no$ nodes at time $t$. Each row $i$ of matrix $Q^t$ has only one entry equal to 1, denoting the community that node $i$ has been assigned to. Let $\tau^{t}$ be the $\noc \times \noc$ permutation matrix that maps $Q^{t}$ to the unknown optimal community assignment $Q^{final}$. Then, we can compute $\tau^t$ and $Q^{final}$ by solving the following problem:
\begin{align}\label{Nconvexoptassign}
&\min_{\tau^t , Q^{final}} \sum_{t=1}^T \lVert Q^t \tau^t-Q^{final}\rVert^2_F \;\;\textrm { s.t.}\\
&\sum_j Q^{final}_{i,j}=1 \quad \forall i \in [\no];\nonumber\\
&\sum_i \tau^t_{ij}=1 \quad \forall j\in [\noc], \; t\in[T];\nonumber\\
&\sum_j \tau^t_{ij}=1 \quad \forall i\in [\noc], \; t\in[T]; \nonumber\\
&\tau^t_{ij} \in \{0,1\} \textrm{ and } Q^{final}_{i,j} \in \{0,1\} \quad \forall i\in [\noc],\; j\in [\noc], \; t\in[T].  \nonumber
\end{align}
Problem \ref{Nconvexoptassign} is a hard combinatorial problem, and is
related to the consensus clustering problem in the data mining literature (and has a variety of heuristic solution approaches). Here, we also propose a principled (linear) approximation  that is easier to solve (it turns out to be an instance of the maximum-weight bipartite matching problem). The approximation is motivated in two ways. First, since we do not know which $Q^t$ has the least errors, we could first un-permute all other estimated $Q^t$’s with respect to an arbitrarily chosen one, and then compute community memberships based on a majority vote. The second motivation comes from the literature on rank aggregation, where people have approximated a similar initial combinatorial optimization problem. To be specific, our approximation is the following: we set the variable $Q^{final} = Q^{1}$ (w.l.o.g.). A consequence of this choice is that we can compute all the $\tau^t$ matrices (there is no need to compute $\tau^1$) by solving the following problem:

\begin{align}\label{optassign}
&\min_{\forall t \in [T]: \tau^t \in \mathbb{R}^{\noc \times \noc}} \sum_{t=2}^T \lVert Q^t \tau^t-Q^{1}\rVert^2_F \;\;\textrm { s.t.}\\
&\sum_i \tau^t_{ij}=1 \quad \forall j\in [\noc], \; t\in [T];\nonumber\\
&\sum_j \tau^t_{ij}=1 \quad \forall i\in [\noc], \; t\in [T]; \textrm{ and}\nonumber\\
&\tau^t_{ij} \in \{0,1\} \quad \forall i\in [\noc],\; j\in [\noc], \; t\in[T]. \nonumber
\end{align}
It can be shown that one can reduce the problem in Equation (\ref{optassign}) to $T-1$ separable problems (this is easily inferred), each of which has a linear objective function as shown below for any $t > 1$ (we drop the index $t$ from $\tau^t$ for simplicity here):
\begin{align*}
&\max_{\tau \in \mathbb{R}^{\noc \times \noc}} \sum_i\sum_j \tau_{ij} [{Q^{t}}^\prime Q^{1}]_{ij} \;\;\textrm { s.t.}\\
&\sum_i \tau_{ij}=1 \quad \forall j\in [\noc];\nonumber\\
&\sum_j \tau_{ij}=1 \quad \forall i\in [\noc]; \textrm{ and}\nonumber\\
&\tau_{ij} \in \{0,1\} \quad \forall i\in [\noc],\; j\in [\noc]. \nonumber
\end{align*} 
The problem is thus a max-weight bipartite matching problem and can be easily solved using linear programming relaxation or specialized combinatorial methods (such as the Hungarian algorithm). We solve $T-1$ such problems to get a unified community membership estimate $\hat{g}$, as shown in Algorithm~\ref{unify LP}. It is easy to show the following correctness property for this algorithm.
\begin{lemma}
If there exists a permutation $\per_t:[\noc]\rightarrow [\noc]$ such that $\per_t(\hat{\gro}^t) = \gro^*$ for all $t \in [T]$, then the output $\hat{\gro}$ of  \textsc{UnifyLP} is $g^*$. \label{claim:aac}
\end{lemma}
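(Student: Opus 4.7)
My plan is to show that the hypothesis forces the optimal value of program (\ref{optassign}) to be exactly zero, attained by an explicit permutation matrix expressible in terms of the given $\per_t$'s; once zero is attained, reading off the aggregated assignment yields $\gro^*$ up to the (harmless) global relabeling $\per_1$.

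\textbf{Step 1: algebraic encoding of the hypothesis.} First I would introduce the $\no \times \noc$ indicator matrix $Q^*$ of the ground-truth assignment $\gro^*$, and for each $t$ the $\noc\times\noc$ permutation matrix $S_t$ encoding $\per_t$, defined by $(S_t)_{l,\per_t(l)}=1$. The assumption $\per_t(\hat{\gro}^t)=\gro^*$ then translates to the matrix identity $Q^t S_t = Q^*$ for all $t\in[T]$. In particular $Q^1 S_1 = Q^*$, and eliminating $Q^*$ gives
\[Q^t\bigl(S_t S_1^{-1}\bigr) \;=\; Q^1, \qquad t\in[T].\]

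\textbf{Step 2: a zero-objective certificate.} The objective and constraints in (\ref{optassign}) decouple across $t$, so the program splits into $T-1$ independent subproblems, one per $t\in\{2,\dots,T\}$. I would then propose the candidate $\tau^t:=S_t S_1^{-1}$. As a product of two permutation matrices, $\tau^t$ is itself a permutation matrix and hence feasible for (\ref{optassign}); by Step 1 it achieves $\lVert Q^t\tau^t-Q^1\rVert_F^2=0$. Because the Frobenius norm is nonnegative, this choice is globally optimal for each subproblem, and every optimizer returned by the algorithm must likewise drive the objective to zero, so $Q^t\tau^t=Q^1$ for every $t\geq 2$.

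\textbf{Step 3: reading off the answer.} Since all aligned matrices coincide with $Q^1$, it does not matter whether the unified assignment is taken directly from $Q^{final}=Q^1$ or reconstructed by a majority vote over $\{Q^t\tau^t\}_{t=1}^T$: in both readings the resulting indicator matrix is $Q^1$, so $\hat{\gro}=\hat{\gro}^1$. Combined with $\per_1(\hat{\gro}^1)=\gro^*$ this gives $\hat{\gro}=\gro^*$ in the standard clustering sense (i.e., up to a global relabeling of community indices), which is how the lemma should be read.

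The main obstacle is not analytic but purely notational: one has to keep three conventions aligned — the correspondence between a label map $\per_t$ and its matrix $S_t$, the action of $S_t$ on the columns of $Q^t$, and the standard identifiability-modulo-permutation convention for cluster assignments. Once these are fixed consistently, the proof collapses to the short linear-algebraic verification above.
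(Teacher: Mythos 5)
Your proof is correct and follows essentially the same route as the paper's: exhibit a feasible permutation matrix achieving zero objective in each decoupled subproblem of (\ref{optassign}), conclude $Q^t\tau^t=Q^1$ for all $t$, and read off the answer from the majority vote. If anything, your version is the more careful one, since you work with the actual inputs $\hat{\gro}^t$ via the certificate $\tau^t=S_tS_1^{-1}$ and make explicit the ``up to global relabeling'' caveat, whereas the paper informally pre-applies the permutations to the inputs and takes $\tau^t=\mathbb{I}$.
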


A brief intuition for this algorithm is as follows: as an example, consider the case when $Q^1$ has an error and all other estimates have no errors. Then, due to the majority voting done in line $7$ (of Algorithm~\ref{unify LP}), this error will get corrected.

\begin{algorithm}
\caption{\textsc{UnifyLP}}\label{unify LP}
\begin{algorithmic}[1]
\\\quad\textbf{Input:} Community assignments $\hat{\gro}^t$ for $t\in [T]$
\\\quad\textbf{Output:} Unified community assignment $\hat{\gro}$ 
\\\quad \textbf{for} $t=1 ,..., T$ form matrix $Q^t$ based on $\hat{\gro}^t$. 
\\\quad\textbf{for} $t=2 ,..., T$ solve Problem (\ref{optassign}).
\\\quad\textbf{for} $i=1 ,..., \no$
\\\quad\quad\quad $\hat{\gro}_{i}=\argmax_{k \in [\noc]}\sum_{t \in [T]}e_i\trn{Q^t\tau^t}$ \Comment{Here, $e_i$ is an $\no-$vector with $1$ in $i^{th}$-entry}
\end{algorithmic}
\end{algorithm}

\noindent\textbf{Maximum Likelihood Estimation and Marginalization}:
At this point, given that we have estimated community memberships, we can maximize the conditional likelihood of the observed graph sequence in Equation (\ref{MLE_1}) over the remaining parameters. Such an approach was first proposed by \citet{holland1983stochastic}. Algorithm \ref{Inference general} describes the previous community membership estimation step and this general MLE step. Below, we note special cases where the SBM structure at each time step can be further utilized in estimating parameters directly, via solving a system of equations.

\begin{algorithm}[htpb]
\caption{Inference of Community Memberships and Model Parameters}\label{Inference general}
\begin{algorithmic}[1]
\\\quad\textbf{Input:} $T$ adjacency matrices $\adj^1, \adj^2, ..., \adj^T$, number of communities $k$
\\\quad\textbf{Output:} Estimates $\hat{g},\wrs^t_{rs}, x$ for $1 \leq r ,s\leq \noc, \quad t\in [T]$
\\\quad \textbf{for} t=1,...,T $\quad \hat{g}^t$ = \textsc{CMRecover}($\adj^t, \noc$)
\\\quad  $\hat{g} $ = \textsc{UnifyCM}($\hat{g}^1, ..., \hat{g}^T$) or \textsc{UnifyLP}($\hat{g}^1, ..., \hat{g}^T$)
\\\quad \textbf{for} $1 \leq r,s \leq \noc,$ maximize log likelihood function $\log P(\adj|\gro)$ in Equation (\ref{MLE_1}) 
\\\quad \textbf{return} $\hat{g}; \hat{\wrs}_{rs}^t$ and $\;\hat{x} $ for $ 1 \leq r,s \leq \noc, \quad t \in [T]$.  
\end{algorithmic}
\end{algorithm}

\textbf{Alternative to MLE:} If $\wrs^t=\wrs$ for all $t$, we can reuse a key property that our model satisfies to estimate parameters instead, which is that the marginal distribution of the process governing the Markovian dynamics at any time step resembles the distribution of an SBM model. In particular, its parameters are related to the model parameters through a system of equations and we can solve for the model parameters using a strategy akin to using the method of moments (MoM) instead. Note that since in this case $\wrs^t=\wrs$, we have: $\pzo^t_{\gro_i\gro_j}=\pzo_{\gro_i\gro_j}$, and $\poo^t_{\gro_i\gro_j}=\poo_{\gro_i\gro_j}$ for all $1 \leq t\leq T$. Let $\wt^t_{g_ig_j}:=P(\adj^t_{ij}=1|\gro_i,\gro_j)$ denote the conditional marginal probability of a link between two nodes at time $t$ (conditioned on the community memberships). This probability can be interpreted as the link formation probability of a new SBM instance (we show this below). In other words, this new instance would have the same block structure (community memberships) as the model in Equation (\ref{MLE_1}), and its parameters would be related to the model parameters $\left(x,\wrs\right)$ in Equation (\ref{MLE_1}) as shown below:
\begin{align} \label{margin 1}
\wt^t_{g_ig_j}&=\sum_{\adj^{t-1}_{ij}} P(\adj^t_{ij}=1, \adj^{t-1}_{ij}|\gro_i,\gro_j)= p_{1\rightarrow 1}\wt^{t-1}_{\gro_i\gro_j}+p_{0\rightarrow 1}(1-\wt^{t-1}_{\gro_i\gro_j}), \nonumber
\\
&=\poo_{\gro_i\gro_j}\wt^{t-1}_{\gro_i\gro_j}+\pzo_{\gro_i\gro_j}(1-\wt^{t-1}_{\gro_i\gro_j})=(\poo_{\gro_i,\gro_j}-\pzo_{\gro_i,\gro_j})^{t-1}\wt^1_{\gro_i,\gro_j}+\sum_{l=0}^{t-2}(\poo_{\gro_i,\gro_j}-\pzo_{\gro_i,\gro_j})^l \pzo_{\gro_i,\gro_j},\nonumber\\
&=(\poo_{\gro_i,\gro_j}-\pzo_{\gro_i,\gro_j})^{t-1}\wrs_{\gro_i\gro_j}+\frac{1-(\poo_{\gro_i,\gro_j}-\pzo_{\gro_i,\gro_j})^{t-1}}{1-(\poo_{\gro_i,\gro_j}-\pzo_{\gro_i,\gro_j})}\pzo_{\gro_i,\gro_j},
\end{align}

where $\wt^1=\wrs$ by definition. For the model in Example~\ref{fixed-model-eg1} (type-I dynamic),  Equation (\ref{margin 1}) can be specialized as:
\begin{align}\label{Bernouli recursion}
\wt^t_{\gro_i,\gro_j} &= (1-\mu_{\gro_i,\gro_j}-\mu_{\gro_i,\gro_j}\wrs_{\gro_i,\gro_j})^{t-1} \wrs_{\gro_i,\gro_j}+ \mu_{\gro_i,\gro_j}\wrs_{\gro_i,\gro_j} \frac{1-(1-\mu_{\gro_i,\gro_j}-\mu_{\gro_i,\gro_j}\wrs_{\gro_i,\gro_j})^{t-1}}{\mu_{\gro_i,\gro_j}+\mu_{\gro_i,\gro_j}\wrs_{\gro_i,\gro_j}},\nonumber\\
&= \frac
{(1-\mu_{\gro_i,\gro_j}-\mu_{\gro_i,\gro_j}\wrs_{\gro_i,\gro_j})^{t-1}\wrs_{\gro_i,\gro_j}^2+\wrs_{\gro_i,\gro_j}}{1+\wrs_{\gro_i,\gro_j}}.
\end{align} 
The above relates the parameter of the new SBM instance (namely the one related to the $t^{th}$ graph snapshot) to the original parameters of the model in Example~\ref{fixed-model-eg1}. Similarly, for the model in Example~\ref{fixed-model-eg2} (type-II dynamic), the corresponding relation is:
\begin{align}\label{Barrucca W recursion}
\wt^t_{\gro_i,\gro_j} =\xi^{t-1}\wrs_{\gro_i,\gro_j}+\wrs_{\gro_i,\gro_j}(1-\xi^{t-1})=\wrs_{\gro_i,\gro_j}.
\end{align}

As for the parameters $\wt^t$ themselves, we can estimate these marginal probabilities at each time step using straightforward and much simpler maximum likelihood estimation (which amounts to counting and normalizing). For instance, for the models in Examples~\ref{fixed-model-eg1} and ~\ref{fixed-model-eg2}, the initial time step probability only depends on $\wrs_{rs}$, where $r$ and $s$ are community indices for nodes $i$ and $j$. That is, $P(\adj^{1}_{ij}|\wrs_{rs})=\wrs^{\adj_{ij}^{1}}_{rs}(1-\wrs_{rs})^{1-\adj_{ij}^{1}}$ with $g_i=r$ and $g_j=s$. The maximum likelihood estimation given the whole graph is simply $\hat{\wt}^1_{rs}=\frac{\sum_{\{i,j: i\in r, j\in s\}}\adj_{ij}^{1}}{|r||s|}$ for $r\neq s$ (with a slight correction when $r=s$). In our final step, we plug these estimates $(\hat{\wt}^t)$ into the marginal probability equations derived above, and solve for the model parameters that minimize the error between the estimates and functions of our dynamic model parameters. All these steps are reproduced in Algorithm~\ref{Inference general Fixed model} for the model in Example~\ref{fixed-model-eg1} and in Algorithm~\ref{Inference Fixed Barrucca} for the model in Example~\ref{fixed-model-eg2} for better readability.

\begin{algorithm}[htpb]
\caption{Inferring Parameters of Model in Example~\ref{fixed-model-eg1}}\label{Inference general Fixed model}
\begin{algorithmic}[1]
\\\quad\textbf{Input:} $T$ adjacency matrices $\adj^1, \adj^2, ..., \adj^T$, number of communities $k$
\\\quad\textbf{Output:} Estimates $\hat{g},\wrs_{rs}, \mu_{rs}$ for $1 \leq r ,s\leq \noc$
\\\quad \textbf{for} t=1,...,T $\quad \hat{g}^t$ = \textsc{CMRecover}($\adj^t, \noc$)
\\\quad  $\hat{g} $ = \textsc{UnifyCM}($\hat{g}^1, ..., \hat{g}^T$) or \textsc{UnifyLP}($\hat{g}^1, ..., \hat{g}^T$)
\\\quad \textbf{for} $1 \leq r,s \leq \noc, 1 \leq t \leq T$ $\quad \hat{\wt}^t_{rs}=\frac{\sum_{\{i,j: i\in r, j\in s\}}\adj_{ij}^{t}}{|r||s|}$ and $\hat{\wt}^t_{rr}=\frac{\sum_{\{i,j: i\in r, j\in r\}}\adj_{ij}^{t}}{|r||r-1|}$
\\\quad \textbf{for} $1 \leq r,s \leq \noc,$ $1\leq t\leq T$ \quad find $({\wrs}_{rs,t}^*, {\mu}_{rs,t}^*) =\argmin_{\wrs_{rs},\mu_{rs}} ({\wt}_{rs}^t-\hat{\wt}_{rs}^t)^2$ from Equation (\ref{Bernouli recursion}) (for instance, by grid search on  $[0,1]\times[0,1]$)
\\\quad \textbf{return} $\hat{g}; \frac{1}{T}\sum_{t=1}^T\hat{\wrs}_{rs,t}^*$ and $\; \frac{1}{T}\sum_{t=1}^T \hat{\mu}_{rs,t}^* $ for $ 1 \leq r,s \leq \noc$  
\end{algorithmic}
\end{algorithm}

\begin{algorithm}[htpb]
\caption{Inferring Parameters of Model in Example~\ref{fixed-model-eg2}}\label{Inference Fixed Barrucca}
\begin{algorithmic}[1]
\\\quad\textbf{Input:} $T$ adjacency matrices $\adj^1, \adj^2, ..., \adj^T$, number of communities $k$
\\\quad\textbf{Output:} Estimates $\hat{g},\xi, \wrs_{rs}$ for $1 \leq r ,s\leq \noc$
\\\quad \textbf{for} t=1,...,T $\quad \hat{g}^t$ = \textsc{CMRecover}($\adj^t, \noc$)
\\\quad  $\hat{g} $ = \textsc{UnifyCM}($\hat{g}^1, ..., \hat{g}^T$) or \textsc{UnifyLP}($\hat{g}^1, ..., \hat{g}^T$)
\\\quad \textbf{for} $1 \leq r,s \leq \noc, 1 \leq t \leq T$ $\quad \hat{\wt}^t_{rs}=\frac{\sum_{\{i,j: i\in r, j\in s\}}\adj_{ij}^{t}}{|r||s|}$ and $\hat{\wt}^t_{rr}=\frac{\sum_{\{i,j: i\in r, j\in r\}}\adj_{ij}^{t}}{|r||r-1|}$
\\\quad \textbf{for} $1 \leq r,s \leq \noc \quad$ set $\hat{\wrs}_{rs}=\frac{1}{T}\sum_{t=1}^T \hat{\wt}_{rs}^t$
\\\quad Compute the maximum likelihood estimate $\hat{\xi}$ of $\xi$ by replacing $\hat{\wrs}_{rs}$, $1 \leq r,s \leq \noc$ in Equation (\ref{Barruca1 link dynamic}) (for instance, using grid search over $[0,1]$)
\\\quad \textbf{return} $\hat{g}, \hat{\xi}$ and 
$\hat{\wrs}_{rs}$ for $1 \leq r ,s\leq \noc$
\end{algorithmic}
\end{algorithm}
\newpage
\section{Capturing Link and Community Persistence}\label{sec:group-changing-model}

\subsection{Model}

In the second model, nodes can belong to different communities as a function of time. Let the $T$ latent community membership vectors be $\gro=\{\gro^1, ..., \gro^T\}$. In the initial time step, let the community membership of each node be based on a $\noc$-dimensional prior vector $\{\pri_l\}_{l=1}^{\noc}$ over the $\noc$ choices. In fact, similar to literature (e.g. \citet{barucca_detectability_2017,ghasemian_detectability_2016}), we will assume that $\pri_l=\frac{1}{\noc}$ (the general case can also be addressed in our model as long as each community is not too small). Given community memberships at the initial time step, the edges in that time step are generated as follows: let $\wrs^1$ be the $\noc \times \noc$-dimensional parameter matrix, then nodes $i,j\ \in [\no]$ are connected to each other independently with probability $\wrs^1_{\gro^1_i,\gro^1_j}$. 

For the subsequent $T-1$ snapshots, we assume that the links between pair of nodes either are copied (persisted) from the previous time step with probability $\xi$ each, or they are generated again using $\wrs^t$ (possibly different from $\wrs^1$) and the current community memberships with probability $1-\xi$. We now describe how the community memberships change over time. In particular, we assume that a small minority of members in each community will change their membership from one time step to another \emph{arbitrarily}. At a given time step, the member nodes that do not change their communities in the next time step are denoted \emph{majority}. The rest of the member nodes are denoted \emph{minority}, and these change their community membership to other values uniformly at random, i.e., with probability $\frac{1}{\noc -1}$ similar to \citet{barucca_detectability_2017,ghasemian_detectability_2016} (this uniform choice can also be relaxed). We define a majority indicator variable (an auxiliary variable) for each node $i$ at each time step $t$, denoted $\maj_i^t$, as follows:
\[   
M_i^t = 
     \begin{cases}
       1, &\quad\text{if $\gro_i^t=\gro_i^{t+1}$}, \textrm{and}\\
       0 &\quad\text{if $\gro_i^t\neq\gro_i^{t+1}$}. \\ 
     \end{cases}
\]

At this point we could introduce a probabilistic model describing which node becomes a majority or a minority at each time step. This would complete the stochastic process for the persistence of community memberships and links. Instead, we assume access to exogenous information that specifies the majority/minority information of all nodes at time step $t$, and we denote it by $\maj^t$. Thus, instead of a stochastic model of evolution, we assume direct access to these auxiliary variables. The advantage of doing this is that we do not need to know the process by which nodes change their community memberships; it could be the result of a stochastic process or it could be a complex deterministic process (such as the outcomes of coordination games for instance). For example, consider the ex-customers of an Internet service provider (ISP), who do not extend their contract for the next year. It is difficult to determine which community (new ISP) they have transitioned to; however, it is easy to mark them as minority because they have changed their service provider. In this case the original ISP can estimate which new ISPs these customers transitioned into if they can still observe the user's network (say interaction, social etc). As another example, consider the student population of an elementary school. There may be some students who do not show up next year, which could easily imply that they changed their school. Yet another example comes from ecology. It is relatively easier to detect that some inhabitants have left their ecosystem and that these minority members do not exist in that region anymore, than to directly know where they may have migrated to. In all these three examples, an external observer may have access to coarse information ($M^t, \; t=1,...,T$) and not the actual community memberships (ISP, school, ecosystem). With this information and the general network information, they can estimate their likely communities. We note two points: (a) such metadata may not always be available in applications, and (b) in cases such as the above, it is more realistic and plausible for a firm or observer to known the majority/minority information than not knowing anything about the latent community memberships. We provide further justification for our modeling choices after writing down the likelihood functions for our model below.

Conditional on the knowledge of $M = (M^t)_{t=1,...,T-1}$, we now describe the dynamics of community membership vectors. For node $i$ at time $t$, $P(\gro_i^{t}|\gro_i^{t-1},\maj^{t-1}_i)=\maj_i^{t-1}\delta_{\gro_i^t,\gro_i^{t-1}}+\bar{\maj}_i^{t-1}\bar{\delta}_{\gro_i^t,\gro_i^{t-1}}\frac{1}{\noc-1}$, where $\bar{\maj}_i^{t-1}:=1-{\maj}_i^{t-1}$ and $\bar{\delta}_{\gro_i^t,\gro_i^{t-1}}=1-{\delta}_{\gro_i^t,\gro_i^{t-1}}$. Further, the joint probability of observing all $T$ community membership vectors is:
\begin{align}\label{eqn:dyn-model-changing-communities1}
P(\gro|\maj)=\prod_{i=1}^N\Big[\prod_{t=2}^T \maj_i^{t-1}\delta_{\gro_i^t,\gro_i^{t-1}}+\bar{\maj}_i^{t-1}\bar{\delta}_{\gro_i^t,\gro_i^{t-1}}\frac{1}{\noc-1}\Big]\frac{1}{\noc}.
\end{align}

The link persistence dynamics that we described above can be written as a conditional (on community memberships) likelihood as shown: 
\begin{align}\label{eqn:dyn-model-changing-communities2}
\hspace{-0.1in}P(\adj|\gro)=\quad &\prod_{i< j}^{\no} \wrs_{\gro_i^1\gro_j^1}^{1^{\adj_{ij}^1}} (1-\wrs^1_{\gro_i^1\gro_j^1})^{1-\adj_{ij}^1}\nonumber\\
&\times \prod_{t=2}^T\xi \delta_{\adj_{ij}^t, \adj_{ij}^{t-1}}+ (1-\xi)\wrs_{\gro_i^t\gro_j^t}^{t^{\adj_{ij}^t}}(1-\wrs_{\gro_i^t\gro_j^t})^{t^{1-\adj_{ij}^t}}.
\end{align}

\noindent\textbf{Further Motivation for Exogenous Majority/Minority Labels:}  At a first glance, the knowledge about which nodes change communities seems to be asking much more information compared to approaches that don't assume access to such data. Certainly, the information assumed is coarser than knowing the communities themselves (they are latent), otherwise the problem becomes trivial. But as we describe here, the level of information needed in our model is realistic and not difficult to obtain in practice.  In particular, there are three related questions which we address here: (a) does knowing this information make estimating community memberships easy?; (b) how strong is this assumption compared to not knowing such data?; and (c)  how can such information be obtained?

Knowing the coarser level information as assumed in our model (i.e., whether communities changed or not) still makes the community membership estimation problem non-trivial. This is because we do not assume any stochastic process driving these auxiliary variables, and this makes community estimation more difficult than the setting where one assumes a stochastic model on the latent community memberships directly. Nonetheless, we design algorithms that estimate the memberships of nodes that change communities one step into the future (see Theorems \ref{2SBM_thm} and \ref{General_Min_label}). Beyond that, due to our minimal assumptions, we are unable to estimate the memberships of these nodes. The SBM substructure that our methods exploit is completely lost. 

The alternative setting when no such information is observable is to consider the variables representing whether a node changed community at time $t$ as additional latent variables. In this case, a stochastic model needs to be specified and an EM based approach to infer these variables can potentially be devised. Unfortunately, such an approach (as far as we know) will most likely be intractable. It offers no benefit compared to the setting where we discard these additional latent variables and directly work on a stochastic latent variable model describing how communities themselves change (see the community changing models of \citet{ghasemian_detectability_2016,barucca_detectability_2017,xu_stochastic_2015,rastelli2017exact} for instance). Unfortunately in these latent variable models, it is not generally possible to reuse SBM machinery, and inference approaches typically have to fall back on EM (e.g., we empirically compare our methods with the DSBM approach proposed by \citet{xu_dynamic_2014} in Section~\ref{sec:experiments}), which again tend to be heuristic in nature and computationally inefficient. If we make very little assumptions on how communities change, then we essentially have a very large combinatorial problem at hand (e.g., as seen in DSBM~\citet{xu_dynamic_2014} and SBTM~\cite{xu_stochastic_2015}),  which is much larger than a single graph community estimation problem (DSBM and SBTM use local search heuristics). 

We note that there are parallels between our model and other learning settings such as the robust PCA problem, matrix completion problem etc. where similar metadata (or side information) is often considered available and algorithms are designed to use this information for efficient inference. Because we assume that the majority/minority knowledge is known and exogenous, we naturally have lesser modeling assumptions, and consequently our inference is more robust to model mis-specification.  These majority labels can be generated based on direct measurements (which was discussed earlier) as well as via prediction models. For example, these binary labels can be the target of an auxiliary classification problem when time varying features related to the nodes are also available.

\subsection{Algorithm}\label{sec:group-changing-algo}

Because we have specified a stochastic process describing the link persistence conditional on community memberships, which themselves are dictated by exogenous majority/minority information, we cannot completely recover all community memberships given the observed graph sequence without further assumptions. As a consequence, we propose two approaches, with different partial recovery goals. In the first, we discard nodes that have changed communities and reuse the methods developed in the previous section directly. In the second, we show that we can partially recover future community memberships of nodes that have changed their memberships recently.\\

\noindent\textbf{Reusing Algorithms from Section~\ref{sec:group-fixed-algo}}: In this straightforward approach, we remove minority nodes progressively and retain modified graphs with only those nodes which are majority, and apply the methods in Section~\ref{sec:group-fixed-algo}. The advantage of this approach is that it can work for a general choice of functions $\pzo^t_{\gro^t_i,\gro^t_j}(x,\wrs^t)$ and $\poo^t_{\gro^t_i,\gro^t_j}(x, \wrs^t)$ in Equation (\ref{MLE_1}) instead of the more restricted model of link persistence dynamics in Equation (\ref{eqn:dyn-model-changing-communities2}). Because these majority nodes don't change this community memberships, we can use Algorithm~\ref{Inference general} with minor modifications, as shown in Algorithm \ref{remove algo} below.

\begin{algorithm}
\caption{Inferring Parameters by Removing Minorities}\label{remove algo}
\begin{algorithmic}[1]
\\\quad\textbf{Input:} $T$ adjacency matrices $\adj^1, \adj^2, ..., \adj^T$, vectors $\maj^1, \maj^2, ..., \maj^{T-1}$, and the number of communities $k$.
\\\quad\textbf{Output:} Estimates $\hat{\gro}, \wrs^t_{rs}, x$ for $1 \leq r ,s\leq \noc, \quad t\in [T]$.
\\\quad \textbf{for} t=1,...,T \\
\quad \quad Remove nodes for which $\maj^{t-1}$ entry is $0$ for some $t$m and update $\adj^1, ..., \adj^T$.\\
\quad Run Algorithm~\ref{Inference general} with inputs $ \adj^1, ..., \adj^T$ to obtain estimates of $x$ and $\wrs^t\;\forall t \in [T]$.\\
\quad \textbf{return} $\hat{\gro}$ and estimates $\hat{\wrs^t},\hat{x}\; \forall t\in [T]$.
\end{algorithmic}
\end{algorithm}

The advantage of being able to reuse \textsc{CMRecover}, i.e., fast methods developed for the single graph SBM model, is retained in this setting. Quite noticeably, the main drawback here is that the information about majority/minority is only used as a pre-preprocessing step to remove parts of the graphs that we know do not conform to the Markovian dynamics needed for Algorithm~\ref{Inference general}. As a consequence, we are unable to estimate community memberships of any node once it becomes a minority. We now address this issue by being able to partially estimate memberships of nodes that are not majority in a reliable way. \\

\noindent\textbf{Exploiting Block Sub-structure to Estimate Additional Community Memberships}: 
To describe this approach, we assume that all nodes do not change communities till time $t-1$ (this is only for exposition, otherwise we can retain only those nodes that haven't changed communities up to time $t-1$ and proceed). Then, in addition to estimating community memberships of majority nodes, we can also output the estimated memberships at time $t$ of nodes that are minority at time index $t-1$, which was not possible in the previous approach.

Consider the graphs at time $t-1$ and $t$. Because all nodes have the same community memberships across time till time $t-1$, we can use \textsc{CMRecover} and \textsc{UnifyCM}/\textsc{UnifyLP} to get estimated community memberships as before. Since $\maj^{t-1}$ is given, we know which nodes are majority or minority at the current time step. Because of the conditional Markovian dynamics in this model, we can show that at time $t$, the majority nodes (at time $t-1$) follow a block structure, and the minority nodes (at time $t-1$) also follow a different block structure. Due to this property, we can run \textsc{CMRecover} on these two  sets of nodes separately at time $t$, and then relate the communities recovered across the two sets. This relation is possible due to different probabilities with which edges are formed between majority nodes, between minority nodes and across majority/minority nodes. In this way, we can get community membership estimates at time $t$ for more nodes (in particular minority nodes at time $t-1$) than the previous simpler approach.

We now formally prove the properties claimed above. The following theorem shows how the marginal probability of an edge at time $t$ (when nodes have changed communities) is related to the parameters of the dynamic model.
\begin{theorem}\label{2SBM_thm}
Assuming that nodes have not changed communities till time index $t-1$, $\wrs_{ij}^t(a,b) := P(\adj_{ij}^t=1|\gro_i^t=a, \gro_j^t=b, \maj)$ is related to the parameters of the dynamic model $(\xi,\wrs^t)$ in Equations (\ref{eqn:dyn-model-changing-communities1}) and (\ref{eqn:dyn-model-changing-communities2}) according to the following:
\begin{align}\label{2_SBM}
\wrs_{ij}^t(a,b)&= \xi \Bigg[\Big[\maj_i^{t-1}\maj_j^{t-1}\wrs_{ab}^{t-1}\Big]+\Big[\frac{\maj_i^{t-1}\bar{\maj}_j^{t-1}}{\noc-1} \sum_{\gro_j^{t-1}\neq b} \wrs_{a\gro_j^{t-1}}^{t-1}\Big]\\
&+\Big[\frac{\bar{\maj}_i^{t-1}\maj_j^{t-1}}{\noc-1}\sum_{\gro_i^{t-1}\neq a}\wrs_{\gro_i^{t-1}b}^{t-1}\Big]+\Big[\frac{\bar{\maj}_i^{t-1}\bar{\maj}_j^{t-1}}{(\noc-1)^2}\sum_{\gro_i^{t-1}\neq a,\gro_j^{t-1}\neq b}\wrs_{\gro_i^{t-1}\gro_j^{t-1}}^{t-1}\Big] \Bigg]\nonumber\\
&+ (1-\xi) \wrs^t_{ab}.\nonumber
\end{align}
\end{theorem}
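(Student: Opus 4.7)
The plan is to apply the law of total probability in two stages: first, decompose the edge probability at time $t$ using the link-persistence mixture in Equation (\ref{eqn:dyn-model-changing-communities2}), and second, decompose the time-$(t-1)$ edge probability by conditioning on the time-$(t-1)$ community memberships of nodes $i$ and $j$. Specifically, using the fact that, conditional on $(g_i^t, g_j^t, M)$, the random edge $\adj^t_{ij}$ is either copied from $\adj^{t-1}_{ij}$ with probability $\xi$ or drawn fresh from Bernoulli$(\wrs^t_{ab})$ with probability $1-\xi$, I would write
\begin{align*}
\wrs^t_{ij}(a,b) \;=\; \xi\,P\bigl(\adj^{t-1}_{ij}=1 \,\big|\, g_i^t=a,\, g_j^t=b,\, M\bigr) + (1-\xi)\,\wrs^t_{ab}.
\end{align*}
So the only nontrivial piece is the first term on the right-hand side.

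Next I would evaluate that term by summing over the possible time-$(t-1)$ memberships of $i$ and $j$. Because communities have been fixed up to time $t-1$ in the statement's assumption and the edge dynamics conditional on memberships are node-pair independent (inherited from Equation (\ref{eqn:dyn-model-changing-communities2})), we have
\begin{align*}
P\bigl(\adj^{t-1}_{ij}=1 \,\big|\, g_i^t=a, g_j^t=b, M\bigr) \;=\; \sum_{c,d \in [\noc]} \wrs^{t-1}_{cd}\, P\bigl(g_i^{t-1}=c \,\big|\, g_i^t=a, M_i^{t-1}\bigr)\,P\bigl(g_j^{t-1}=d \,\big|\, g_j^t=b, M_j^{t-1}\bigr),
\end{align*}
where I also used independence of the backward community transitions for distinct nodes, which follows from Equation (\ref{eqn:dyn-model-changing-communities1}).

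The remaining step is to read off the backward conditional from the forward dynamics. Under the uniform prior $o_l = 1/\noc$ and the forward rule $P(g_i^t \mid g_i^{t-1}, M_i^{t-1}) = M_i^{t-1}\,\delta_{g_i^t, g_i^{t-1}} + \bar{M}_i^{t-1}\,\bar{\delta}_{g_i^t, g_i^{t-1}}\,\tfrac{1}{\noc-1}$, a short Bayes-rule computation shows that the time-$(t-1)$ marginal remains uniform and the backward conditional is symmetric to the forward one:
\begin{align*}
P\bigl(g_i^{t-1}=c \,\big|\, g_i^t=a, M_i^{t-1}=1\bigr) = \delta_{c,a}, \qquad
P\bigl(g_i^{t-1}=c \,\big|\, g_i^t=a, M_i^{t-1}=0\bigr) = \tfrac{1}{\noc-1}\,\mathbbm{1}_{c\neq a}.
\end{align*}

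With this in hand, the theorem follows by splitting the double sum into the four cases based on $(M_i^{t-1}, M_j^{t-1}) \in \{0,1\}^2$: (i) both majority collapses the sum to a single term $\wrs^{t-1}_{ab}$; (ii)--(iii) the two mixed cases restrict one index and average $\wrs^{t-1}$ uniformly over the other; (iv) both minority averages $\wrs^{t-1}$ uniformly over $c\neq a,\, d\neq b$. Combining these four cases using the indicator products $M_i^{t-1}M_j^{t-1}$, $M_i^{t-1}\bar{M}_j^{t-1}$, $\bar{M}_i^{t-1}M_j^{t-1}$, $\bar{M}_i^{t-1}\bar{M}_j^{t-1}$ exactly reproduces the bracketed expression in Equation (\ref{2_SBM}), and adding the $(1-\xi)\wrs^t_{ab}$ contribution finishes the derivation. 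The only modest subtlety — really the main place where one must be careful — is justifying the symmetric backward conditional: it hinges on the uniform prior being preserved across time, which is a short induction on $t$ using the forward dynamics, and on the conditional independence of the two backward transitions given $M$, which is immediate from Equation (\ref{eqn:dyn-model-changing-communities1}).
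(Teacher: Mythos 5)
Your proposal is correct and follows essentially the same route as the paper's proof: first split off the $(1-\xi)\wrs^t_{ab}$ term via the copy-versus-regenerate mixture, then marginalize $P(\adj^{t-1}_{ij}=1\mid g^t,\maj)$ over the time-$(t-1)$ memberships and evaluate the four $(M_i^{t-1},M_j^{t-1})$ cases. The only difference is that you explicitly justify, via Bayes' rule and the preserved uniform marginal, why the backward conditional $P(g^{t-1}\mid g^t,\maj)$ has the same form as the forward kernel --- a step the paper's proof uses implicitly without comment --- which makes your write-up slightly more careful on that point.
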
 

From Equation (\ref{2_SBM}), we observe that if we know the community memberships of $i,j$ at time $t-1$ and both of them are either majority or minority, then $\wrs^t_{ij}(a,b)$ can be viewed as probabilities corresponding to a new SBM model each. That is, both the majority set and the minority set of nodes can be modeled using two auxiliary SBM models at time $t$. The following theorem establishes that the community memberships obtained separately for the majority and minority nodes at time $t$ can be aligned to each other, thus enabling us to estimate the membership of the minority nodes (i.e., minority at time $t-1$) at time $t$ as well.

\begin{theorem}\label{General_Min_label}
Community memberships of nodes in the minority set at time $t-1$ can be computed at time $t$ if $\wrs_{ij}^t(a,a)>\wrs_{ij}^t(a,b) \quad \forall b\neq a$ This also holds if $\wrs_{ij}^t(a,a)<\wrs_{ij}^t(a,b) \quad \forall b\neq a$. 
\end{theorem}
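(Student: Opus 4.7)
The plan is to exploit the two-SBM structure established in Theorem \ref{2SBM_thm} in two stages. In the first stage, cluster the minority subgraph on its own; in the second, align its cluster labels to the (already-known) majority labels using cross-block edge densities. Under the assortative hypothesis the alignment is realized by an $\argmax$, and under the disassortative hypothesis by an $\argmin$.

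First I would invoke Theorem \ref{2SBM_thm} specialized to pairs with $\maj_i^{t-1}=\maj_j^{t-1}=0$. The resulting probability $\wrs_{ij}^t(a,b)$ depends on the two nodes only through the fact that both are in the minority set $\mathcal{M}^- := \{i:\maj_i^{t-1}=0\}$, so the subgraph of $\adj^t$ induced on $\mathcal{M}^-$ is an SBM in the true memberships $\{\gro_i^t : i\in\mathcal{M}^-\}$. Running \textsc{CMRecover} on this induced subgraph returns labels that agree with the truth up to some unknown permutation $\per:[\noc]\to[\noc]$. Since the standing assumption of Theorem \ref{2SBM_thm} is that no node has changed community up to time $t-1$, and since majority nodes satisfy $\gro_i^t=\gro_i^{t-1}$, the true labels of all nodes in $\mathcal{M}^+ := \{i:\maj_i^{t-1}=1\}$ at time $t$ are already in hand from the estimates obtained at time $t-1$.

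Next, to identify $\per$, I would apply Theorem \ref{2SBM_thm} with $\maj_i^{t-1}=1$ and $\maj_j^{t-1}=0$, which gives the expected cross-edge probability between a majority node of community $a$ and a minority node of community $b$. For each $(a,c')\in[\noc]\times[\noc]$, form the empirical edge density $\hat{d}(a,c')$ between majority nodes of known community $a$ and the minority cluster labeled $c'$ by \textsc{CMRecover}. Then $\E[\hat{d}(a,c')]=\wrs_{ij}^t(a,\per^{-1}(c'))$. By the symmetry of $\wrs_{ij}^t(\cdot,\cdot)$ in its two arguments (the graph is undirected), the row-wise hypothesis $\wrs_{ij}^t(a,a)>\wrs_{ij}^t(a,b)$ for all $b\neq a$ is equivalent to the column-wise statement $\argmax_a \wrs_{ij}^t(a,b)=b$. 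Hence $\argmax_a \hat{d}(a,c')=\per^{-1}(c')$, which recovers $\per$ and therefore the true community label of every minority node. The disassortative case is treated identically with $\argmin$ replacing $\argmax$.

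The main obstacle I anticipate is concentration of $\hat{d}(a,c')$ around its mean: the separation $\min_{a\neq b}\lvert\wrs_{ij}^t(a,a)-\wrs_{ij}^t(a,b)\rvert$ must dominate the sampling noise for the argmax to single out the correct $a$. For the identifiability-in-the-population statement asserted by the theorem this is automatic once the strict inequality holds, but a finite-sample version would require minimum block-size assumptions on both $\mathcal{M}^+$ and $\mathcal{M}^-$ (analogous to those underlying Theorem \ref{UnifyCM_Grnt}) together with a Bernstein-type bound on the cross-edge counts. A second, milder nuisance is that \textsc{CMRecover} itself can misclassify a vanishing fraction of minority nodes; since the alignment step uses cluster-level averages, these errors only perturb $\hat{d}(a,c')$ by $o(1)$ and do not disturb the argmax once the separation gap is bounded away from zero.
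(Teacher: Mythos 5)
Your proposal is correct and follows essentially the same route as the paper: establish the two SBM sub-structures on the majority and minority sets via Theorem~\ref{2SBM_thm}, then align the minority clusters to the known majority labels by matching cross-block edge densities, using $\argmax$ under the assortative hypothesis and $\argmin$ under the disassortative one. One small caution: your appeal to symmetry of $\wrs_{ij}^t(\cdot,\cdot)$ in its two arguments is not generally valid (the cross term $\frac{\xi}{\noc-1}\sum_{c\neq b}\wrs^{t-1}_{ac}$ depends on the row sums of $\wrs^{t-1}$), but this is harmless because the stated hypothesis already yields the row-wise statement $\argmax_{b}\wrs_{ij}^t(a,b)=a$ directly, which defines an automatically bijective alignment of majority communities to minority clusters.
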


Given the result above, we can relate the community memberships obtained at time $t$ for majority nodes with the community memberships obtained at time $t$ for the minority nodes, and thus get the estimates of community memberships for minority nodes. We can specialize our model to mimic that of \citet{barucca_detectability_2017} (albeit with lesser assumptions on our end), and in this setting the condition for recovery given in Theorem \ref{General_Min_label} always holds. We discuss this special case next.\\

\textbf{Special Case:} We assume that $\wrs^t=\wrs$ has only two degrees of freedom: a constant $\wrs_{in}$ on the diagonal and another constant $\wrs_{out}\leq \wrs_{in}$ on the off diagonals (such an assumption has been used widely before, including in ~\cite{barucca_detectability_2017,ghasemian_detectability_2016,xu_optimal_2017,gao_achieving_2015}). As noted in these works, instead of working with $\wrs_{in}$ and $\wrs_{out}$, it is easier to work with a different parameter $\asor \in [0,1]$, which measures the \emph{assortativity} of $\wrs$. In particular, we let
\begin{align}\label{assrotative_model_eq}
\wrs=\asor \noc \bar{\wrs} \mathbb{I} + (1-\asor) \bar{\wrs}\textbf{1}\textbf{1}'.
\end{align}
Here, matrix $\wrs$ is essentially a convex combination  between two modes: being fully assortative ($\noc \bar{\wrs} \mathbb{I}$) and being fully random $\bar{\wrs}\textbf{1}$, where $\bar{\wrs}=\frac{1}{\noc^2} \sum_{rs} \wrs_{rs}$ for $r,s \in [\noc]$ is the mean of all entries, $\mathbb{I}$ is the identity matrix and $\textbf{1}$ is the all ones vector.\\

To formally show that the majorities and the minorities retain block structures at time $t$ and that community memberships of minorities can be estimated at time $t$, we start with a few supporting results that are needed in the proof of Corollary~\ref{prop baruca marginal}.

\begin{lemma} \label{claim baruca marginal} 
Assuming that nodes have not changed communities till time index $t-1$, the following properties hold at time $t$: (1) $P(\gro_i^t|\maj)=\frac{1}{\noc}$, (2) $P(\adj_{ij}^t=1|\maj)=\bar{\wrs}$, and (3) $\frac{1}{\noc}\sum_{a} P(\adj_{ij}^t=1|\gro_i^t=a, \gro_j^t=b, \maj) =\bar{\wrs} \quad \forall b \in [\noc]$.
\end{lemma}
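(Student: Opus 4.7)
The plan is to prove the three claims in sequence: claim (1) by a clean induction on $t$ that uses only the community-evolution dynamics, claim (3) by plugging the assortative decomposition of Equation~(\ref{assrotative_model_eq}) into Theorem~\ref{2SBM_thm} and summing, and claim (2) as an almost-immediate consequence of (1) and (3). For (1) specifically, I would induct on $t$. The base case is the uniform prior $\pri_l = 1/\noc$. For the inductive step, invoke the per-node transition rule from Equation~(\ref{eqn:dyn-model-changing-communities1}), namely $P(\gro_i^t=a\mid \gro_i^{t-1}=x, \maj_i^{t-1}) = \maj_i^{t-1}\delta_{a,x} + \bar{\maj}_i^{t-1}\bar{\delta}_{a,x}/(\noc-1)$, and marginalize over $x$ with the inductive hypothesis $P(\gro_i^{t-1}=x\mid \maj) = 1/\noc$; in both cases $\maj_i^{t-1}=1$ and $\maj_i^{t-1}=0$ the resulting sum collapses to $1/\noc$. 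This step does not touch the observed adjacency sequence, because $\maj$ is exogenous and community dynamics in our model do not depend on $\adj$.

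For (3), the workhorse is the identity $\sum_{a\in[\noc]} \wrs_{ab} = \noc \bar{\wrs}$ for every $b$, which is immediate from Equation~(\ref{assrotative_model_eq}). Applying this to Equation~(\ref{2_SBM}) after summing $a$ over $[\noc]$, each of the four $\xi$-weighted summands collapses: the first contributes $\maj_i^{t-1}\maj_j^{t-1}\noc\bar{\wrs}$; the second gives $\tfrac{\maj_i^{t-1}\bar{\maj}_j^{t-1}}{\noc-1}\sum_{c\neq b}\noc\bar{\wrs} = \maj_i^{t-1}\bar{\maj}_j^{t-1}\noc\bar{\wrs}$; the third is symmetric; and the fourth evaluates via $\sum_a\sum_{c\neq a}\wrs_{cd} = \noc\cdot\noc\bar{\wrs} - \noc\bar{\wrs} = (\noc-1)\noc\bar{\wrs}$ to $\bar{\maj}_i^{t-1}\bar{\maj}_j^{t-1}\noc\bar{\wrs}$. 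Since the four indicator combinations of $(\maj_i^{t-1}, \bar{\maj}_i^{t-1})\times(\maj_j^{t-1}, \bar{\maj}_j^{t-1})$ sum to $1$, the $\xi$-block aggregates to $\xi\noc\bar{\wrs}$, and the residual $(1-\xi)\sum_a \wrs^t_{ab} = (1-\xi)\noc\bar{\wrs}$ completes the sum to $\noc\bar{\wrs}$. Dividing by $\noc$ yields $\bar{\wrs}$.

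For (2), I would condition on $\gro_i^t, \gro_j^t$ and apply the law of total probability: $P(\adj_{ij}^t=1\mid \maj) = \sum_{a,b} P(\adj_{ij}^t=1\mid \gro_i^t=a, \gro_j^t=b, \maj)\, P(\gro_i^t=a\mid \maj)\, P(\gro_j^t=b\mid \maj)$, where conditional independence of $\gro_i^t$ and $\gro_j^t$ given $\maj$ is inherited from the product-over-$i$ structure of Equation~(\ref{eqn:dyn-model-changing-communities1}). Claim (1) replaces each community marginal by $1/\noc$, and claim (3) collapses the inner sum over $a$ to $\bar{\wrs}$, leaving $\tfrac{1}{\noc}\sum_b \bar{\wrs} = \bar{\wrs}$. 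The main obstacle is the case analysis in (3), in particular bookkeeping the double sum in the ``both minorities'' branch without conflating the dummy indices with the fixed $a,b$. A subtler point concerns the $\wrs^{t-1}_{ab}$ term inside Equation~(\ref{2_SBM}): under the standing assumption that nodes have not changed communities up to $t-1$ and in the constant-$\wrs^t = \wrs$ special case, a short subsidiary induction shows that the conditional edge probability at time $t-1$ is indeed $\wrs_{ab}$, which legitimizes invoking the assortative identity on it.
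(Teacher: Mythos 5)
Your argument is correct, and for parts (2) and (3) it takes a genuinely different and more modular route than the paper. Part (1) is proved exactly as in the paper (induction on $t$ using the transition kernel from Equation~(\ref{eqn:dyn-model-changing-communities1})). For part (3), the paper does \emph{not} invoke Theorem~\ref{2SBM_thm}; it redoes the full marginalization over $\adj_{ij}^{t-1}$ and $(\gro_i^{t-1},\gro_j^{t-1})$ from scratch, introduces the auxiliary quantity $T_2$, and closes the argument by a four-way case analysis on $(\maj_i^{t-1},\maj_j^{t-1})$. Your route --- summing Equation~(\ref{2_SBM}) over $a$ and collapsing each $\xi$-weighted block via the row/column-sum identity $\sum_a \wrs_{ab}=\noc\bar{\wrs}$ implied by Equation~(\ref{assrotative_model_eq}) --- reaches the same conclusion with far less bookkeeping, and your four terms $\maj_i^{t-1}\maj_j^{t-1}\noc\bar{\wrs}$, $\maj_i^{t-1}\bar{\maj}_j^{t-1}\noc\bar{\wrs}$, etc.\ check out. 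There is no circularity, since Theorem~\ref{2SBM_thm} is established independently of this lemma (only Corollary~\ref{prop baruca marginal} depends on it). For part (2), the paper again performs a direct expansion over all latent variables followed by the same case analysis, whereas you obtain it as a two-line consequence of (1) and (3) via the tower rule and the conditional independence of $\gro_i^t$ and $\gro_j^t$ given $\maj$ (which the product form of Equation~(\ref{eqn:dyn-model-changing-communities1}) supplies); this makes the logical dependencies among the three claims explicit, which the paper's presentation obscures. You are also right to flag the identification of $\wrs^{t-1}_{ab}$ in Equation~(\ref{2_SBM}) with the parameter $\wrs_{ab}$: the paper handles this by citing Equation~(\ref{Barrucca W recursion}) at the step marked $(*)$, which is precisely the subsidiary induction you describe. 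The only cost of your approach is that it is tied to the assortative special case of Equation~(\ref{assrotative_model_eq}) for part (3), but that is the regime in which the lemma is stated and used, so nothing is lost.
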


Given these three properties, similar to Theorem \ref{2SBM_thm}, Corollary~\ref{prop baruca marginal} below also relates the marginal probability of an edge at time $t$ (when some nodes have changed communities) to the parameters $(\xi,\alpha,\bar{W})$ of the dynamic model.
\begin{corollary}\label{prop baruca marginal} Assuming that nodes have not changed communities till time index $t-1$, $\wrs_{ij}^t(a,b) := P(\adj_{ij}^t=1|\gro_i^t=a, \gro_j^t=b, \maj)$ is related to the parameters of the dynamic model in Equations (\ref{eqn:dyn-model-changing-communities1}) and (\ref{eqn:dyn-model-changing-communities2}) according to the following:
\begin{align}\label{marginal barrucca}\wrs_{ij}^t(a,b)&= \xi \Bigg[\maj_i^{t-1} \maj_j^{t-1} \wrs_{ab}+ \frac{\maj_i^{t-1}\bar{\maj}_j^{t-1}}{\noc-1}(\noc \bar{\wrs}-\wrs_{ab})+\frac{\bar{\maj}_i^{t-1}{\maj}_j^{t-1}}{\noc-1} \nonumber\\&\times(\noc\bar{\wrs}-\wrs_{ab})+\frac{\bar{\maj}_i^{t-1}\bar{\maj}_j^{t-1}}{(\noc-1)^2}\Big((\noc^2-2\noc)\bar{\wrs}+\wrs_{ab}\Big) \Bigg]+ (1-\xi) \wrs_{ab}.
\end{align}
Equivalently, 
\begin{align}\label{recursion}
\wrs_{ij}^t(a,b)= \xi(\gamma^{t-1}+ \psi^{t-1}\wrs_{ab})+(1-\xi)\wrs_{ab},
\end{align} 
where $\gamma^{t-1}=\frac{\maj_i^{t-1} \bar{\maj}_j^{t-1}}{\noc-1}\noc\bar{\wrs}+\frac{\bar{\maj}_i^{t-1} \maj_j^{t-1}}{\noc-1}\noc \bar{\wrs}+\frac{\bar{\maj}_i^{t-1} \bar{\maj}_j^{t-1} (\noc^2-2\noc)}{(\noc-1)^2}\bar{\wrs}$ and $\psi^{t-1}=\maj_i^{t-1}\maj_j^{t-1}-\frac{\maj_i^{t-1}\bar{\maj}_j^{t-1}}{\noc-1}-\frac{\bar{\maj}_i^{t-1} \maj_j^{t-1}}{\noc-1}+\frac{\bar{\maj}_i^{t-1} \bar{\maj}_j^{t-1}}{(\noc-1)^2} $.
\end{corollary}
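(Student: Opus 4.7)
The plan is to derive Corollary~\ref{prop baruca marginal} as a direct specialization of Theorem~\ref{2SBM_thm} to the assortative parameterization in Equation~(\ref{assrotative_model_eq}), where $\wrs^t = \wrs$ is constant in $t$. The main task reduces to evaluating the row sums and double sums appearing on the right-hand side of Theorem~\ref{2SBM_thm} using the explicit form of $\wrs$, and then regrouping terms to expose the affine-in-$\wrs_{ab}$ structure displayed in Equation~(\ref{recursion}).

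First, I would record the elementary row-sum identity for the assortative matrix: since $\wrs_{ac} = \asor \noc \bar{\wrs}\,\mathbb{I}_{\{a=c\}} + (1-\asor)\bar{\wrs}$, summing over $c$ gives $\sum_c \wrs_{ac} = \asor \noc \bar{\wrs} + (1-\asor)\noc \bar{\wrs} = \noc \bar{\wrs}$, independent of $a$. (This is also the content of part~(3) of Lemma~\ref{claim baruca marginal} applied at the level of the block matrix.) Consequently,
\begin{align*}
\sum_{\gro_j^{t-1}\neq b}\wrs_{a\gro_j^{t-1}} = \noc\bar{\wrs} - \wrs_{ab},\qquad \sum_{\gro_i^{t-1}\neq a}\wrs_{\gro_i^{t-1}b} = \noc\bar{\wrs} - \wrs_{ab}.
\end{align*}
For the double sum I would use inclusion-exclusion together with $\sum_{c,d}\wrs_{cd} = \noc^2\bar{\wrs}$ to obtain
\begin{align*}
\sum_{\gro_i^{t-1}\neq a,\,\gro_j^{t-1}\neq b}\wrs_{\gro_i^{t-1}\gro_j^{t-1}} = \noc^2\bar{\wrs} - 2\noc\bar{\wrs} + \wrs_{ab} = (\noc^2-2\noc)\bar{\wrs} + \wrs_{ab}.
\end{align*}

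Next, substitute these three identities into the expression for $\wrs_{ij}^t(a,b)$ furnished by Theorem~\ref{2SBM_thm} (with $\wrs^{t-1}$ replaced by $\wrs$ since the assortative model is time-invariant). The result is immediately Equation~(\ref{marginal barrucca}), so the first half of the corollary follows.

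To recast the expression in the recursive form~(\ref{recursion}), I would split each bracketed contribution into the part that is a multiple of $\wrs_{ab}$ and the part that is a multiple of $\bar{\wrs}$, then collect coefficients. Concretely, the $\wrs_{ab}$ coefficient inside the outer $\xi$ bracket becomes $\maj_i^{t-1}\maj_j^{t-1} - \tfrac{\maj_i^{t-1}\bar{\maj}_j^{t-1}}{\noc-1} - \tfrac{\bar{\maj}_i^{t-1}\maj_j^{t-1}}{\noc-1} + \tfrac{\bar{\maj}_i^{t-1}\bar{\maj}_j^{t-1}}{(\noc-1)^2}$, which is precisely the definition of $\psi^{t-1}$, while the remaining $\bar{\wrs}$-terms aggregate exactly to $\gamma^{t-1}$ as stated. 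Adding back the $(1-\xi)\wrs_{ab}$ term outside the bracket produces Equation~(\ref{recursion}), completing the proof.

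There is no genuine obstacle here beyond bookkeeping; the main care point is not to drop the constraint $c\neq a, d\neq b$ when applying inclusion-exclusion to the double sum, since forgetting the double-deleted entry $\wrs_{ab}$ (via the $-\wrs_{ab}$ from single deletions and $+\wrs_{ab}$ from the corner) would shift the coefficient of $\wrs_{ab}$ inside $\psi^{t-1}$. Once that algebraic identity is verified, the factorization into $\gamma^{t-1}$ and $\psi^{t-1}$ is forced, and the equivalence of the two displayed formulas is immediate.
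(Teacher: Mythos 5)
Your proposal is correct and follows essentially the same route as the paper: substitute into the general formula of Theorem~\ref{2SBM_thm} (with $\wrs^t=\wrs$ and $\wrs^{t-1}_{ij}(a,b)=\wrs_{ab}$ since all nodes are majority up to time $t-1$), evaluate the single and double sums, and regroup into the $\gamma^{t-1},\psi^{t-1}$ form. The only cosmetic difference is that you obtain the row-sum identity $\sum_c \wrs_{ac}=\noc\bar{\wrs}$ directly from the assortative parameterization in Equation~(\ref{assrotative_model_eq}), whereas the paper routes it through Lemma~\ref{claim baruca marginal}; your inclusion--exclusion bookkeeping for the double sum, including the $+\wrs_{ab}$ corner term, matches the paper's coefficients exactly.
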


Again, Equation (\ref{marginal barrucca}) shows that SBM structures are retained for the majority set and the minority set of nodes at time $t$ (that is, the edge formation probability depends on the local latent community memberships of the nodes involved). The following corollary states that the community memberships of the majority and minority nodes at time $t$ can be aligned to each other without requiring any further condition, which was the case with Theorem~\ref{General_Min_label}.

\begin{corollary}\label{claim_Min_label} Under the model described by Equation~(\ref{assrotative_model_eq}), the community memberships of all nodes in the minority set at time $t-1$ can be computed at time $t$. 
\end{corollary}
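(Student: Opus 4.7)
The plan is to verify the hypothesis of Theorem \ref{General_Min_label} directly, using the closed-form marginal probability from Corollary \ref{prop baruca marginal} together with the assortative parameterization \eqref{assrotative_model_eq}. The key simplification is that under \eqref{assrotative_model_eq} the difference $\wrs_{aa} - \wrs_{ab}$ equals the same constant $\asor \noc \bar{\wrs}$ for every $a \neq b$, which will make the gap $\wrs^t_{ij}(a,a) - \wrs^t_{ij}(a,b)$ also independent of the particular choice of $b$.

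First, rewrite the recursion of Corollary \ref{prop baruca marginal} in the affine form
\begin{equation*}
\wrs^t_{ij}(a,b) = \xi \gamma^{t-1} + \bigl(\xi \psi^{t-1} + (1-\xi)\bigr)\, \wrs_{ab},
\end{equation*}
noting that $\gamma^{t-1}$ and $\psi^{t-1}$ depend only on $(\maj_i^{t-1}, \maj_j^{t-1}, \noc, \bar{\wrs})$ and not on $a,b$. Substituting $\wrs_{aa} = \asor \noc \bar{\wrs} + (1-\asor)\bar{\wrs}$ and $\wrs_{ab} = (1-\asor)\bar{\wrs}$ for $a \neq b$ then gives
\begin{equation*}
\wrs^t_{ij}(a,a) - \wrs^t_{ij}(a,b) = \bigl(\xi \psi^{t-1} + (1-\xi)\bigr)\, \asor \noc \bar{\wrs},
\end{equation*}
which is the same value for every $b \neq a$ and for every pair of nodes with the same $(\maj_i^{t-1}, \maj_j^{t-1})$.

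Next, carry out a short case analysis on $(\maj_i^{t-1}, \maj_j^{t-1}) \in \{0,1\}^2$ to pin down the sign of the prefactor $\xi \psi^{t-1} + (1-\xi)$. If both nodes are majority, $\psi^{t-1}=1$ and the prefactor equals $1$. If both are minority, $\psi^{t-1} = 1/(\noc-1)^2$ and the prefactor lies in $(0,1]$ for every $\xi \in [0,1]$. If exactly one of them is minority, $\psi^{t-1} = -1/(\noc-1)$ and the prefactor equals $(1-\xi) - \xi/(\noc-1)$, which is positive for $\xi < (\noc-1)/\noc$ and negative for $\xi > (\noc-1)/\noc$. In each case the sign of $\wrs^t_{ij}(a,a) - \wrs^t_{ij}(a,b)$ is the same for every $b \neq a$, so the hypothesis of Theorem \ref{General_Min_label} is met and the minority communities at time $t-1$ are recoverable from the minority subgraph at time $t$. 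Aligning those recovered labels with the majority labels is obtained similarly from the cross-block: the mixed-block probabilities $\wrs^t_{ij}(a,b)$ (with $\maj_i^{t-1} \neq \maj_j^{t-1}$) are strictly monotone in $\wrs_{ab}$, so the permutation that relates the two partitions is determined.

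The one subtle point, which I would flag as the main obstacle, is the boundary case $\xi = (\noc-1)/\noc$, at which the mixed-block prefactor vanishes and the cross-block loses its SBM structure, so that the minority partition can only be recovered up to a global permutation of minority labels. Since this is a measure-zero event in the parameter space (and in persistence-driven models $\xi$ is typically close to $1$), the corollary holds generically; the write-up should either exclude this degenerate value explicitly or note that identifiability up to a global minority-label permutation is the best possible outcome there.
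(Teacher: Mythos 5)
Your proposal is correct and follows essentially the same route as the paper: both reduce to checking the hypothesis of Theorem~\ref{General_Min_label} by writing the cross-block probability as $\frac{\xi}{\noc-1}(\noc\bar{\wrs}-\wrs_{ab})+(1-\xi)\wrs_{ab}$, observing that under Equation~(\ref{assrotative_model_eq}) its dependence on whether $a=b$ is governed by the sign of $(1-\xi)-\frac{\xi}{\noc-1}$, and aligning the two SBM substructures by maximal (or minimal) cross-link counts accordingly. Your explicit flag of the boundary $\xi=(\noc-1)/\noc$, where the prefactor vanishes and the cross-block carries no alignment signal, is a genuine point the paper's proof silently absorbs into its ``$\leq 0$'' branch (and Algorithm~\ref{Inference 2nd Model} inherits the same issue), so your suggestion to exclude that measure-zero value, and implicitly $\asor>0$, is a worthwhile refinement of the stated claim.
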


Thus for the special case, given the result in Corollary~\ref{claim_Min_label}, we can relate the community memberships obtained at time $t$ for majority nodes with the community memberships obtained at time $t$ for the minority nodes, and thus get the estimates of community memberships for minority nodes that was not possible before. Estimation of parameters $\xi$, $\asor$ and $\bar{\wrs}$ is similar to the previous settings, and the complete algorithm is presented in Algorithm~\ref{Inference 2nd Model}.

\begin{algorithm}
\caption{Inferring Parameters by Exploiting Block Sub-structure}\label{Inference 2nd Model}
\begin{algorithmic}[1]
\\\quad\textbf{Input:} Adjacency matrices $\adj^1,\adj^2,...,\adj^{t-1}, \adj^{t}$, number of communities $k$, and $\maj$. Here $t-1$ is the time index before which all nodes are majority.
\\\quad\textbf{Output:} Community memberships up to time $t$, and estimates $\xi, \asor, \bar{\wrs}$
\\\quad Form $\adj^{maj}_{t}$ ($\adj^{min}_{t}$) by removing minority (majority) nodes and their links
\\\quad Obtain $\hat{g}^{t-1},\hat{\xi},\hat{W}_{rs}$ for $1 \leq r,s \leq k$ by apply Algorithm~\ref{Inference Fixed Barrucca} on ($\adj^{1},...,\adj^{t-1}$).\\
\quad $\hat{g}^{t}_{maj}$ = \textsc{CMRecover}($\adj^{maj}_{t}, \noc$).\\
\quad $\hat{g}^{t}_{min}$ = \textsc{CMRecover}($\adj^{min}_{t}, \noc$).\\
\quad Assign each minority community at ${t}$ with that majority community at ${t}$ that has the highest (if $1-\hat{\xi}-\frac{\hat{\xi}}{k-1} > 0$, or alternatively the lowest, if $1-\hat{\xi}-\frac{\hat{\xi}}{k-1} \leq 0$) number of links with the former, to get $\hat{g}^{t}$ for all nodes.\\
\quad Estimate $\hat{\asor}, \hat{\overline{\wrs}}$ from the set $\{\hat{\wrs}_{rs} \;\forall r,s \in [k]\times[k]$ by minimizing mean squared error.\\
\quad \textbf{return} $\hat{g}^{t-1}, \hat{g}^{t},\hat{\asor}, \hat{\overline{\wrs}}$ and $ \hat{\xi}$.
\end{algorithmic}
\end{algorithm}
\section{Experiments}\label{sec:experiments}

In this section, we perform several experiments to complement the analysis of the proposed methods in Sections~\ref{sec:group-fixed-model} and~\ref{sec:group-changing-model}. The code supporting the experiments is available online\footnote{ \url{https://github.com/thejat/dynamic-network-growth-models}}. The first set of experiments, which look at recovery of community estimates on synthetic instances, have the following goals:
\begin{itemize}
\item Assess the error correction capabilities of UnifyCM and UnifyLP (Experiment A).
\item Understand the regimes in which UnifyCM and UnifyLP fare better than Spectral-Mean and vice-versa (Experiment B).
\item Quantify the quality of estimation of minority community memberships by Algorithm~\ref{Inference 2nd Model} when both links and communities persist across time (Experiment C).
\end{itemize}
The second set of experiments (Experiment D and E) look at both parameter estimation and latent community estimation in two cases: (a) when the edge probability matrix is fixed across time, and (b) the more general setting when the edge probability matrices are drawn i.i.d. from an unknown distribution. 

In the final set of experiments (Experiments F and G), we compare the community recovery performance of our methods in comparison to a more general method (given data generated from the latter) and also show the link prediction accuracy on the Enron email and Facebook friendship datasets.

In all the experiments, we choose spectral clustering (from the python Scikit-learn package\footnote{\url{http://scikit-learn.org/stable/index.html}}) as the single graph SBM sub-routine whenever necessary. For MLE estimation, we used the optimize function in the python Scipy optimize library\footnote{\url{https://docs.scipy.org/doc/scipy/reference/optimize.html}} (in particular, using gradient information in a truncated Newton algorithm). When comparing estimated community memberships with the ground truth, we use $1 - $ NMI (normalized mutual information) to report performance (lower is better) and when comparing estimated matrices to true parameter matrices, we use the normalized Frobenius norm error (normalized by the Frobenius norm of the true parameter matrix). For comparing scalar estimates, we simply use the relative error (absolute value difference divided by the magnitude of the true value). Unless otherwise stated, all numbers reported are averaged over 10 Monte Carlo runs. We have omitted variance around the error metrics if they are relatively small.


In the first experiment (Experiment A), we evaluate the error correction capabilities of both UnifyCM (Algorithm~\ref{unify cms}) and UnifyLP (Algorithm~\ref{unify LP}). For both these algorithms, the input is a sequence of estimated community memberships which are output by a generic SBM sub-routine (spectral clustering). We plot the average of errors at the input (averaged over the length of the graph sequence) and the output as shown in Figure~\ref{fig:ip_vs_op}. The error is measured in terms of the normalized mutual information (NMI) metric (in particular we use 1- NMI). These numbers are further averaged over 10 Monte Carlo runs. For each of the three configurations, viz., $(n,k) = (100,4)$ with type-I dynamic, $(n,k) = (100,4)$ with type-II dynamic, and $(n,k) = (500,2)$ with type-I dynamic, we can observe in the plots that the output errors are greatly reduced as compared to that of the inputs. A similar trend is observed for both UnifyCM and UnifyLP. Further, as expected, these errors decrease over time at the output for both algorithms.

\begin{figure}	
\centering
 \includegraphics[width=0.3\columnwidth]{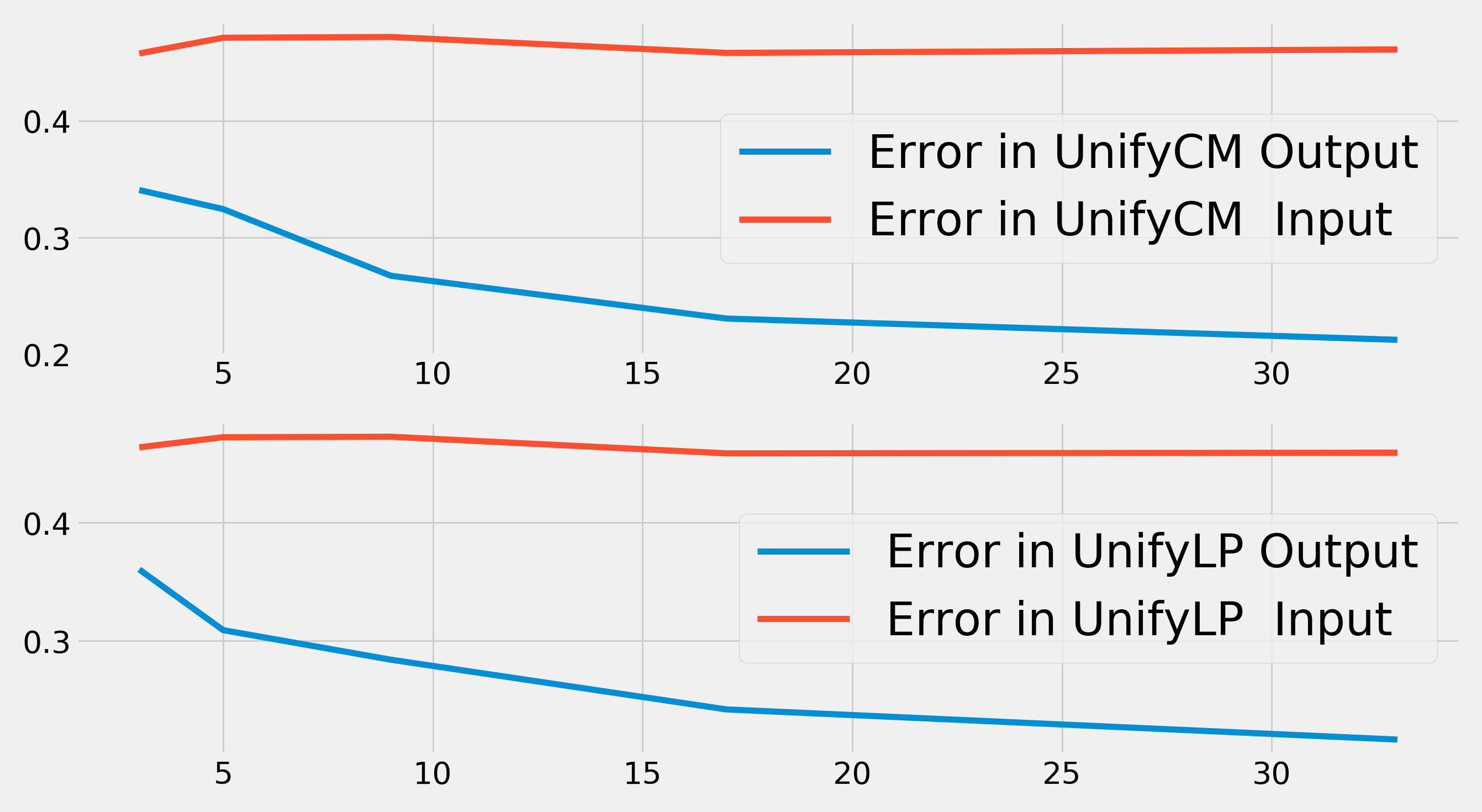}{}
	\includegraphics[width=0.3\columnwidth]{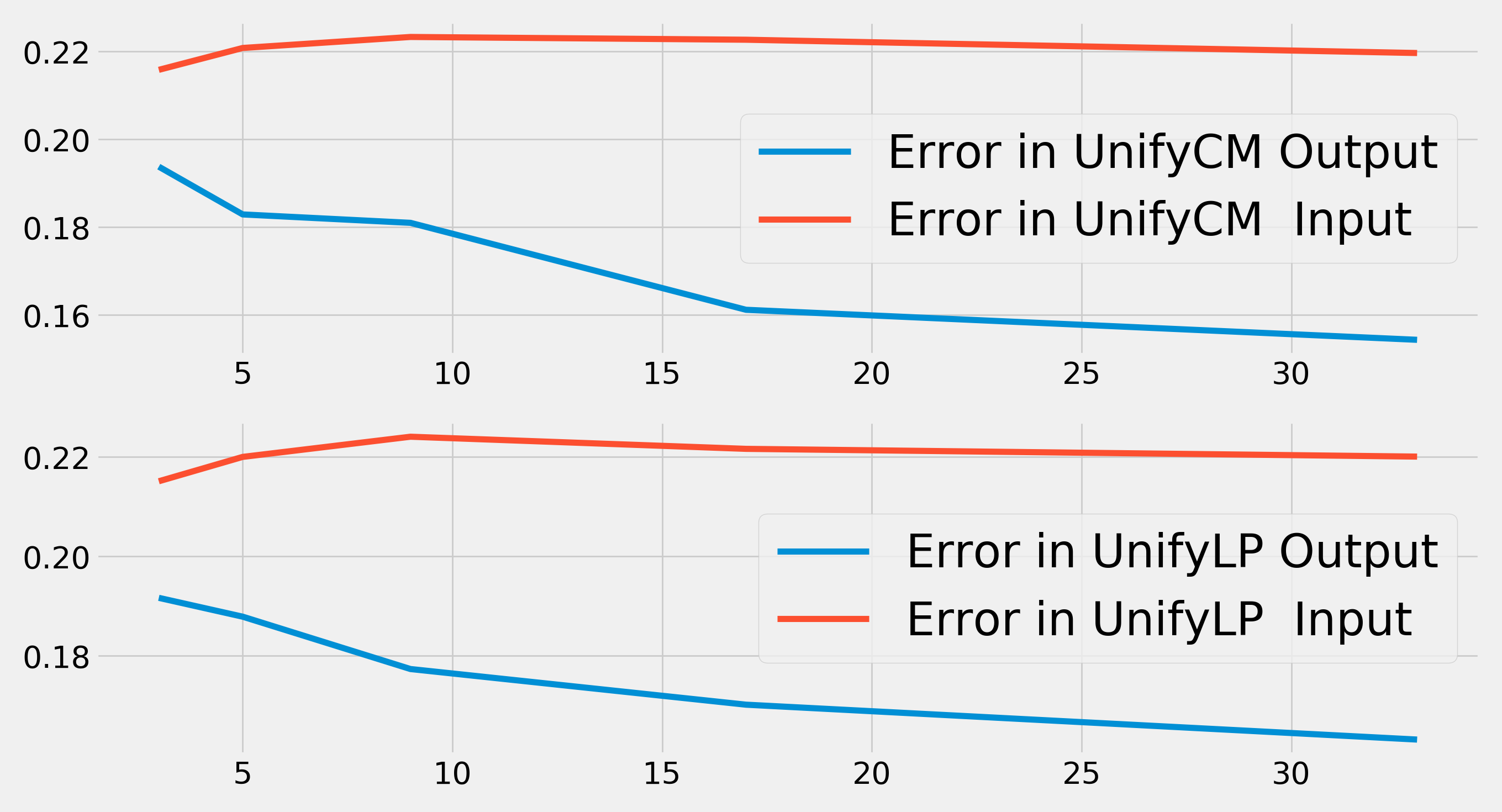}{}
	 \includegraphics[width=0.3\columnwidth]{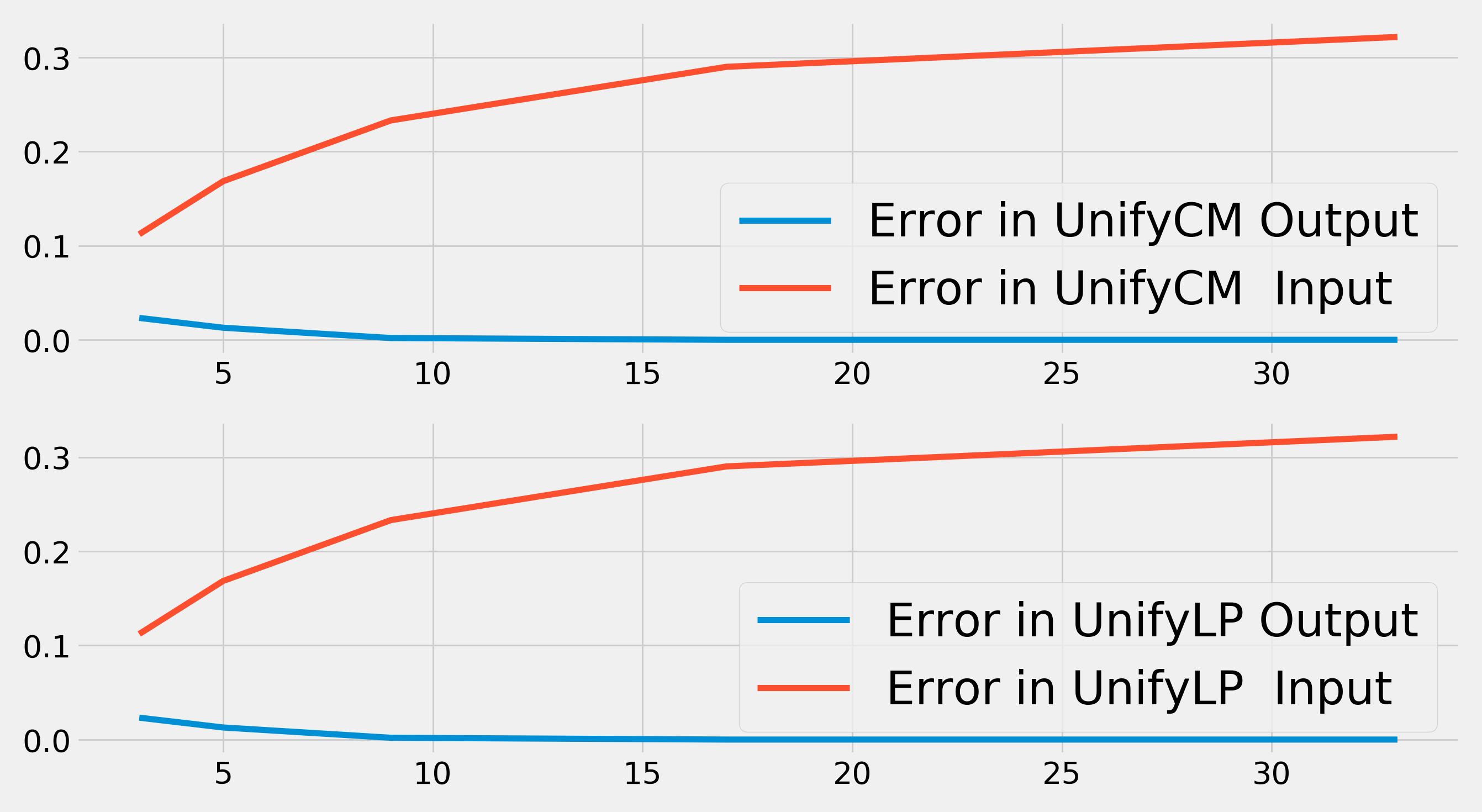}
\caption{Estimation error in community memberships is decreased by UnifyCM and UnifyLP. Setups for each plot are: \emph{left:} $n=100, k=4$ with type-I dynamic, \emph{center:} $n=100, k=4$ with type-II dynamic ($\xi = 0.5$), \emph{right:} $n=500, k=2$ with type-I dynamic.
}\label{fig:ip_vs_op}
\end{figure}


In the next setup (Experiment B), we explore the relative performance of UnifyCM and UnifyLP when compared to Spectral-Mean (\citet{han_consistent_2015,Bhattacharyya2017}). While ~\citet{han_consistent_2015} demonstrate the superiority of Spectral-Mean over a \emph{majority-vote} algorithm (UnifyLP and UnifyCM belong to this family), they only show it on illustrative instances whose parameters are by design below the detectability threshold at each time index. We believe there is merit in investigating how UnifyCM and UnifyLP compare with Spectral-Mean more generally (and not just at the regime where detectability threshold issues arise), especially when the edge probability matrices are time-varying. It was already shown that if these matrices give graphs that wash out the community structure when averaged (e.g., when $W^t = 1-W^{t-1}$ and alternating as a function of $t$) then Spectral-Mean performs poorly. We reproduce this performance in the left plots of Figure~\ref{fig:spectral-mean1} and~\ref{fig:spectral-mean2}. The metric is again based on the normalized mutual information between the estimated memberships and the true memberships. We also show that Spectral-Mean is in general competitive with UnifyCM and UnifyLP when the graphs are generated using a fixed edge probability parameter or when it is stochastic (see the center and right plots in Figures~\ref{fig:spectral-mean1} and~\ref{fig:spectral-mean2}). Since this information is not known a priori, UnifyCM and UnifyLP can be considered being robust compared to Spectral-Mean, as they are similar in performance to the latter in much of the parameter space and do extremely well when averaging out adjacency matrices loses community structure information.

\begin{figure}	
\centering
 \includegraphics[width=0.3\columnwidth]{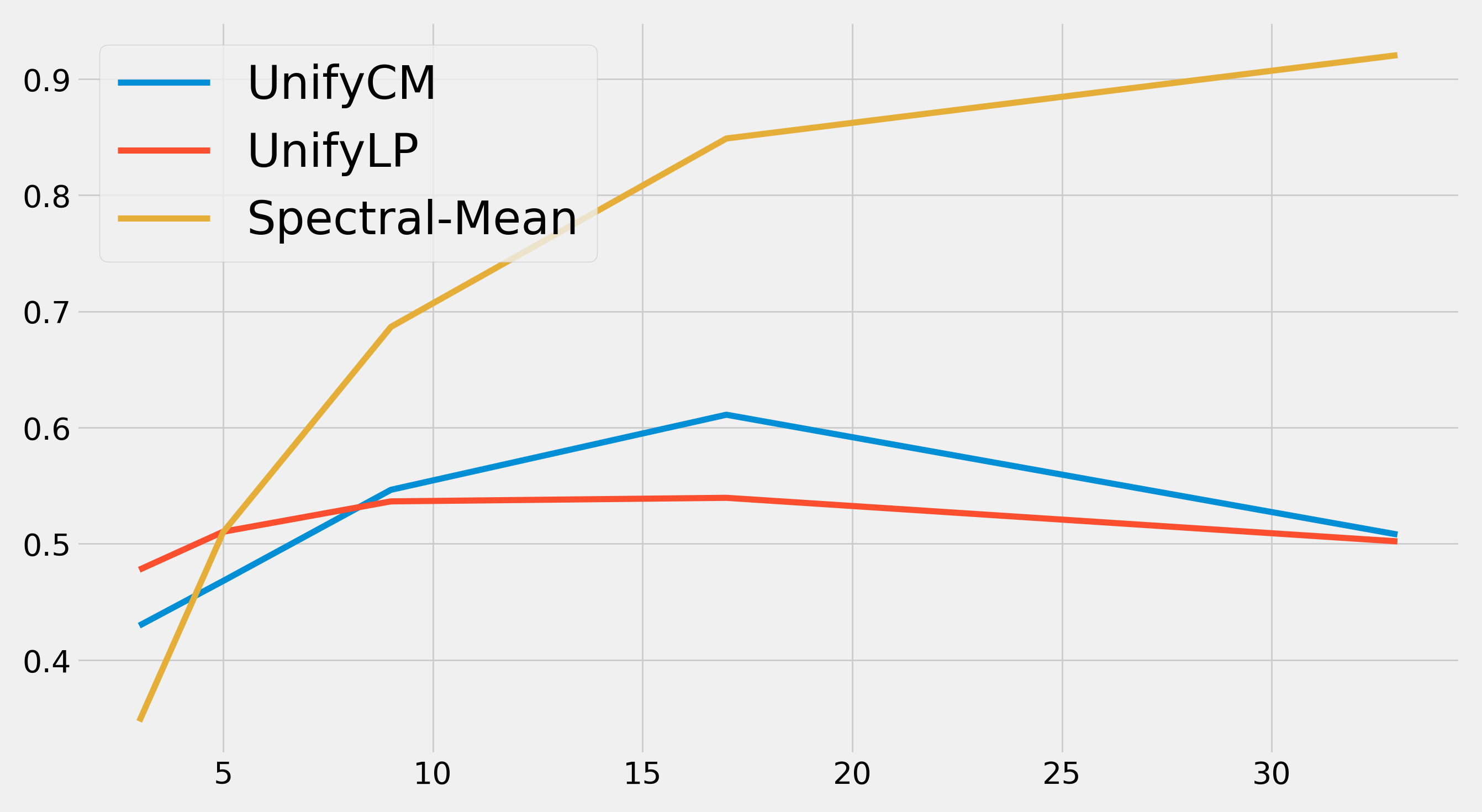}{}
	\includegraphics[width=0.3\columnwidth]{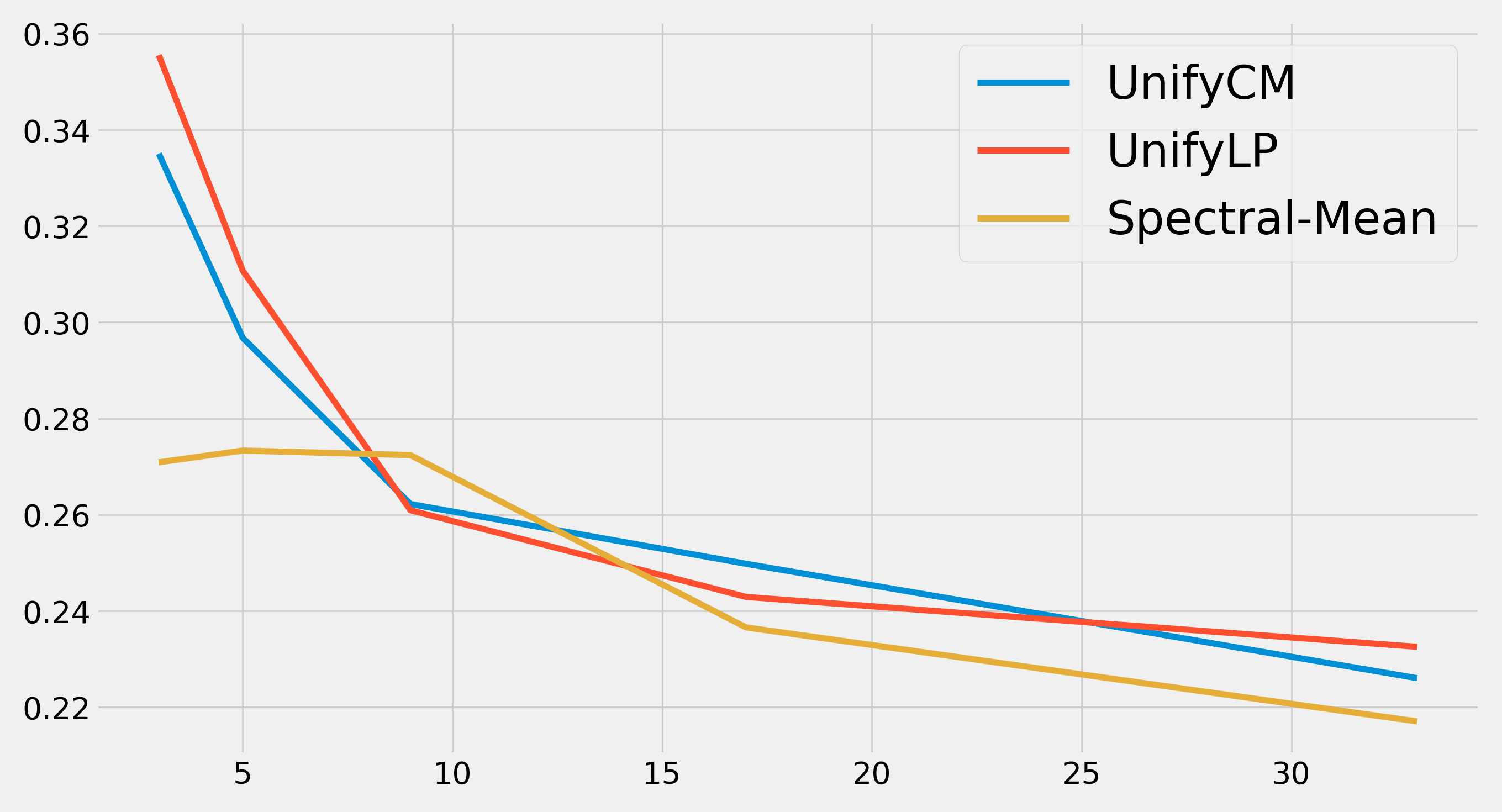}{}
	\includegraphics[width=0.3\columnwidth]{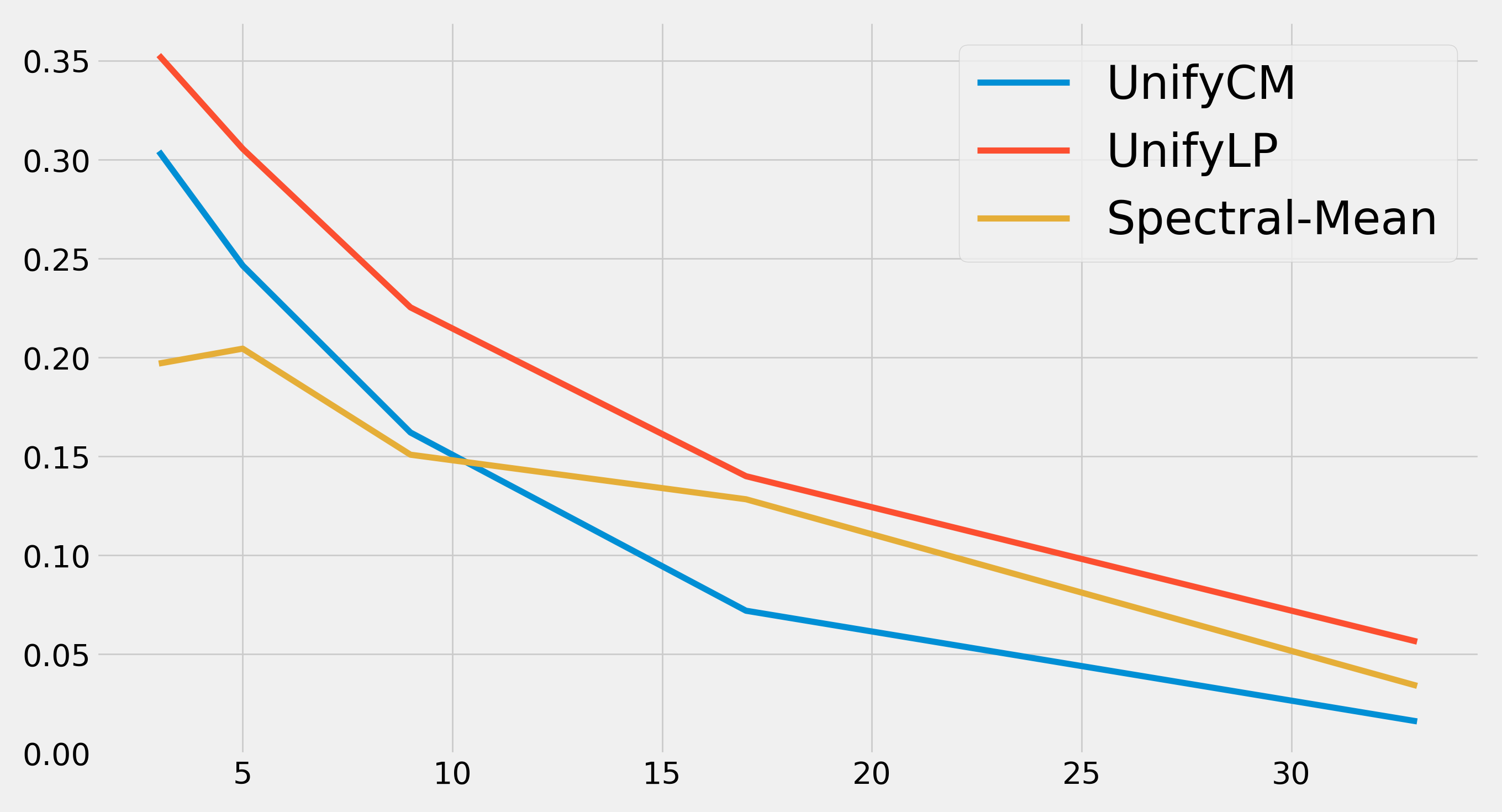}
\caption{Performance (1-NMI) of UnifyCM and UnifyLP compared to Spectral-Mean. The number of nodes is fixed to be $n=100$ and the number of communities is chosen to be $k=4$. {Left}: The edge probability matrices are such that averaging the adjacency matrices washes out the community structure. \emph{Center}: The edge probability matrix is fixed across time. \emph{Right}: The edge probability matrices are drawn i.i.d.}\label{fig:spectral-mean1}
\end{figure}

\begin{figure}	
\centering
 \includegraphics[width=0.3\columnwidth]{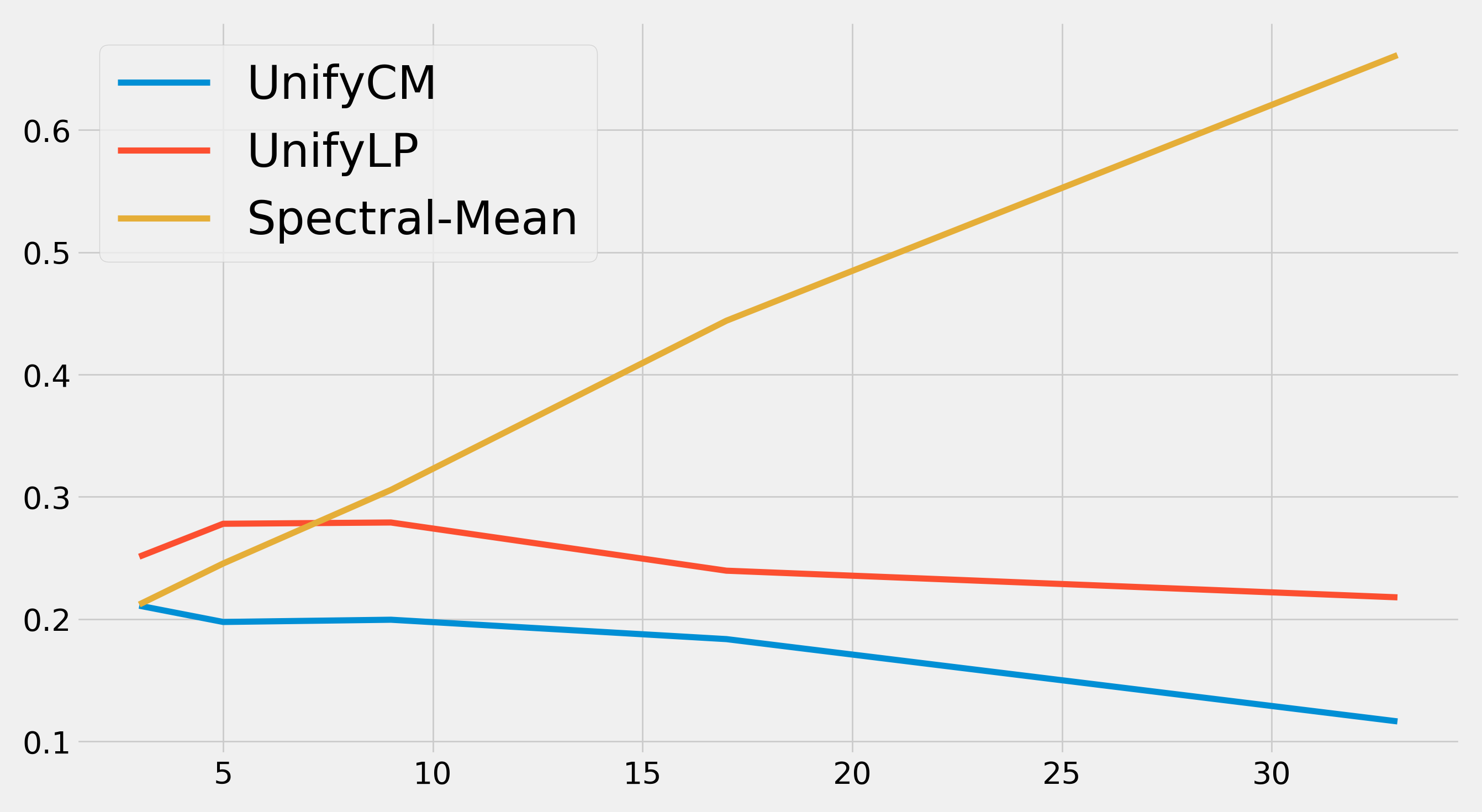}{}
	\includegraphics[width=0.3\columnwidth]{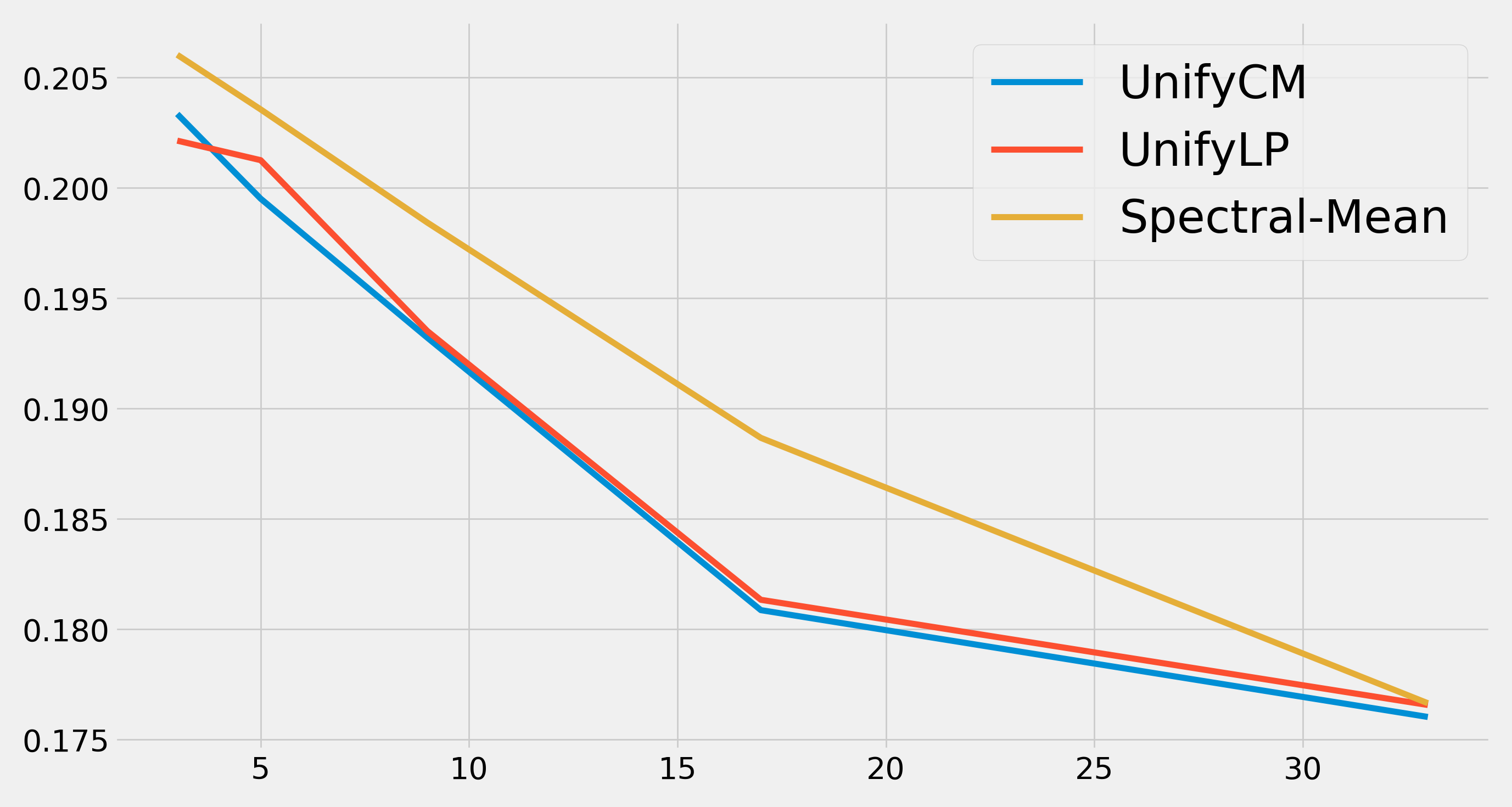}{}
	\includegraphics[width=0.3\columnwidth]{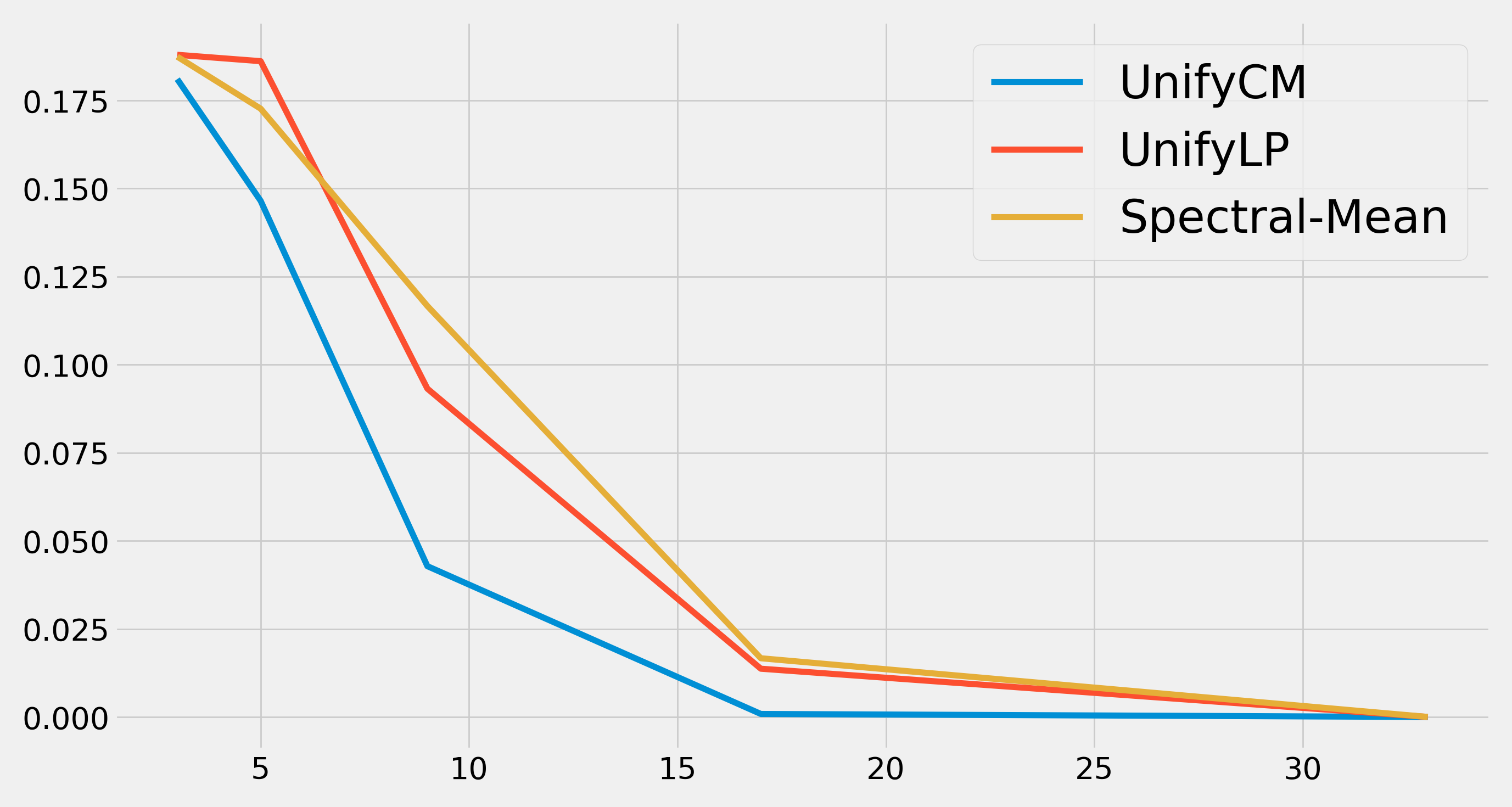}
\caption{Performance (1-NMI) of UnifyCM and UnifyLP compared to Spectral-Mean. The plots are similar to the ones in Figure~\ref{fig:spectral-mean1}. In these plots, the number of nodes is fixed to be $n=500$ and the number of communities is chosen to be $k=4$. {Left}: The edge probability matrices are such that averaging the adjacency matrices washes out the community structure. \emph{Center}: The edge probability matrix is fixed across time. \emph{Right}: The edge probability matrices are drawn i.i.d.}\label{fig:spectral-mean2}
\end{figure}


In the third experiment (Experiment C), we assess the quality of the estimates of community memberships using Algorithm~\ref{Inference 2nd Model} under the model in Section~\ref{sec:group-changing-model}. Three sets (10 Monte Carlo runs each) of synthetic graph sequence instances were generated with the following configurations: $(n,k,\xi) = (500,2,0.2),(500,2,0.8)$ and $(1000,4,0.2)$. The spatial edge probability matrix $W$ was chose to be $.3\mathbb{I} + .2\textbf{1}\textbf{1}'$ for simplicity. As discussed before, we assume exogenous access to the majority/minority labels of nodes at time $t-1$ while predicting the labels of all nodes at time $t$. Since the majority nodes don't change labels, their prediction is relatively simple. For the minority nodes however, although they may have changed to any other community uniformly at random, executing Algorithm~\ref{Inference 2nd Model} allows us to recover their communities. In fact, it does this using the same SBM routines (spectral clustering) that were used in UnifyCM and unifyLP. When we next move to time indices, $t$ and $t+1$, the nodes which were minorities at time $t-1$ are removed from the graph sequence. Thus, the graph sequence size decreases as estimation time index increases. In the left top subplot in Figure~\ref{fig:changing}, we are showing the size of the majority nodes and the minority nodes at time t-1 as a function of time index $t$. The community memberships of the latter set of nodes is estimated for time $t$ and the error in this estimate is plotted in the left bottom subplot. The estimation error in $\xi$  dies down quickly as we get more snapshots to estimate it versus $t$, and we omit plotting this. The center and right subplots of Figure~\ref{fig:changing} show similar performances for the remaining two configurations. All estimates are averaged over 10 Monte Carlo runs, where the randomness is in the sequence of adjacency matrices generated in each run.

\begin{figure}	
\centering
 \includegraphics[width=0.3\columnwidth]{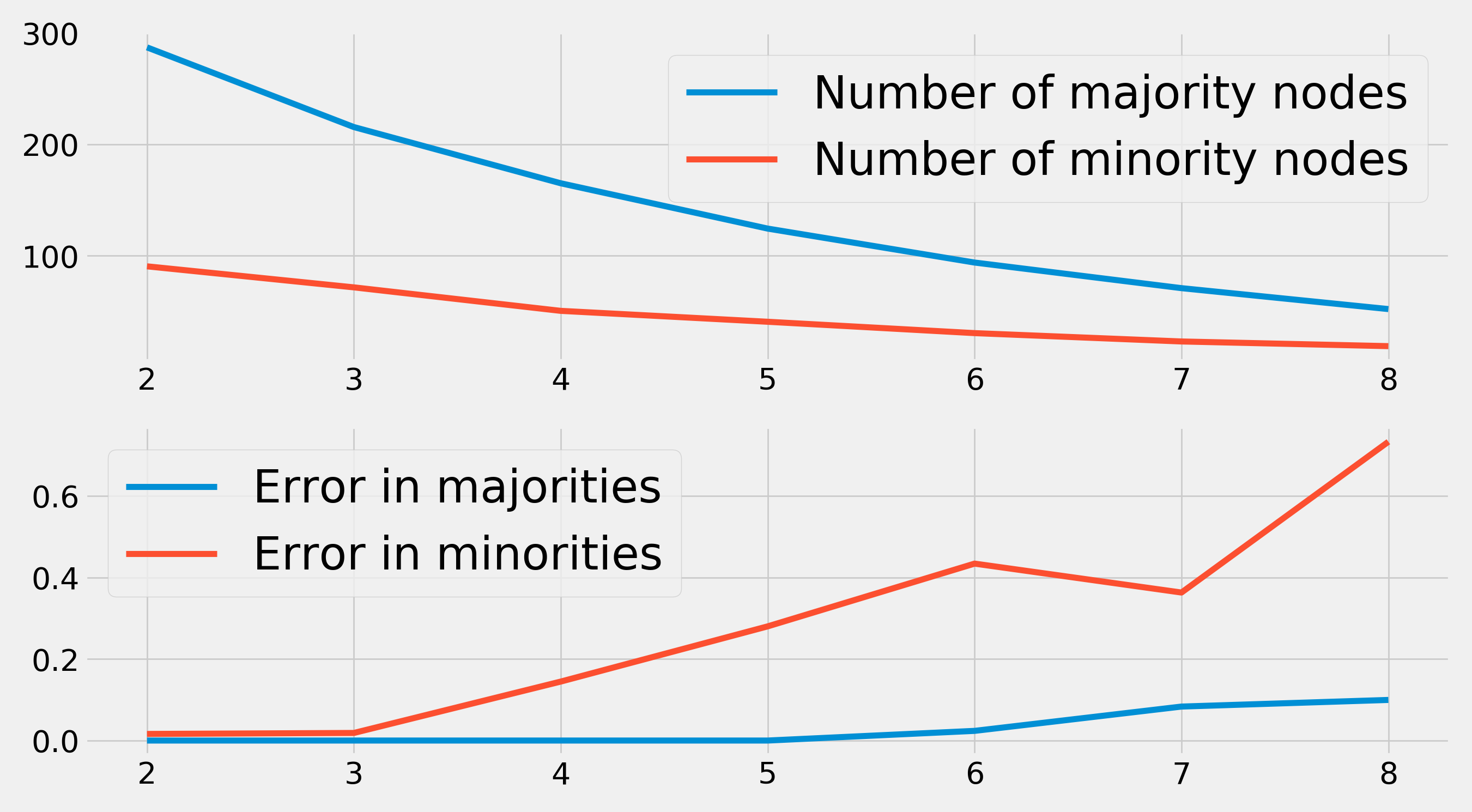}{}
	\includegraphics[width=0.3\columnwidth]{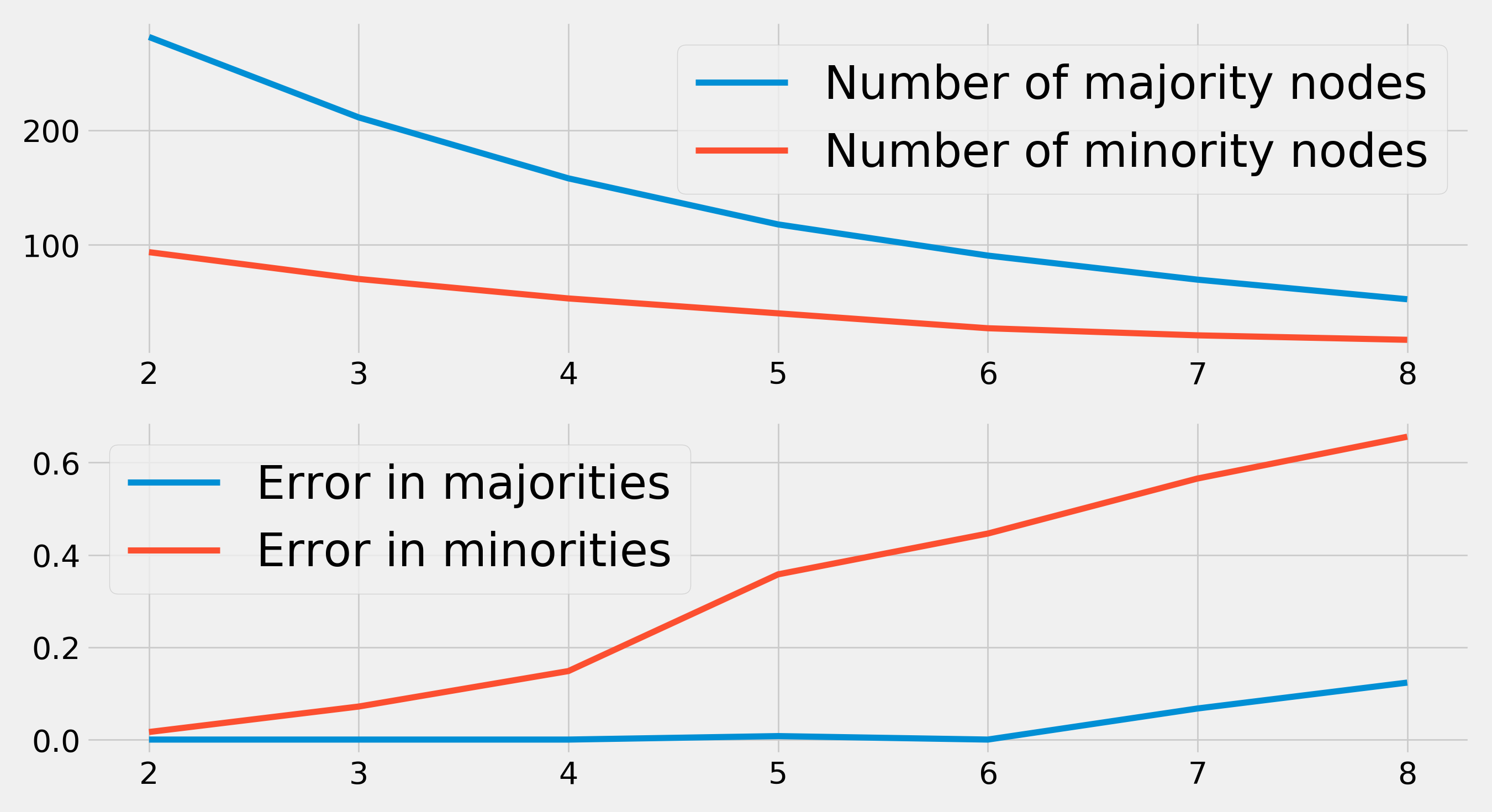}{}
		\includegraphics[width=0.3\columnwidth]{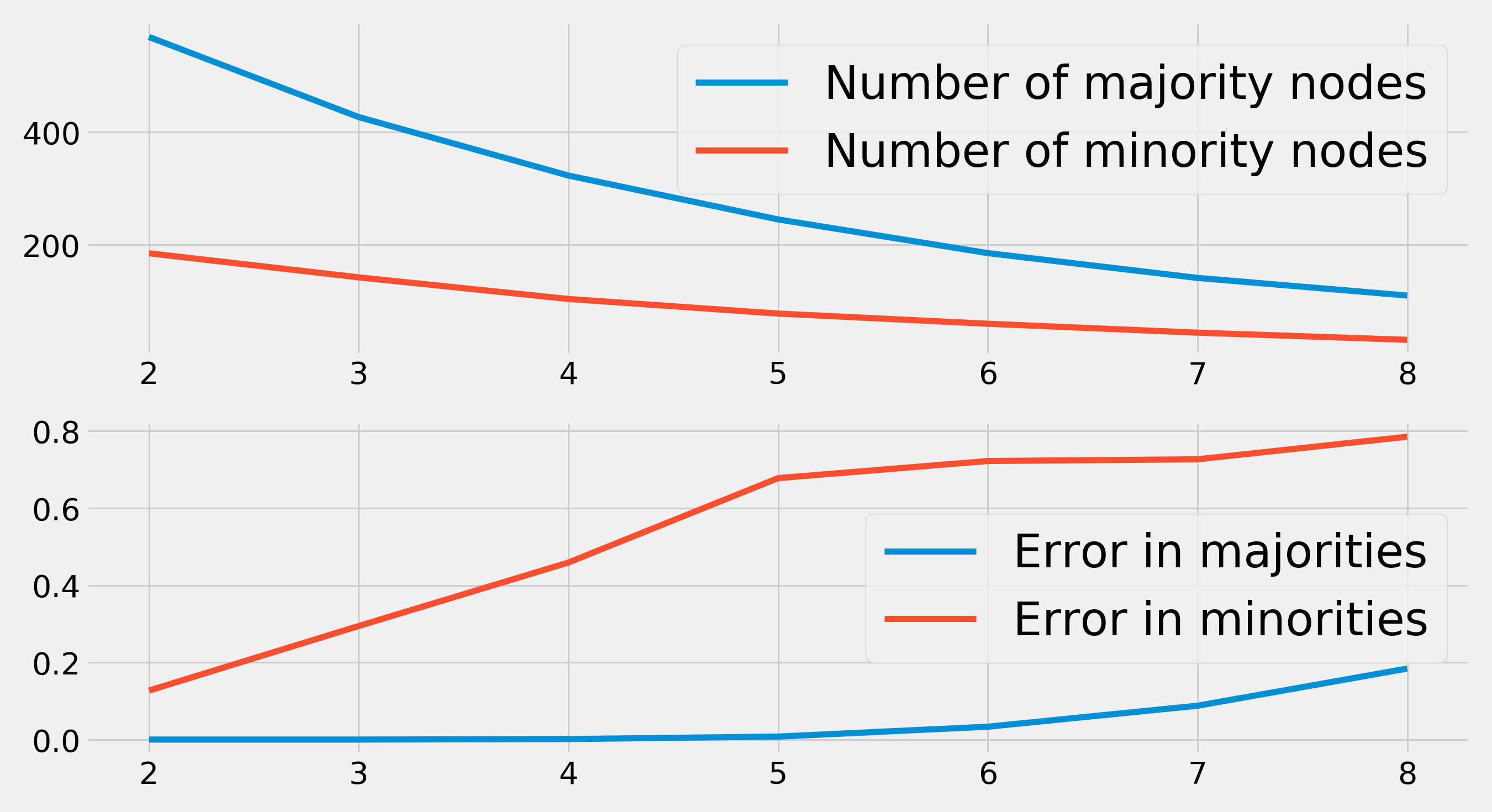}
\caption{Number of nodes (top) and estimation error (1-NMI, bottom) as a function of time index $t$ in the community and link persistence model of Section~\ref{sec:group-changing-model}. Here, the errors are increasing because the number of nodes used for estimating minority node memberships is decreasing. The decrease is due to the fact that nodes that once change communities cannot be reused at a future time in our approach because they don't preserve the single graph SBM structure that is needed for inference. The configurations are: left: $(n,k,\xi) = (500,2,0.2)$, center: $(n,k,\xi) = (500,2,0.8)$, and right: $(n,k,\xi) = (1000,4,0.2)$.}\label{fig:changing}
\end{figure}

In the next two experiments (Experiments D and E), we explore the performance of Algorithms~\ref{Inference general Fixed model} and~\ref{Inference Fixed Barrucca} on synthetic instances across a variety of parameter regimes (e.g., graph size, number of communities etc.). We first start with the setting when the edge probability matrix is fixed (Experiment D). In Figure~\ref{fig:fixed_group_fixed_w1}, we plot the temporal dependence of errors in our estimates of the community memberships and the model parameters for three configurations, viz., (a) $(n,k) = (500,2)$ with type-I dynamic ($\wrs=\bigl(\begin{smallmatrix}
  0.3 & 0.2\\
  0.2 & 0.3
 \end{smallmatrix}\bigl)$ and  $\mu=
\bigl(\begin{smallmatrix}
  0..6 & 0.4\\
  0.4 & 0.6
 \end{smallmatrix}\bigl)$), (b) $(n,k) = (1000,2)$ with type-I dynamic (parameters same as before), and (c) $(n,k) = (2000,2)$ with type-I dynamic (parameters same as before). In all three cases, the errors die down quickly, as seen in the figure. In particular, as the number of nodes increases, the community estimation problem becomes easier, keeping everything else the same. Next, to explore if the parameters themselves influence estimation errors, we vary $\xi$ while keeping $(n,k) = (500,2)$. In particular, we set $\xi = 0.2, 0.5$ and $0.8$ respectively (and $\wrs=\bigl(\begin{smallmatrix}
  0.3 & 0.2\\
  0.2 & 0.3
 \end{smallmatrix}\bigl)$ is kept fixed). As seen in Figure~\ref{fig:fixed_group_fixed_w2}, the estimation error profiles are similar and fairly insensitive to the values of the parameter changed, across these settings. Experiments with changing $\mu,\wrs$ had similar trends and have been omitted here.
 
 \begin{figure}	
 \centering
 \includegraphics[width=0.3\columnwidth]{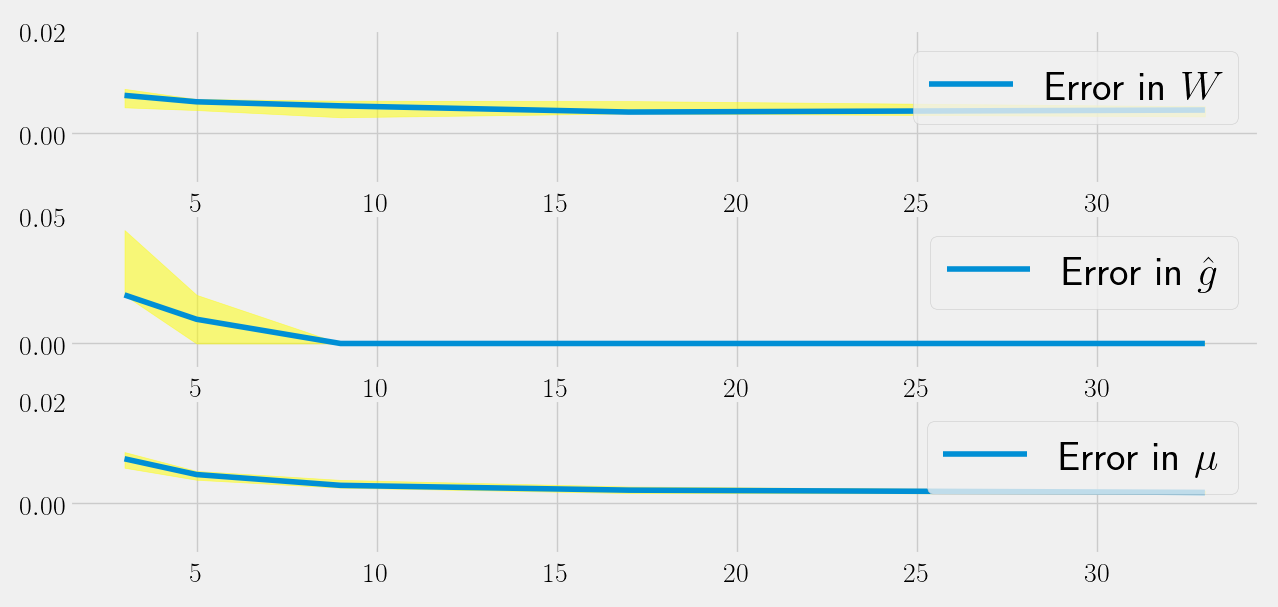}{}
	\includegraphics[width=0.3\columnwidth]{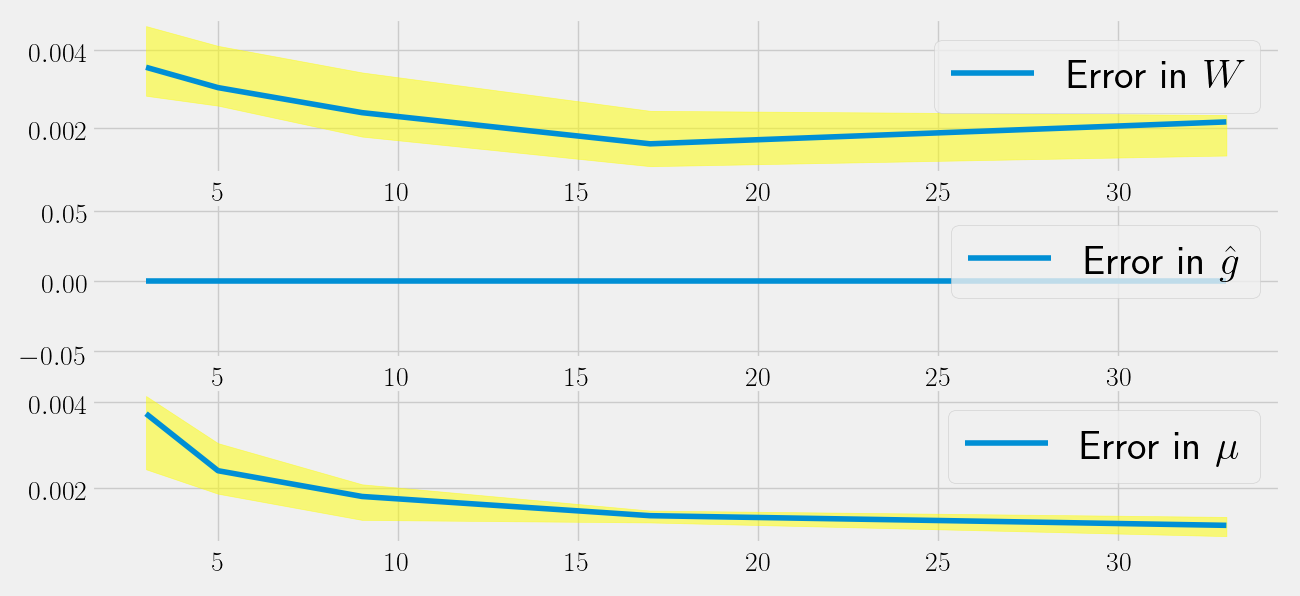}{}
		\includegraphics[width=0.3\columnwidth]{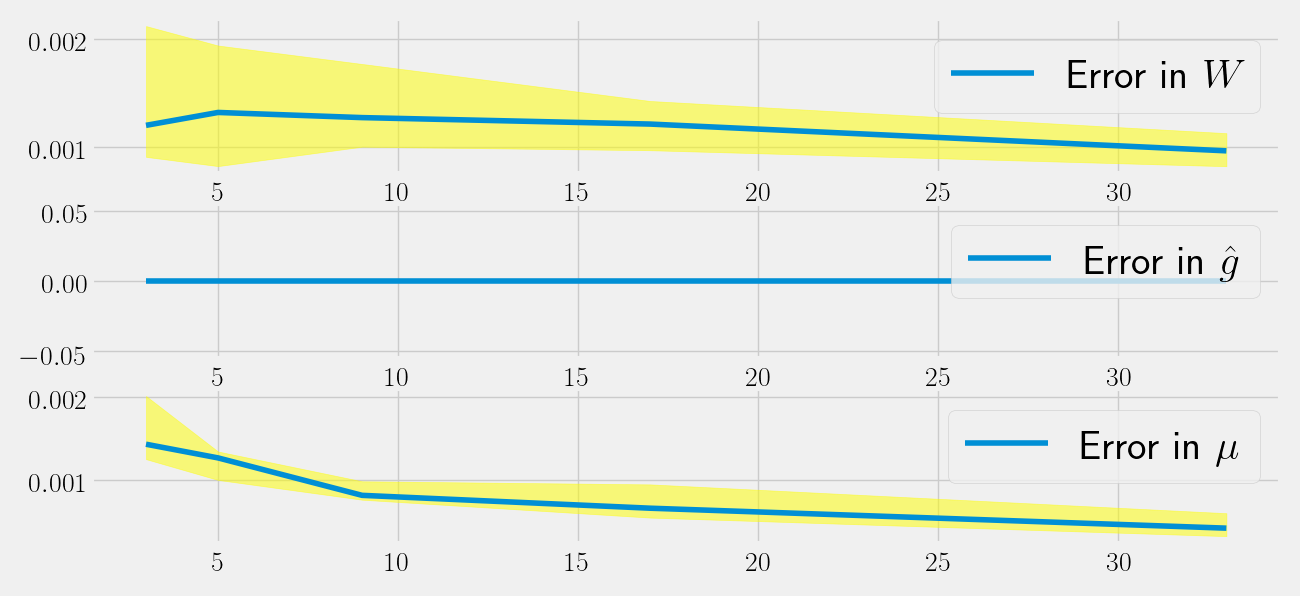}
\caption{Estimation error as a function of time index in the link persistence model. UnifyCM was used to estimate community memberships. Left: $(n,k) = (500,2)$ with type-I dynamic. Center: $(n,k) = (1000,2)$ with type-I dynamic. Right: $(n,k) = (2000,2)$ with type-I dynamic. }\label{fig:fixed_group_fixed_w1}
\end{figure}

 \begin{figure}	
 \centering
 \includegraphics[width=0.3\columnwidth]{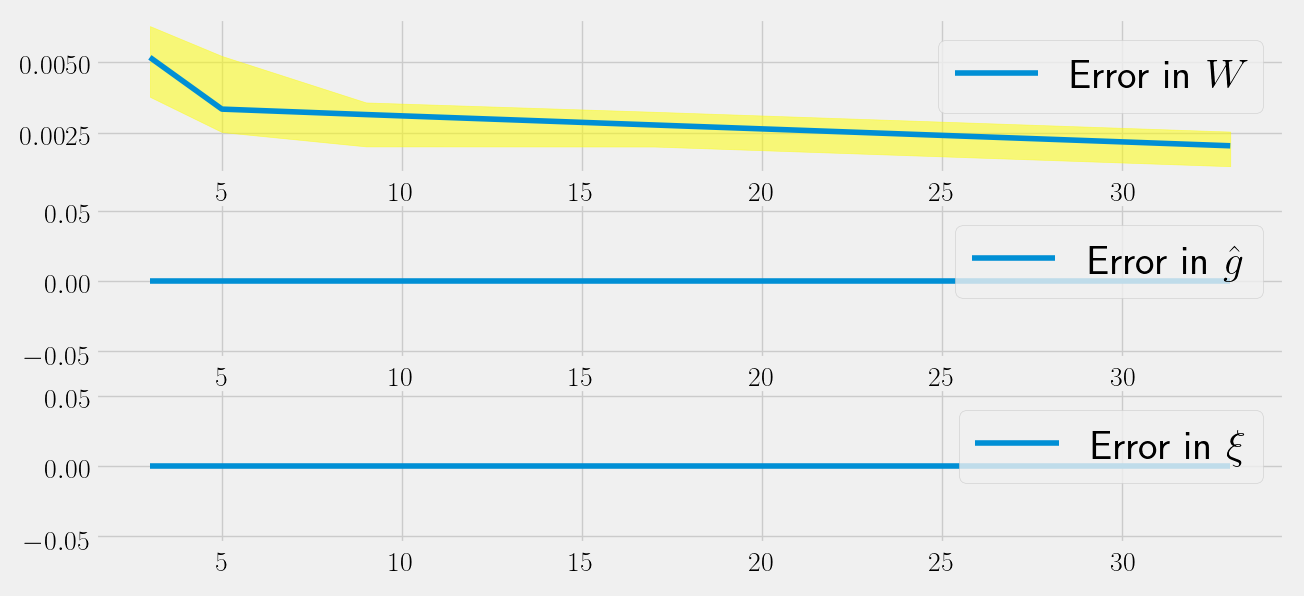}{}
	\includegraphics[width=0.3\columnwidth]{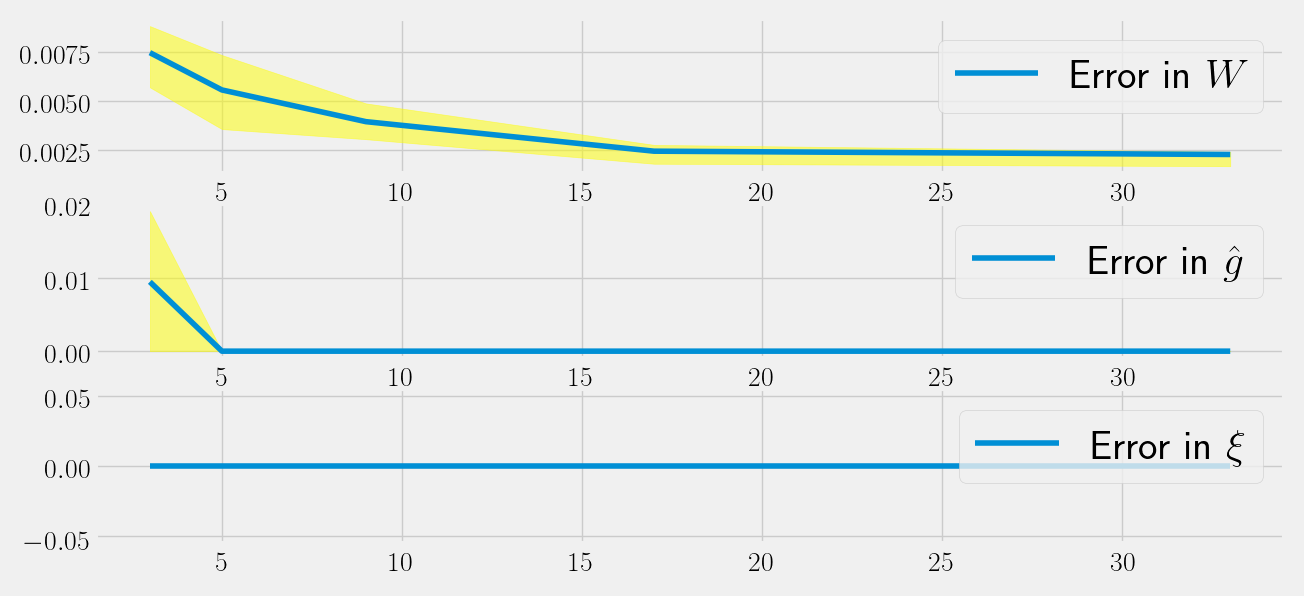}{}
		\includegraphics[width=0.3\columnwidth]{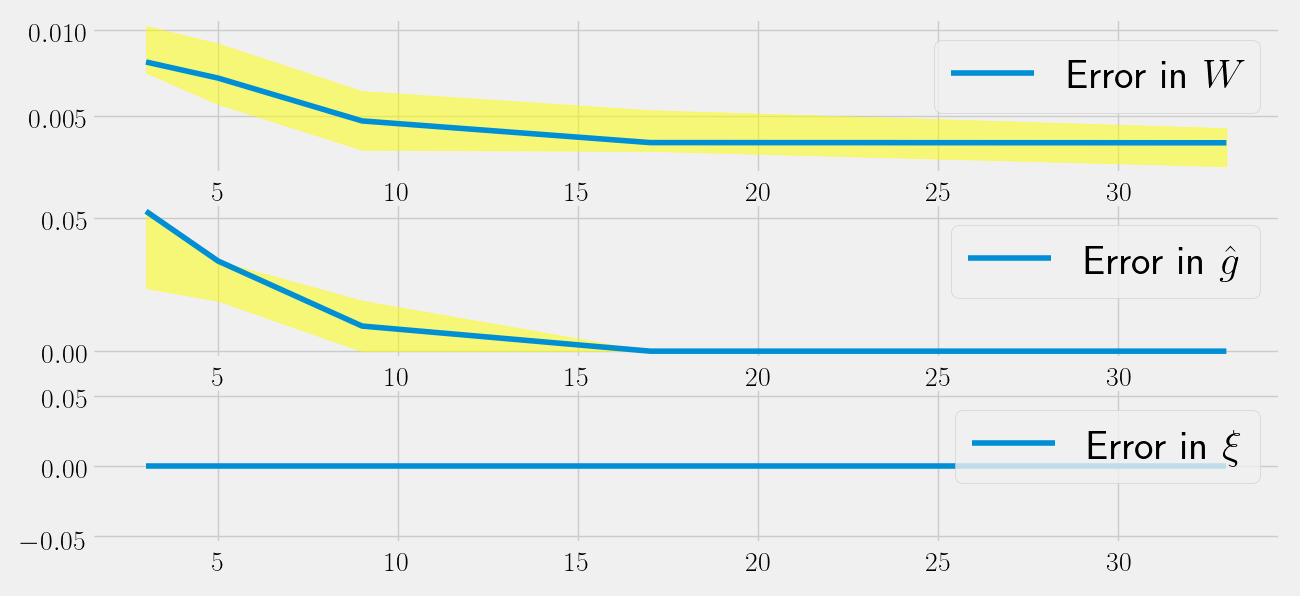}
\caption{Estimation error as a function of time index in the link persistence model for type-II dynamic as we change the parameter $\xi$ with $(n,k) = (500,2)$ kept fixed. Left: $\xi = 0.2$. Center: $\xi = 0.5$. Right: $\xi = 0.8$. Again, UnifyCM was used to estimate community memberships.}\label{fig:fixed_group_fixed_w2}
\end{figure}

We relax the fixed edge probability matrix restriction (Experiment E) and study the performance of our methods on parameter and community estimation. The edge probability matrices themselves were generated i.i.d (i.i.d. draws allow for SBM interpretability at each time-step, and SBM routines can be used). The plots in Figure~\ref{fig:fixed_group_changing_w} show two configurations of the type-I dynamic. In the first configuration, $(n,k) = (500,2)$ and in the second configuration $(n,k) = (500,4)$. In both plots, the errors for estimated $W$ matrices are not expected to decrease because we have one additional matrix to estimate at each new time step. The error on $\mu$ on the other hand decreases (albeit slowly) as we get more snapshots. One reason for this could be that as time steps increase, the MLE optimization objective is less sensitive to the choice of $\mu$ (i.e., becomes flat). In both this experiment and the previous one, the estimation errors in the community memberships become zero very quickly.

 \begin{figure}	
 \centering
 \includegraphics[width=0.45\columnwidth]{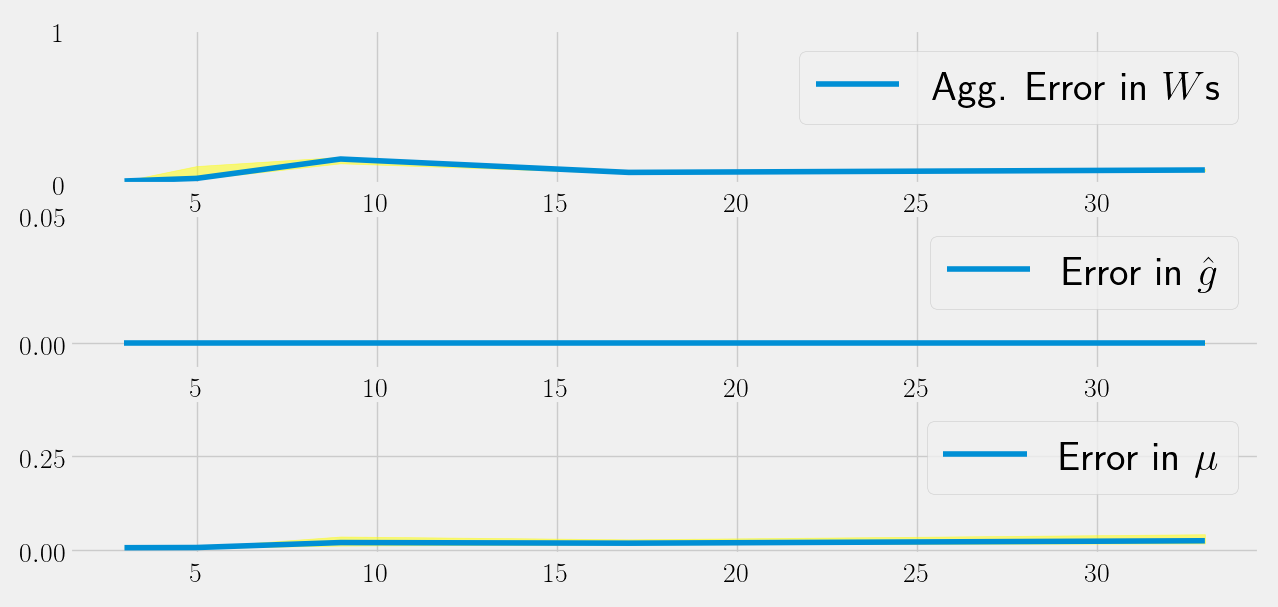}{}
	\includegraphics[width=0.45\columnwidth]{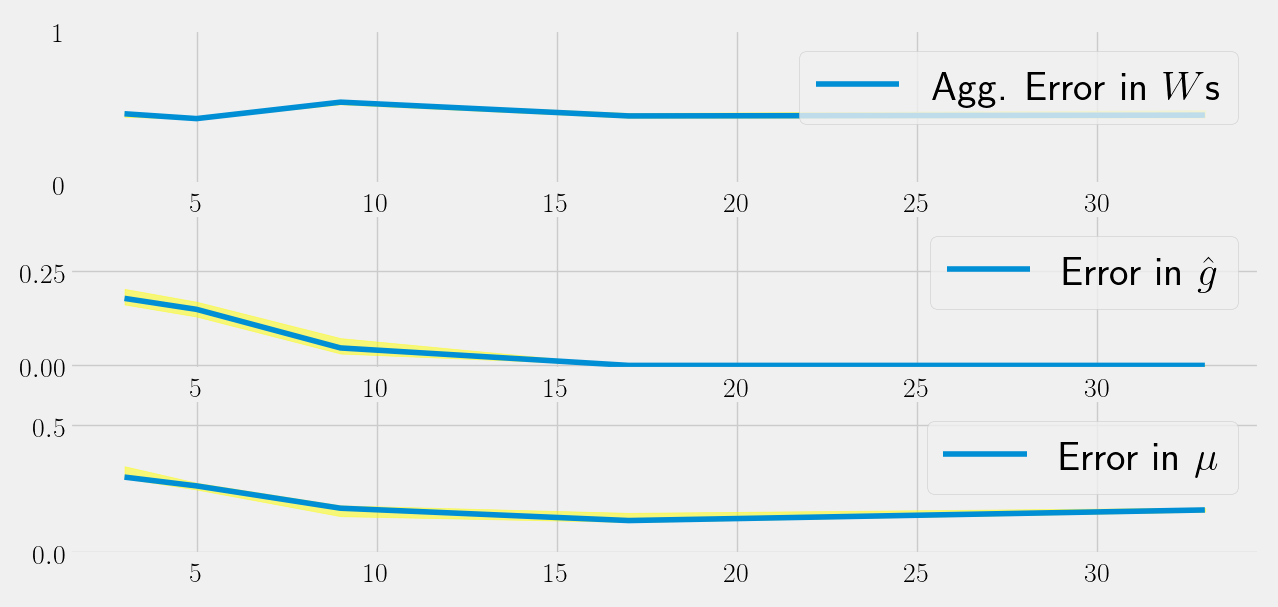}
\caption{Estimation error as a function of time index in the general link persistence model ($\wrs^t$ drawn independently at each time step). UnifyCM was used to estimate community memberships. Left: $(n,k) = (500,2)$. Right: $(n,k) = (500,4)$. Both are with type-I dynamic. Estimation errors on the aggregate of $\wrs$ estimates don't decrease as we estimate an additional $O(k^2)$ parameters with each increasing time-step. }\label{fig:fixed_group_changing_w}
\end{figure}

In the next setup (Experiment F), we compare the community recovery quality of our algorithms when compared to the DSBM approach (also called HM-SBM). DSBM (and its variant SBTM) assume a latent stochastic process governing the evolution of the edge probability matrices. The authors propose inference on this class of models using Kalman filtering (KF) and greedy local search. We first generate graph sequences based on a DSBM model. These sequences are input both our methods and to the KF based inference procedure. Figure~\ref{fig:comparison_dsbm} plots the error in the community memberships estimated at each time step. As can be observed, by the end of the sequence, our methods are able to perform approx. 50\% percent better. Spectral-Separate is a baseline that estimates communities based on the given snapshot's adjacency matrix only, and it performs considerably worse than the alternatives.

\begin{figure}
\centering
 \includegraphics[width=0.5\columnwidth]{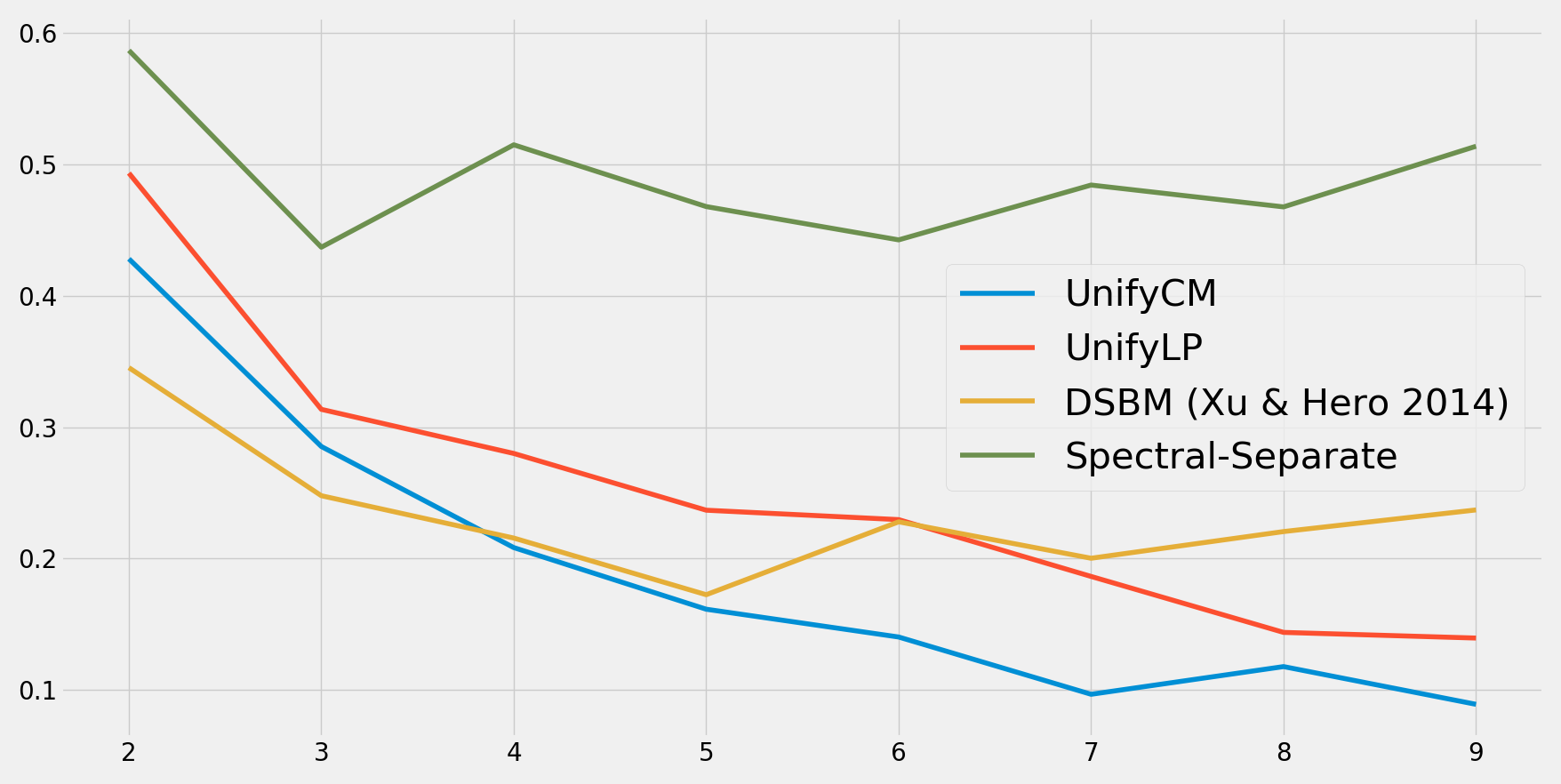}{}
\caption{Estimation error (1-NMI, lower is better) in community memberships as a function of time index $t$ when the data is generated according to a DSBM model.}\label{fig:comparison_dsbm}
\end{figure}

Finally, we assess the link prediction capability of our model using the Enron email dataset~\cite{park2008scan} and the Facebook friendship dataset~\cite{viswanath-2009-activity} (Experiment G). For the former dataset, we use 7 day periods to generate the adjacency matrix of communication between employees (184) with various roles. We also truncate a few weeks at the beginning and at the end where the communication frequency is low (this ultimately leads to a $120$ time step graph sequence). The latter dataset is of a regional network in the USA that was crawled between 2006-2009. It consists of information about when two users became friends. We follow the same preprocessing procedure as listed in~\cite{junuthula2018leveraging} and obtain a sequence of 9 graphs with 1988 nodes. For both datasets, we set up the link prediction auxiliary task as follows. The sequence of adjacency matrices obtained from the datasets is used to estimate the parameters of our type-I dynamic model with a fixed edge probability matrix. In particular, at time $t$, we use the previous $l$ graph snapshots to fit our model ($l=4$ for Enron and $l=2$ for Facebook). The estimated class memberships and the edge probability matrix  are then used to predict the edge formation probability between every pair of nodes at time $t$. We repeat these steps till we reach the end of the sequence by moving forward one time step. Since the actual number of communications is sparse, we use AUC as a measure to access our link prediction accuracy. As Figure~\ref{fig:real_data_roc} shows, we obtain an AUC value of $0.75$ without tuning any parameters for the Enron dataset. There is less predictability in the Facebook dataset, giving us an AUC score of $0.57$. These numbers are somewhat lower than those reported using more expressive models, such as the DSBM, which are very inefficient to infer compared to our methods. Nonetheless, the ROC scores indicate that our models are capturing relevant information from the graph sequences and are able to perform respectably without any finetuning. Finally, note that the reported numbers are for a single sample path for both datasets, and care is needed while drawing general conclusions about these datasets.

\begin{figure}	
\centering
 \includegraphics[width=0.45\columnwidth]{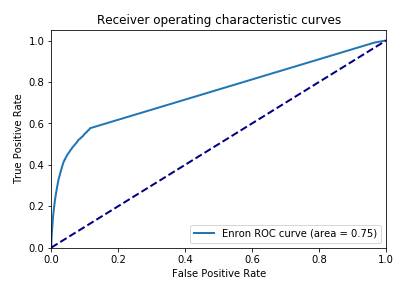}{}
 \includegraphics[width=0.45\columnwidth]{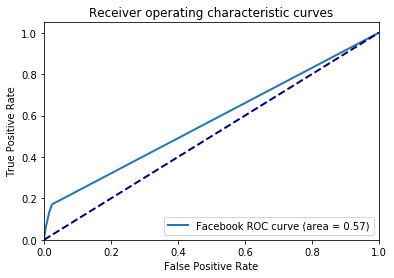}
\caption{The Receiver operating characteristic (ROC) curve for the link prediction auxiliary task is plotted here. The prediction is performed using a sequence of models fitted according to the type-I dynamic. Left: Enron email data. Right: Facebook friendship network.}\label{fig:real_data_roc}
\end{figure}

\section{Conclusion}\label{sec:conclusion}

In this work, we developed new block structure based statistical models for time-series graph sequences, and proposed new algorithms for inference that are able to exploit efficient methods designed for the single graph SBM model. We captured both link and community membership persistence across time. Inference methods for our models rely on estimating community memberships independently for each graph observation and then combining these outputs, taking into account errors and permutations. Because of this modular aspect of the algorithms, our methods can directly take advantage of the computational and statistical advances made for the single graph SBM model.

\bibliographystyle{plainnat}
\bibliography{bib_graph_sequences}

\begin{thebibliography}{25}
\providecommand{\natexlab}[1]{#1}
\providecommand{\url}[1]{\texttt{#1}}
\expandafter\ifx\csname urlstyle\endcsname\relax
  \providecommand{\doi}[1]{doi: #1}\else
  \providecommand{\doi}{doi: \begingroup \urlstyle{rm}\Url}\fi

\bibitem[Abbe(2017)]{abbe_community_2017}
Emmanuel Abbe.
\newblock Community detection and stochastic block models: recent developments.
\newblock \emph{arXiv preprint arXiv:1703.10146}, 2017.

\bibitem[Abbe and Sandon(2015{\natexlab{a}})]{abbe_detection_2015}
Emmanuel Abbe and Colin Sandon.
\newblock Detection in the stochastic block model with multiple clusters: proof
  of the achievability conjectures, acyclic {BP}, and the
  information-computation gap.
\newblock \emph{arXiv preprint arXiv:1512.09080}, 2015{\natexlab{a}}.

\bibitem[Abbe and Sandon(2015{\natexlab{b}})]{abbe_recovering_2015}
Emmanuel Abbe and Colin Sandon.
\newblock Recovering communities in the general stochastic block model without
  knowing the parameters.
\newblock In \emph{Neural Information Processing Systems}, pages 676--684,
  2015{\natexlab{b}}.

\bibitem[Anagnostopoulos et~al.(2016)Anagnostopoulos, Lacki, Lattanzi,
  Leonardi, and Mahdian]{anagnostopoulos_community_2016}
Aris Anagnostopoulos, Jakub Lacki, Silvio Lattanzi, Stefano Leonardi, and
  Mohammad Mahdian.
\newblock Community detection on evolving graphs.
\newblock In \emph{{Neural} {Information} {Processing} {Systems}}, pages
  3522--3530, 2016.

\bibitem[Barucca et~al.(2017)Barucca, Lillo, Mazzarisi, and
  Tantari]{barucca_detectability_2017}
Paolo Barucca, Fabrizio Lillo, Piero Mazzarisi, and Daniele Tantari.
\newblock Detectability thresholds in networks with dynamic link and community
  structure.
\newblock \emph{arXiv preprint arXiv:1701.05804}, 2017.

\bibitem[Bhattacharyya§ and Chatterjee(2017)]{Bhattacharyya2017}
Sharmodeep Bhattacharyya§ and Shirshendu Chatterjee.
\newblock Spectral clustering for multiple dissociative sparse networks.
\newblock \emph{under submission}, 2017.

\bibitem[Chen et~al.(2014)Chen, Lim, and Xu]{chen2014weighted}
Yudong Chen, Shiau~Hong Lim, and Huan Xu.
\newblock Weighted graph clustering with non-uniform uncertainties.
\newblock In \emph{International Conference on Machine Learning}, pages
  1566--1574, 2014.

\bibitem[Gao et~al.(2015)Gao, Ma, Zhang, and Zhou]{gao_achieving_2015}
Chao Gao, Zongming Ma, Anderson~Y. Zhang, and Harrison~H. Zhou.
\newblock Achieving optimal misclassification proportion in stochastic block
  model.
\newblock \emph{arXiv preprint arXiv:1505.03772}, 2015.

\bibitem[Ghasemian et~al.(2016)Ghasemian, Zhang, Clauset, Moore, and
  Peel]{ghasemian_detectability_2016}
Amir Ghasemian, Pan Zhang, Aaron Clauset, Cristopher Moore, and Leto Peel.
\newblock Detectability {Thresholds} and {Optimal} {Algorithms} for {Community}
  {Structure} in {Dynamic} {Networks}.
\newblock \emph{Physical Review X}, \penalty0 (3), 2016.

\bibitem[Han et~al.(2015)Han, Xu, and Airoldi]{han_consistent_2015}
Qiuyi Han, Kevin Xu, and Edoardo Airoldi.
\newblock Consistent estimation of dynamic and multi-layer block models.
\newblock In \emph{{International} {Conference} on {Machine} {Learning}}, pages
  1511--1520, 2015.

\bibitem[Herlau et~al.(2013)Herlau, Mørup, and Schmidt]{herlau_modeling_2013}
Tue Herlau, Morten Mørup, and Mikkel Schmidt.
\newblock Modeling temporal evolution and multiscale structure in networks.
\newblock In \emph{International {Conference} on {Machine} {Learning}}, pages
  960--968, 2013.

\bibitem[Holland et~al.(1983)Holland, Laskey, and
  Leinhardt]{holland1983stochastic}
Paul Holland, Kathryn Laskey, and Samuel Leinhardt.
\newblock Stochastic blockmodels: First steps.
\newblock \emph{Social Networks}, 5\penalty0 (2):\penalty0 109--137, 1983.

\bibitem[Junuthula et~al.(2018)Junuthula, Xu, and
  Devabhaktuni]{junuthula2018leveraging}
Ruthwik Junuthula, Kevin Xu, and Vijay Devabhaktuni.
\newblock Leveraging friendship networks for dynamic link prediction in social
  interaction networks.
\newblock \emph{arXiv preprint arXiv:1804.08584}, 2018.

\bibitem[Park et~al.(2008)Park, Priebe, and Marchette]{park2008scan}
Youngser Park, Carey Priebe, and David Marchette.
\newblock Scan statistics on {Enron} hypergraphs.
\newblock \emph{Interface}, 2008.

\bibitem[Pensky and Zhang(2017)]{pensky_spectral_2017}
Marianna Pensky and Teng Zhang.
\newblock Spectral clustering in the dynamic stochastic block model.
\newblock \emph{arXiv preprint arXiv:1705.01204}, 2017.

\bibitem[Rastelli(2018)]{rastelli2017exact}
Riccardo Rastelli.
\newblock Exact integrated completed likelihood maximisation in a stochastic
  block transition model for dynamic networks.
\newblock \emph{arXiv preprint arXiv:1710.03551v2}, 2018.

\bibitem[Rastelli et~al.(2017)Rastelli, Latouche, and
  Friel]{rastelli2017choosing}
Riccardo Rastelli, Pierre Latouche, and Nial Friel.
\newblock Choosing the number of groups in a latent stochastic block model for
  dynamic networks.
\newblock \emph{arXiv preprint arXiv:1702.01418}, 2017.

\bibitem[Tropp(2012)]{tropp2012user}
Joel Tropp.
\newblock User-friendly tail bounds for sums of random matrices.
\newblock \emph{Foundations of Computational Mathematics}, 12\penalty0
  (4):\penalty0 389--434, 2012.

\bibitem[Viswanath et~al.(2009)Viswanath, Mislove, Cha, and
  Gummadi]{viswanath-2009-activity}
Bimal Viswanath, Alan Mislove, Meeyoung Cha, and Krishna~P. Gummadi.
\newblock On the evolution of user interaction in facebook.
\newblock In \emph{Proceedings of the 2nd ACM SIGCOMM Workshop on Social
  Networks}, 2009.

\bibitem[Xing et~al.(2010)Xing, Fu, and Song]{xing_state-space_2010}
Eric Xing, Wenjie Fu, and Le~Song.
\newblock A state-space mixed membership blockmodel for dynamic network
  tomography.
\newblock \emph{The Annals of Applied Statistics}, 4\penalty0 (2):\penalty0
  535--566, June 2010.

\bibitem[Xu(2015)]{xu_stochastic_2015}
Kevin Xu.
\newblock Stochastic block transition models for dynamic networks.
\newblock In \emph{Artificial {Intelligence} and {Statistics}}, pages
  1079--1087, 2015.

\bibitem[Xu and Hero(2014)]{xu_dynamic_2014}
Kevin Xu and Alfred Hero.
\newblock Dynamic stochastic blockmodels for time-evolving social networks.
\newblock \emph{IEEE Journal of Selected Topics in Signal Processing},
  \penalty0 (4):\penalty0 552--562, 2014.

\bibitem[Xu et~al.(2017)Xu, Jog, and Loh]{xu_optimal_2017}
Min Xu, Varun Jog, and Po-Ling Loh.
\newblock Optimal {Rates} for {Community} {Estimation} in the {Weighted}
  {Stochastic} {Block} {Model}.
\newblock \emph{arXiv preprint arXiv:1706.01175}, 2017.

\bibitem[Yang et~al.(2011)Yang, Chi, Zhu, Gong, and Jin]{yang_detecting_2011}
Tianbao Yang, Yun Chi, Shenghuo Zhu, Yihong Gong, and Rong Jin.
\newblock Detecting communities and their evolutions in dynamic social
  networks: a {Bayesian} approach.
\newblock \emph{Machine Learning}, 82\penalty0 (2):\penalty0 157--189, 2011.

\bibitem[Zhang et~al.(2016)Zhang, Moore, and Newman]{zhang_random_2016}
Xiao Zhang, Cristopher Moore, and Mark Newman.
\newblock Random graph models for dynamic networks.
\newblock \emph{arXiv:1607.07570 [physics]}, July 2016.

\end{thebibliography}

\appendix
\section{Proofs}
In this section, we provide the proofs of various claims.
\subsection{Proof of Lemma~\ref{each_SBM}}
\begin{proof}
Let $\wt^t_{g_ig_j}:=P(\adj^t_{ij}=1|\pzo^t_{\gro_i\gro_j},\poo^t_{\gro_i\gro_j})$ denote the conditional (conditioned on the community memberships) probability of a link between two nodes at time $t$. This probability can be observed as the link formation probability of a new SBM structure because it only depends on the community memberships of the relevant nodes defined by the link. This is a simple consequence of assuming that the stochastic process for each link is independent and that the community memberships are fixed. The new SBM obtained from marginalizing link variables has the same block structure as the graph sequence model in Equation~\ref{MLE_1}. Its parameters are related to the original model parameters $(\wrs^1, ..., \wrs^T, x)$ as:
\begin{align} \label{SBM_prf}
\wt^t_{g_ig_j}&=\sum_{\adj^{t-1}_{ij}} P(\adj^t_{ij}=1, \adj^{t-1}_{ij}|\gro_i,\gro_j)= p^t_{1\rightarrow 1}\wt^{t-1}_{\gro_i\gro_j}+p^t_{0\rightarrow 1}(1-\wt^{t-1}_{\gro_i\gro_j}), \nonumber
\\
&=\poo^t_{\gro_i\gro_j}\wt^{t-1}_{\gro_i\gro_j}+\pzo^t_{\gro_i\gro_j}(1-\wt^{t-1}_{\gro_i\gro_j}), \nonumber
\\
&= \prod_{\tau=2,...,t} \Big(\poo^\tau_{\gro_i\gro_j}-\pzo^\tau_{\gro_i\gro_j}\Big) \wt^1_{\gro_i \gro_j}+ \sum_{m=1}^{t-1} \prod_{n=m}^{t-1} \Big(\poo^{m+1}_{\gro_i\gro_j}-\pzo^{m+1}_{\gro_i\gro_j}\Big) \pzo^{m}_{\gro_i\gro_j}+\pzo^{t}_{\gro_i\gro_j}
\end{align}
where $\wt^1=\wrs^1$. Equation (\ref{SBM_prf}) shows that the each graph marginal retains the same SBM structure because the link formation probability depends on the community memberships of only the relevant nodes, which are fixed across time.  
\end{proof}

\subsection{Proof of Lemma~\ref{Unfiycm_correct}}
\begin{proof}
If there are two community membership vectors $g_1$ and $g_2$ such that $\sigma(g_2) = g_1$ for some permutation $\sigma$, then it is easy to see that $Y_1 = g_1g_1'$ and $Y_2 = g_2g_2'$ are equal.

Thus, if $\hat{\gro}^t= \gro^*$ for all $1 \leq t 
\leq T$, then we have: 
\[\hat{\y}^t=(\hat{\gro}^t) (\hat{\gro}^t)^\prime= (\hat{\gro}^*) (\hat{\gro}^*)^\prime= \y^*, \quad 1 \leq t \leq T.\]
We also have $\epsilon = 0$ (see Algorithm~\ref{unify cms}).

Thus, the output of \textsc{UnifyCM} is $\hat{\y}:=\argmin_{\y: \text{cluster matrix}} \norm{\y - \frac{1}{T}\sum_{t=1}^T \y^*}_F= \y^*$. Hence, the algorithm outputs $\gro^*$ (which can be deduced from $\hat{\y}=\y^*=(\hat{\gro}^*) (\hat{\gro}^*)^\prime$) as expected.
\end{proof}

\subsection{Proof of Theorem~\ref{UnifyCM_Grnt}}
\begin{proof}

Given \textbf{Assumption A}, for a pair of nodes $(i,j)$ at time $t$, the corresponding entry in the cluster matrix $\hat{\y}^t$ can be erroneous due to the following two reasons:
\begin{enumerate}
\item $\gro_i^*= \gro_j^*$, but $\hat{\gro}_i^t \neq \hat{\gro}_j^t.$ The probability of this error, defined as assigning $i,j$ to different communities erroneously, is: $\p=1-\frac{\eps^2}{\noc-1}-(1-\eps)^2$.
\item $\gro_i^*\neq \gro_j^*$, but $\hat{\gro}_i^t= \hat{\gro}_j^t.$ The probability of this error, defined as assigning $i,j$ to the same communities erroneously, is: $\q= \frac{\eps^2}{\noc-1}$.
\end{enumerate}

Thus, we note that $\hat{\y}_{ij}^t= \begin{cases}
\y^*_{ij}, \quad w.p. \quad (1-\p)\y^*_{ij}+(1-\y_{ij}^*)(1-\q)\\
\y^*_{ij}, \quad w.p. \quad \p \y_{ij}^*+(1-\y_{ij}^*)\q,
\end{cases}$ 
where $\y^*$ is the true solution and $\hat{\y}^t$ is a cluster matrix representation of community membership vector $\hat{\gro}^t \in [\noc]^{\no}$.

This implies: 
\[\E[\hat{\y}_{ij}^t]= \y_{ij}^* (1-\p)+(1-\y_{ij}^*)\q= (1-\eps)^2\y^*_{ij}+\q.\]
Also, 
\[\norm{\hat{\y}^t-\E[\hat{\y}^t]}=\norm{\hat{\y}^t-(1-\eps)^2\y^*_{ij}+\q}\leq \no+(1-\eps)^2 \max_{l \in [\noc]} k_l +q,\]
where $k_l$ is the size of community $l \in [\noc]$.

Let $R:= \frac{1}{T}(\no+(1-\eps)^2 \max_{l \in [\noc]} k_l +q).$
Define $\sigma^2$ as:
\begin{align*}
\sigma^2 &= \frac{1}{T^2} \norm{\sum_{t=1}^T \E(\hat{\y}^t-\E[\hat{\y}^t])^2}
\leq \frac{1}{T^2} \sum_{t=1}^T\norm{\E(\hat{\y}^t-\E[\hat{\y}^t])^2}
\leq \frac{1}{T^2}\sum_{t=1}^T \norm{\E[\hat{\y}^t]} + \frac{1}{T^2}\norm{\E^2[\hat{\y}^t]}, \\
&\leq \frac{1}{T^2} \sum_{t=1}^T (\norm{\E[\hat{\y}^t]}+\norm{\E[\hat{\y}^t]}\norm{\E[\hat{\y}^t]})
= \frac{1}{T}\Big((1-\eps)^2\max_{l \in [\noc]} k_l+\q+\big( (1-\eps)^2\max_{l \in [\noc]} k_l+\q\big)^2\Big),
\end{align*}
where $\lVert\cdot \rVert$ is the spectral norm. Using the matrix Bernstein theorem from \citet{tropp2012user}, we have:
\begin{align}\label{mat_bernstein}
P\Big(\norm{\frac{1}{T}\sum_{t=1}^T\hat{\y}^t-\E[\hat{\y}^t]}\geq u \Big) \leq n^2 \exp(\frac{-u^2/2}{\sigma^2+Ru/3}), \quad \forall u\geq 0.
\end{align}
Note that:
\begin{align*}
\norm{\frac{1}{T}\sum_{t=1}^T(\hat{\y}^t)-\E[\hat{\y}^t]}
&=\norm{\frac{1}{T}\sum_{t=1}^T(\hat{\y}^t)-(1-\eps)^2\y^*+\q \mathbf{1}\mathbf{1}'} 
\leq \norm{\frac{1}{T}\sum_{t=1}^T(\hat{\y}^t)-(1-\eps)^2\y^*}_F +q\\
&= (1-\eps)^2\norm{\frac{1}{T(1-\eps)^2}\sum_{t=1}^T(\hat{\y}^t)-\y^*}_F +q 
\end{align*}
Let $\bar{{\y}}:=\frac{1}{T(1-\eps)^2}\sum_{t=1}^T(\hat{\y}^t)$, and define $\hat{\y}:=\argmin_{\{y: \text{ cluster matrix}\}} \norm{\y- \frac{1}{T(1-\eps)^2}\sum_{t=1}^T(\hat{\y}^t)}_F$.
Note that:
$\norm{\bar{\y}-\y^*}_F= \norm{\bar{\y}-\hat{\y}+\hat{\y}-\y^*}_F\leq \norm{\bar{\y}-\hat{\y}}_F+\norm{\hat{\y}-{\y^*}}_F$.\\
Assume $\norm{\bar{\y}-\hat{\y}}_F\leq B.$ Now from Equation (\ref{mat_bernstein}), we get:
\begin{align*}
P\Big((1-\eps)^2 (\norm{\hat{\y}-\y^*}_F+B)+q \geq u \Big) \leq n^2 \exp(\frac{-u^2/2}{\sigma^2+Ru/3})
\end{align*}
Thus, we get:
\begin{align*}
P\Big(\norm{\hat{\y}-\y^*}_F \geq \frac{u-q}{(1-\eps)^2}-B \Big) \leq n^2 \exp(\frac{-u^2/2}{\sigma^2+Ru/3}), \quad \forall u\geq 0.
\end{align*}
\end{proof}

\subsection{Proof of Lemma \ref{SSBM_Grnt}}
\begin{proof}
Note that $\hat{\y}^t= [z_{1t} ... z_{\no t}]' [z_{1t} ... z_{\no t}]$. Define $f(z_{1t}, ..., z_{\no t})= \sum_{i=1}^{\no} \sum_{j=1}^{\no} (z_{it}'z_{jt}-\y^*_{ij})^2$. Also,
\begin{align*}
\sup_{z_{1t}, ... ,z_{\no t},z^{new}_{it}} \abs{f(z_{1t}, ...,z_{it},..., z_{\no t})-f(z_{1t}, ...,z^{new}_{it},..., z_{\no t})}\leq 2\no, \quad \forall i=1,...,\no.
\end{align*}
Now, using the bounded differences inequality, for any $u_1\geq 0$, we have:
\[ P\Big(f(z_{1t}, ..., z_{\no t})\geq \E [f(z_{1t}, ..., z_{\no t})] + u_1\Big)\leq \exp(\frac{-2u_1^2}{\sum_{i=1}^{\no} (2\no)^2})= \exp(\frac{-2u_1^2}{4\no^3})\]
also, $\E [f(z_{1t}, ..., z_{\no t})]$ can be computed as follows:
\begin{align*}
\E [f(z_{1t}, ..., z_{\no t})]&=\E\Big[\sum_{i=1}^{\no} \sum_{j=1}^{\no} (z_{it}'z_{jt}-\y^*_{ij})^2\Big]=\sum_{i=1}^{\no} \sum_{j=1}^{\no} \E\Big[(z_{it}'z_{jt})^2+a^2-2az_{it}'z_{jt}\Big]
\\&= \E[(z_{it}'z_{jt})^2]+a^2-2a\E[z_{it}']\E[z_{jt}],
\end{align*}
where in the last equality $a=\y^*_{ij}$, and $z_{it} \perp z_{jt}.$ Also, let $p_l:=\E[z_{it}]= \begin{cases}
1-\eps \quad if \quad l=\gro_i^*\\
\eps/(\noc -1) \quad o.w.
\end{cases}.$ Note that 
\[\E[(z_{it}'z_{jt})^2]=\E[\E[(z_{it}' z_{jt})^2|z_{jt}=\beta]]= \E[\E[(\sum_l \beta_l z_{ilt})^2|z_{jt}=\beta]].\] 
The summation inside can be expressed as:
\[\sum_l \beta_l^2 z_{ilt}^2+ 2\sum_{l<m}\beta_l \beta_m z_{ilt}z_{imt}.\]
Substituting the expression for $\beta$ and the following values in the expected value of the summation above, viz., \[\E[z_{ilt}^2]=\text{Var}[z_{ilt}]-\E[z_{ilt}]^2=p_l(1-p_l)-p_l^2=p_l-2p_l^2,\]
\[\E[z_{ilt}z_{imt}]=p_lp_m,\]
we get: 
\[\E[(z_{it}'z_{it})^2]=\E[\sum_l z_{ilt}^2(p_l-2p_l^2)+2\sum_{l<m} z_{ilt} z_{imt} p_l p_m] = \sum_l (q_l-2q_l^2)(p_l-2p_l^2) + 2\sum_{l<m} q_l q_m p_l p_m.
\]
\end{proof}

\subsection{Proof of Lemma~\ref{claim:aac}}

\begin{proof}
First, we apply $\per_t$ on input $\hat{\gro}^t$ to get $g^*$ for each time index $1\leq t \leq T$. If we give these $g^*$s as inputs to \textsc{UnifyLP} (Algorithm~\ref{unify LP}), then $Q^t=Q^*$  for all $1 \leq t \leq T$. Naturally $\tau^t=\mathbb{I}$ (an identity matrix) is the unique optimal solution. As a consequence, for each $i \in [\no]$, we have $\hat{\gro}_i=\argmax_{k \in [\noc]}\sum_{t \in [T]}e_i\trn{Q^*}=\argmax_{k \in [\noc]} T e_i\trn{Q^*}={\gro}^*_i$, as desired.
\end{proof}

\subsection{Proof of Theorem~\ref{2SBM_thm}}

\begin{proof}
First, marginalizing over $\adj_{ij}^{t-1}$ we get:
\begin{align}\label{magin_t-1}
\wrs_{ij}^t(a,b)&=\xi P(\adj_{ij}^{t-1}=1|\gro_i^t=a, \gro_j^t=b, \maj)+(1-\xi)\wrs^t_{ab}.
\end{align}
Next, summing over the values that $\gro_i^{t-1} and \gro_j^{t-1}$ take, we get:
\begin{align*}
P(\adj_{ij}^{t-1}|\gro^t, \maj)& = 
\sum_{\gro_i^{t-1},\gro_j^{t-1}} P(\adj_{ij}^{t-1}=1|\gro^t,\gro_i^{t-1},\gro_j^{t-1}, M) \prod_{s=i,j} \Big[\maj_{s}^{t-1}\delta_{\gro_{s}^t, \gro_{s}^{t-1}}+\bar{\maj}_{s}^{t-1} \bar{\delta}_{\gro_{s}^t,\gro_{s}^{t-1}}\frac{1}{\noc-1}\Big],\\
&=\sum_{\gro_i^{t-1},\gro_j^{t-1}} P(\adj_{ij}^{t-1}=1|\gro_i^{t-1},\gro_j^{t-1}, M) \Big[\maj_{i}^{t-1}\delta_{\gro_{i}^t, \gro_{i}^{t-1}}+\bar{\maj}_{i}^{t-1} \bar{\delta}_{\gro_{i}^t, \gro_{i}^{t-1}}\frac{1}{\noc-1}\Big]\\
&\hspace{1cm}\times\Big[\maj_{j}^{t-1}\delta_{\gro_{j}^t, \gro_{j}^{t-1}}+\bar{\maj}_{j}^{t-1} \bar{\delta}_{\gro_{j}^t, \gro_{j}^{t-1}}\frac{1}{\noc-1}\Big],\\
&= \Big[\maj_i^{t-1}\maj_j^{t-1}\wrs_{ij}^{t-1}(a,b)\Big]+\Big[\frac{\maj_i^{t-1}\bar{\maj}_j^{t-1}}{\noc-1} \sum_{\gro_j^{t-1}\neq b} \wrs_{ij}^{t-1}({a,\gro_j^{t-1}})\Big]\\
&+\Big[\frac{\bar{\maj}_i^{t-1}\maj_j^{t-1}}{\noc-1}\sum_{\gro_i^{t-1}\neq a}\wrs_{ij}^{t-1}({\gro_i^{t-1},b})\Big]+\Big[\frac{\bar{\maj}_i^{t-1}\bar{\maj}_j^{t-1}}{(\noc-1)^2}\sum_{\gro_i^{t-1}\neq a,\gro_j^{t-1}\neq b}\wrs_{ij}^{t-1}({\gro_i^{t-1},\gro_j^{t-1}})\Big].
\end{align*}
Substituting this in equation (\ref{magin_t-1}), we get:
\begin{align}
\wrs_{ij}^t(a,b)&= \xi \Bigg[\Big[\maj_i^{t-1}\maj_j^{t-1}\wrs_{ab}^{t-1}\Big]+\Big[\frac{\maj_i^{t-1}\bar{\maj}_j^{t-1}}{\noc-1} \sum_{\gro_j^{t-1}\neq b} \wrs_{a\gro_j^{t-1}}^{t-1}\Big]\\
&+\Big[\frac{\bar{\maj}_i^{t-1}\maj_j^{t-1}}{\noc-1}\sum_{\gro_i^{t-1}\neq a}\wrs_{\gro_i^{t-1}b}^{t-1}\Big]+\Big[\frac{\bar{\maj}_i^{t-1}\bar{\maj}_j^{t-1}}{(\noc-1)^2}\sum_{\gro_i^{t-1}\neq a,\gro_j^{t-1}\neq b}\wrs_{\gro_i^{t-1}\gro_j^{t-1}}^{t-1}\Big] \Bigg]\nonumber\\
&+ (1-\xi) \wrs^t_{ab}.\nonumber
\end{align}
\end{proof}

\subsection{Proof of Theorem~\ref{General_Min_label}}
\begin{proof}

When nodes $i$ and $j$ are both majority at time index $t-1$, substituting this information in Equation~(\ref{2_SBM}) gives us the  probability of link formation at time $t$ (this is equal to $\wrs^t_{ab}$ if $\gro_{i}^{t-1} = a$ and $\gro_{j}^{t-1} = b$). Now, we can observe that if both the nodes $i$ and $j$ are majority (i.e., $\maj_i^{t=1} = \maj_j^{t=1} = 1$), then the marginal probability of edge formation depends on the community memberships of $i$ and $j$ and not other nodes. Hence, the majority nodes retain an SBM structure. Similarly, we can see that if $i$ and $j$ are both minority, a similar property holds. Thus, we obtain two sub-graphs with each retaining the SBM sub-structure.

Now, consider the setting when one of the nodes is a minority and the other is a majority at time $t-1$. In this case, we can compare the coefficients of $\maj^{t-1}_i\bar{\maj}^{t-1}_j$ or $\bar{\maj}^{t-1}_i\maj^{t-1}_j$ in Equation~(\ref{2_SBM}) with empirical estimates to check if they belong to the same community (i.e., $a=b$) at time $t$.  Without loss of generality, let $i$ be a majority node at time $t-1$, and let $j$ be a minority node. Since till time $t-1$ all nodes were majority, from Equation~(\ref{2_SBM}), we have the following relation:

\[\wrs_{ij}^t(a,b)=\frac{\xi}{\noc-1} \sum_{\gro_j^{t-1}\neq b} \wrs_{a\gro_j^{t-1}}^{t-1}+(1-\xi)\wrs^t_{ab}\]

Based on the above, we can deduce that the community memberships estimated separately for the minority and majority sub-graphs at time $t$ can be aligned, allowing us to recover the community memberships of the minorities, as follows:

\begin{enumerate}
\item $\wrs_{ij}^t(a,a)>\wrs_{ij}^t(a,b) \quad \forall b\neq a \Rightarrow (1-\xi)(\wrs^t_{aa}-\wrs^t_{ab})+\frac{\xi}{\noc-1}(\wrs^{t-1}_{ab}-\wrs^{t-1}_{aa})>0  \quad \forall b\neq a$
\item $\wrs_{ij}^t(a,a)<\wrs_{ij}^t(a,b) \quad \forall b\neq a
\Rightarrow (1-\xi)(\wrs^t_{aa}-\wrs^t_{ab})+\frac{\xi}{\noc-1}(\wrs^{t-1}_{ab}-\wrs^{t-1}_{aa})<0 \quad \forall b\neq a$
\end{enumerate}

The first case implies that the probability of link formation between $i,j$ when $b = a$ is the highest. We can estimate the number of links across communities of the two SBM structures and align those two communities that have the highest number of links between each other. Similarly, in the second case, we can do the opposite and align the two SBM structures. Thus, in the both of above cases, we are able to recover the communities of all nodes at time $t$.

\end{proof}

\subsection{Proof of Lemma~\ref{claim baruca marginal}}

\begin{proof}
\begin{enumerate}
\item We use induction to prove this claim for any value of $t$. By assumption of uniform prior on community assignment at the initial time step, we have $P(\gro_i^1=a|\maj)=\frac{1}{\noc}, \quad \forall a \in [\noc]$. Next, assuming  that for any $b \in [\noc],$ $P(\gro_i^t=b|\maj)=\frac{1}{\noc}$, we prove that $P(\gro_i^{t+1}=a|\maj)=\frac{1}{\noc}, \quad \forall a \in [\noc]$. Indeed,
\begin{align*}
P(\gro_i^{t+1}=a|\maj)&= \sum_{b=1}^{\noc}P(\gro_i^{t+1}=a, \gro_i^t=b|\maj),\\
&= \sum_{b=1}^{\noc} P(\gro_i^{t+1}=a|\gro_i^{t}=b, \maj) P(\gro_i^t=b|\maj),\\
&= \frac{1}{\noc}\big[\maj^t_i \times 1+(1-\maj^t_i)\frac{\noc-1}{\noc-1}\big]=\frac{1}{\noc}.
\end{align*}
\item We prove the result for a $t$ (such that all nodes don't change their communities till time index $t-1$) as follows:
\begin{align*}
P(\adj_{ij}^{t}=1|\maj)&= \sum_{\alpha\in \{0,1\}, \gro^{t}_i, \gro^{t}_j,\gro^{t-1}_i,\gro^{t-1}_j}P(\adj_{ij}^{t}=1|\adj_{ij}^{t-1}=\alpha,\gro_i^{t},\gro_j^{t},\maj) P(\adj_{ij}^{t-1}=\alpha|\gro_i^{t-1},\gro_j^{t-1},\maj)\\ 
&\quad\quad\quad\quad \times P(\gro_i^{t}|\gro_i^{t-1},\maj)P(\gro_j^{t}|\gro_j^{t-1},\maj) P(\gro_i^{t-1}|\maj)P(\gro_j^{t-1}|\maj),\\
&=\frac{1}{\noc^2}\sum_{\gro_i^{t},\gro_j^{t},\gro_i^{t-1},\gro_j^{t-1}} \Big[(1-\xi)\wrs_{\gro_i^{t},\gro_j^{t}}(1-\wrs_{\gro_i^{t-1},\gro_j^{t-1}}^{t-1})+\Big(\xi+(1-\xi)\wrs_{\gro_i^{t},\gro_j^{t}}\Big)\wrs_{\gro_i^{t-1},\gro_j^{t-1}}^{t-1}\Big]\\
&\quad\quad\quad\quad \times\Big[\delta_{\gro_i^{t},\gro_i^{t-1}}\maj_i^{t-1}+\frac{1}{\noc-1}\bar{\delta}_{\gro_i^{t},\gro_i^{t-1}}\bar{\maj}_i^{t-1}\Big]\Big[\delta_{\gro_j^{t},\gro_j^{t-1}}\maj_j^{t-1}+\frac{1}{\noc-1}\bar{\delta}_{\gro_j^{t},\gro_j^{t-1}}\bar{\maj}_j^{t-1}\Big],\\
&\stackrel{(*)}{=}\frac{1}{\noc^2}\sum_{\gro_i^{t},\gro_j^{t},\gro_i^{t-1},\gro_j^{t-1}} \Big[\xi\wrs_{\gro_i^{t-1}\gro_j^{t-1}}+(1-\xi)\wrs_{\gro_i^{t}\gro_j^{t}}\Big] \times T_2,\\
&= \frac{1}{\noc^2} \xi \sum_{\gro_i^{t-1},\gro_j^{t-1}} \wrs_{\gro_i^{t-1}\gro_j^{t-1}} \sum_{\gro_i^{t},\gro_j^{t}} T_2 + \frac{1}{\noc^2} (1-\xi)\sum_{\gro_i^{t},\gro_j^{t}}\wrs_{\gro_i^{t}\gro_j^{t}} \sum_{\gro_i^{t-1},\gro_j^{t-1}} T_2, \\&
\end{align*}
where in equality $(*)$, we use the result from Section~\ref{sec:group-fixed-algo} Equation (\ref{Barrucca W recursion}), namely that $\wrs^{t-1}_{\gro_i^{t-1},\gro_j^{t-1}}= \wrs_{\gro_i^{t-1},\gro_j^{t-1}}$. Also, for simplicity we substitute 
\begin{align*}
T_2 &= \maj_i^{t-1}\maj_j^{t-1}\delta_{\gro_i^{t},\gro_i^{t-1}}\delta_{\gro_j^{t},\gro_j^{t-1}}
+\frac{\maj_i^{t-1}\bar{\maj}_j^{t-1}}{\noc-1}\delta_{\gro_i^{t},\gro_i^{t-1}}\bar{\delta}_{\gro_j^{t},\gro_j^{t-1}}\\
&\quad\quad\quad +\frac{\bar{\maj}_i^{t-1} \maj_j^{t-1}}{\noc-1}\bar{\delta}_{\gro_i^{t},\gro_i^{t-1}}{\delta}_{\gro_j^{t},\gro_j^{t-1}}+\frac{\bar{\maj}_i^{t-1} \bar{\maj}_j^{t-1}}{(\noc-1)^2}\bar{\delta}_{\gro_i^{t},\gro_i^{t-1}}\bar{{\delta}}_{\gro_j^{t},\gro_j^{t-1}}.
\end{align*}
There are four possible values that the pair $(M_i^{t-1}, M_j^{t-1})$ can take. In each setting, we can show that the summations simplify giving us the desired result $P(\adj_{ij}^{t}=1|\maj)=\bar{\wrs}$.

\item Again we prove the result for a $t$ (such that all nodes don't change their communities till time index $t-1$) as follows:
\begin{align*}
\frac{1}{\noc}\sum_{a} \wrs^{t}_{ij}(a,b)&=\frac{1}{\noc} \sum_{a} P(\adj_{ij}^{t}=1|\gro_i^{t}=a, \gro_j^{t}=b, \maj),\\
&= \frac{1}{\noc} \sum_{a,\gro^t, \beta}P(\adj^{t}=1|\adj^{t-1}=\beta, \gro^{t}=(a,b), \gro^{t-1}, \maj)\\
&\hspace{1cm}\times P(\adj^{t-1}=\beta|\gro^{t}=(a,b), \gro^{t-1}, \maj) P(\gro^{t-1}|\gro^{t}, \maj),\\
&\stackrel{(*)}{=}\frac{1}{\noc} \sum_a \sum_{\beta\in\{0,1\}}\sum_{\gro^{t-1}} (\xi\delta_{\beta,1}+(1-\xi)\wrs_{ab}) \wrs_{\gro_i^{t-1},\gro_j^{t-1}}^\beta (1-\wrs_{\gro_i^{t-1},\gro_j^{t-1}})^{(1-\beta)} T_2,\\
&=\frac{1}{\noc}\sum_a\Big[(\xi+(1-\xi)\wrs_{ab})(\sum_{\gro^{t-1}=(c,d)}\wrs_{cd}T_2)+(1-\xi)\wrs_{ab}(\sum_{cd}T_2-\sum_{cd}\wrs_{cd}T_2)\Big],\\
&=
\frac{1}{\noc}\sum_a\Big[\xi\sum_{c,d}\wrs_{cd}T_2+(1-\xi)\wrs_{ab}\sum_{c,d}T_2\Big],
\end{align*}
where in equality $(*)$ we use the result from Section~\ref{sec:group-fixed-algo} Equation (\ref{Barrucca W recursion}), namely that $\wrs^{t-1}_{\gro_i^{t-1},\gro_j^{t-1}}= \wrs_{\gro_i^{t-1},\gro_j^{t-1}}$. Also, for simplicity we substitute $T_2= \maj_i^{t-1}\maj_j^{t-1}\delta_{\gro_i^{t},\gro_i^{t-1}}\delta_{\gro_j^{t},\gro_j^{t-1}}+\frac{\maj_i^{t-1}\bar{\maj}_j^{t-1}}{\noc-1}\delta_{\gro_i^{t},\gro_i^{t-1}}\bar{\delta}_{\gro_j^{t},\gro_j^{t-1}}+\frac{\bar{\maj}_i^{t-1} \maj_j^{t-1}}{\noc-1}\bar{\delta}_{\gro_i^{t},\gro_i^{t-1}}{\delta}_{\gro_j^{t},\gro_j^{t-1}}+\frac{\bar{\maj}_i^{t-1} \bar{\maj}_j^{t-1}}{(\noc-1)^2}\bar{\delta}_{\gro_i^{t},\gro_i^{t-1}}\bar{{\delta}}_{\gro_j^{t},\gro_j^{t-1}}$. Considering different cases for $\maj_i^{t-1}$ and $\maj_j^{t-1}$, we get $\frac{1}{\noc}\sum_{a} \wrs_{ij}^t(a,b)=\bar{\wrs}, \quad \forall b\in [\noc]$.
\end{enumerate}

\end{proof}

\subsection{Proof of Corollary~\ref{prop baruca marginal}}

\begin{proof}
First, marginalizing over $\adj_{ij}^{t-1}$ we get:
\begin{align*}
\wrs_{ij}^t(a,b)&=\xi P(\adj_{ij}^{t-1}=1|\gro_i^t=a, \gro_j^t=b, \maj)+(1-\xi)\wrs_{ab}.
\end{align*}
Next, summing over the values that $\gro_i^{t-1} and \gro_j^{t-1}$ take, we get:
\begin{align*}
P(\adj_{ij}^{t-1}|\gro^t, \maj)& = 
\sum_{\gro_i^{t-1},\gro_j^{t-1}} P(\adj_{ij}^{t-1}=1|\gro^t,\gro_i^{t-1},\gro_j^{t-1}, M) \prod_{s=i,j} \Big[\maj_{s}^{t-1}\delta_{\gro_{s}^t, \gro_{s}^{t-1}}+\bar{\maj}_{s}^{t-1} \bar{\delta}_{\gro_{s}^t,\gro_{s}^{t-1}}\frac{1}{\noc-1}\Big],\\
&=\sum_{\gro_i^{t-1},\gro_j^{t-1}} P(\adj_{ij}^{t-1}=1|\gro_i^{t-1},\gro_j^{t-1}, M) \Big[\maj_{i}^{t-1}\delta_{\gro_{i}^t, \gro_{i}^{t-1}}+\bar{\maj}_{i}^{t-1} \bar{\delta}_{\gro_{i}^t}\frac{1}{\noc-1}\Big]\\
&\hspace{1cm}\times\Big[\maj_{j}^{t-1}\delta_{\gro_{j}^t, \gro_{j}^{t-1}}+\bar{\maj}_{j}^{t-1} \bar{\delta}_{\gro_{j}^t}\frac{1}{\noc-1}\Big],\\
&= \Big[\maj_i^{t-1}\maj_j^{t-1}\wrs_{ij}^{t-1}(a,b)\Big]+\Big[\frac{\maj_i^{t-1}\bar{\maj}_j^{t-1}}{\noc-1} \sum_{\gro_j^{t-1}\neq b} \wrs_{ij}^{t-1}({a,\gro_j^{t-1}})\Big]\\
&+\Big[\frac{\bar{\maj}_i^{t-1}\maj_j^{t-1}}{\noc-1}\sum_{\gro_i^{t-1}\neq a}\wrs_{ij}^{t-1}({\gro_i^{t-1},b})\Big]+\Big[\frac{\bar{\maj}_i^{t-1}\bar{\maj}_j^{t-1}}{(\noc-1)^2}\sum_{\gro_i^{t-1}\neq a,\gro_j^{t-1}\neq b}\wrs_{ij}^{t-1}({\gro_i^{t-1},\gro_j^{t-1}})\Big].
\end{align*}
Using claims from  Lemma~\ref{claim baruca marginal}, viz., $P(\gro_i^t|\maj)=\frac{1}{\noc}$, $P(\adj_{ij}^t=1|\maj)=\bar{\wrs}$, and $\frac{1}{\noc}\sum_{a} \wrs_{ij}^t(a,b)=\bar{\wrs}, \quad \forall b\in [\noc]$ , we get:
\begin{align*}
\wrs_{ij}^t(a,b)&= \xi \Bigg[\maj_i^{t-1} \maj_j^{t-1} \wrs_{ij}^{t-1}(a,b)+ \frac{\maj_i^{t-1}\bar{\maj}_j^{t-1}}{\noc-1}(\noc \bar{\wrs}-\wrs_{ij}^{t-1}(a,b))+\frac{\bar{\maj}_i^{t-1}{\maj}_j^{t-1}}{\noc-1} \nonumber\\&\times(\noc\bar{\wrs}-\wrs_{ij}^{t-1}(a,b))+\frac{\bar{\maj}_i^{t-1}\bar{\maj}_j^{t-1}}{(\noc-1)^2}\Big((\noc^2-2\noc)\bar{\wrs}+\wrs_{ij}^{t-1}(a,b)\Big) \Bigg]+ (1-\xi) \wrs_{ab},\\
&= \xi \Bigg[\maj_i^{t-1} \maj_j^{t-1} \wrs_{ab}+ \frac{\maj_i^{t-1}\bar{\maj}_j^{t-1}}{\noc-1}(\noc \bar{\wrs}-\wrs_{ab})+\frac{\bar{\maj}_i^{t-1}{\maj}_j^{t-1}}{\noc-1} \nonumber\\
&\times(\noc\bar{\wrs}-\wrs_{ab})+\frac{\bar{\maj}_i^{t-1}\bar{\maj}_j^{t-1}}{(\noc-1)^2}\Big((\noc^2-2\noc)\bar{\wrs}+\wrs_{ab}\Big) \Bigg]+ (1-\xi) \wrs_{ab}.
\end{align*}
\end{proof}

\subsection{Proof of corollary~\ref{claim_Min_label}}

\begin{proof}

When nodes $i$ and $j$ are both majority at time index $t-1$, substituting this information in Equation~(\ref{marginal barrucca}) gives us the  probability of link formation at time $t$ (this is equal to $W_{ab}$ if $\gro_{i}^{t-1} = a$ and $\gro_{j}^{t-1} = b$). We can see that this probability does not depend on the community memberships of any other node, except $i$ and $j$. Hence, the resulting subgraph has an SBM structure. A similar observation can be made when both nodes are minority. Thus there are two SBM models at time $t$ when we only consider links within each set and prune the rest. Assuming that we solve for the community memberships for each set, we show below that these memberships can always be aligned with each other, allowing for the consistent recovery of all community memberships at time $t$.

Consider the setting when one of the nodes is a minority and the other is a majority at time $t-1$. In this case, we can compare the coefficient of $\maj^{t-1}_i\bar{\maj}^{t-1}_j$ or $\bar{\maj}^{t-1}_i\maj^{t-1}_j$ in Equation~(\ref{marginal barrucca}) to verify whether they belong to the same community (i.e., $a=b$) at time $t$ or not.  Without loss of generality, let $i$ be a majority node at time $t-1$, and let $j$ be a minority node. Since till time $t-1$ all nodes were majority, from Equation~(\ref{marginal barrucca}), we obtain two cases:

In the first case, when $1 - \xi - \frac{\xi}{k-1} > 0$, 
\begin{align*}
\wrs_{ij}^t(a,b)&=\frac{\xi}{\noc -1}\Big(\noc \bar{\wrs}-\wrs_{ab}\Big) + (1-\xi)\wrs_{ab}
\end{align*}
is the largest when $a = b$. Thus, we can estimate the number of links across communities in the two SBM structures, and align those two communities that have the highest number of links with each other. This is because the parameter matrix $\wrs$ is an assortative matrix, so if we have $a=b$, the probability of having an edge between nodes $i$ and $j$ is the highest.

In the second case, when $1 - \xi - \frac{\xi}{k-1} \leq 0$, the opposite of above can be done to align the two SBM structures. Thus, in both cases, we can recover the communities of all nodes at time $t$.
\end{proof}

\section{A Convex Heuristic based on UnifyCM}
In this section, we discuss a convex programming based approach for estimating community memberships based on \textsc{UnifyCM} called \textsc{ConvexUnify} (Algorithm~\ref{unify cms2}). 

\begin{algorithm}[h!]
\caption{\textsc{ConvexUnify}}\label{unify cms2}
\begin{algorithmic}[1]
\\\quad\textbf{Input:} Community assignments $\hat{\gro}^t$, $t \in [T]$, threshold $C$.
\\\quad\textbf{Output:} Unified community assignment $\hat{\gro}$. 
\\\quad Set $\adj^u=0_{\no,\no}$.
\\\quad Let $\text{count}_{ij}:=\sum_{t} \hat{\y}_{ij}^t \; \forall i,i \in [\noc]$.
\\\quad \textbf{if} count$_{ij}\geq C$
\\\quad\quad $\adj^u_{ij}=1$
\\\quad $\hat{\y}=\argmax$ of Problem (\ref{clstr_opt}).
\\\quad Deduce $\hat{\gro}$ from $\hat{\y}$.
\end{algorithmic}
\end{algorithm}

Under \textbf{Assumption A} outlined in Section~\ref{sec:group-fixed-algo}, we can study certain properties of \textsc{ConvexUnify}. In particular, we observe that the matrix $\adj^u$ can be erroneous in two cases:
\begin{enumerate}
\item $g_i^*=g_j^*$ but count$_{ij}<C$: Let $p_1$ be the probability of $count_{ij}<C$ while the pair of nodes $i,j$ are in the same true community. Then, 
\[p_1:=P(count_{ij}<C| \gro_i^*= \gro_j^*)=\sum_{l=1}^{C-1} \binom{T}{i} (1-\p)^l (\p)^{T-l}.\]

\item $g_i^*\neq g_j^*$ but count$_{ij}\geq C$: Let $p_2$ be the probability of $count_{ij}\geq C$ while the pair of $i,j$ are not in the same true community. Then,
\[p_2:=P(count_{ij}\geq C| \gro_i^*\neq \gro_j^*)=\sum_{l=C}^{T} \binom{T}{i} \q^l (1-\q)^{T-l}.\]
\end{enumerate}
In the above, $\p$ and $\q$ are the same as in the proof of Theorem~\ref{UnifyCM_Grnt}.

Let $\pi$ be the unconditional probability of false observation in matrix $\adj^u$, i.e., that of missing an edge between two nodes that belong to the same true community or observing an edge between two nodes that belong to different true communities. Then, $\pi$ can be computed as $\pi= \frac{1}{\noc^2} p_1 + (1-\frac{1}{\noc^2})p_2$. Further, note that $\pi$ needs to be less than $1/2$, otherwise the observation of the graph $\adj^u$ has no information, and recovering the true community memberships is not possible. 

We have:
$\adj^u_{ij}=
\begin{cases}
\y^*_{ij}, \quad w.p. \quad 1-\pi\\
1-\y^*_{ij}, \quad w.p. \quad \pi.
\end{cases}$\\

Now, the goal of recovering true community membership matrix $\y^*$ from a erroneous adjacency matrix $\adj^u$ can be achieved by solving:
\begin{align*}
&\max \sum_{i,j} (2\adj^u_{ij}-1) \y_{ij} \quad s.t.\\\nonumber
&\quad \y \text{is a cluster matrix.}
\end{align*}

The above combinatorial problem can be relaxed~\cite{chen2014weighted}
 to  Problem (\ref{clstr_opt}) and used in Algorithm~\ref{unify cms2}.  This problem is a semi-definite program (SDP) and can be run in polynomial time:
\begin{align}\label{clstr_opt}
\max &\sum_{i,j} (2\adj^u_{ij}-1) \y_{ij} \quad s.t.\\\nonumber
&\quad \y \in S_{nuclear}, \textrm{ and}\\\nonumber
&\quad 0 \leq \y_{ij} \leq 1,
\end{align}
where $S_{nuclear}$ is a convex set containing $\y^*$ defined below:
\begin{align}\label{S_nuclear}
S_{nuclear}:=\{Y\in \mathbb{R}^{n\times n}: ||Y||_*\leq \no\},
\end{align}
where $||\y||_*$ represents nuclear (trace) norm constraint.

Although we don't give statistical convergence guarantees for \textsc{ConvexUnify}, we can show that it recovers the true community assignment vector $\gro^*$, at least in the case when there are no errors in the inputs. 

\begin{lemma}\label{unifycm2_crct}
\textsc{ConvexUnify} (Algorithm \ref{unify cms2}) finds the true community assignment $\gro^*$ if there are no errors in its input for any $C \in (0,T)$.
\end{lemma}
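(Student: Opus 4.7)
The plan is to trace the algorithm through, show the intermediate graph $\adj^u$ exactly equals the ground-truth cluster matrix $\y^*$, and then argue that $\y^*$ is the (unique) optimum of the SDP in Problem~(\ref{clstr_opt}) so that $\hat{\gro}$ can be read off correctly. First, under the no-error assumption each $\hat{\gro}^t$ is permutation-equivalent to $\gro^*$, and since cluster matrices are invariant under relabeling of communities (the key observation already used in the proof of Lemma~\ref{Unfiycm_correct}), we get $\hat{\y}^t = \y^*$ for every $t \in [T]$. Consequently $\text{count}_{ij} = \sum_t \hat{\y}^t_{ij} = T\,\y^*_{ij} \in \{0, T\}$, so for any threshold $C \in (0,T)$ the indicator rule in line~5--6 of Algorithm~\ref{unify cms2} yields $\adj^u_{ij} = \y^*_{ij}$ for all $i,j$, i.e.\ $\adj^u = \y^*$.

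Second, I would verify that $\y^*$ is feasible for Problem~(\ref{clstr_opt}). The entries of any cluster matrix lie in $\{0,1\}$, and writing $\y^* = \sum_{l=1}^{\noc} \chi_l \chi_l^\prime$ for the orthogonal community-indicator vectors $\chi_l$ shows that $\y^*$ is positive semidefinite with eigenvalues equal to the community sizes $k_l$. Hence $\norm{\y^*}_* = \operatorname{tr}(\y^*) = \sum_l k_l = \no$, so $\y^* \in S_{\text{nuclear}}$.

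Third, I would bound the SDP objective. With $\adj^u = \y^*$, the objective becomes
\[
\sum_{i,j}(2\y^*_{ij}-1)\y_{ij} \;=\; \sum_{(i,j):\y^*_{ij}=1} \y_{ij} \;-\; \sum_{(i,j):\y^*_{ij}=0} \y_{ij}.
\]
Letting $N_1 := |\{(i,j) : \y^*_{ij}=1\}|$, the box constraints $0 \leq \y_{ij} \leq 1$ give the upper bound $N_1$, and this is attained at $\y = \y^*$. Thus $\y^*$ is optimal. For uniqueness, I would note that if another feasible $\y$ attains value $N_1$, then equality in both summands forces $\y_{ij}=1$ whenever $\y^*_{ij}=1$ and $\y_{ij}=0$ whenever $\y^*_{ij}=0$, so $\y = \y^*$.

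Finally, given that the SDP returns $\hat{\y} = \y^*$, the deduction step recovers $\gro^*$ up to the inevitable relabeling of community indices (exactly the same step already used at the end of Algorithm~\ref{unify cms}). The main potential obstacle is the uniqueness claim, since SDP relaxations are typically only tight under assumptions on the data; but here the noise-free structure of $\adj^u = \y^*$ together with the box constraints $\y_{ij} \in [0,1]$ makes the argument elementary and purely combinatorial, with the nuclear-norm ball entering only to ensure feasibility of $\y^*$, not to enforce tightness.
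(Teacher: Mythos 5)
Your proof is correct and follows essentially the same route as the paper's: show that $\adj^u = \y^*$ under the no-error assumption, verify feasibility of $\y^*$, and bound the linear objective entrywise using the box constraints $0 \leq \y_{ij} \leq 1$. Your explicit nuclear-norm computation $\norm{\y^*}_* = \operatorname{tr}(\y^*) = \sum_l k_l = \no$ and the uniqueness argument (which the paper omits, even though line 7 of the algorithm takes an $\argmax$ and so implicitly needs it) are welcome refinements, but the core argument is identical.
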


\begin{proof}
Since there are no errors in the inputs, w.l.o.g, we can assume $\hat{\gro}^t=\gro^*, \quad 1\leq t \leq T.$ Thus, we have:
\[ \text{count}_{ij}=\begin{cases}
T, \quad if \quad \gro_i^*=\gro_j^*,\\
0, \quad o.w.
\end{cases}
\]
This implies: 
\[ \adj^u_{ij}=\begin{cases}
1, \quad if \quad \gro_i^*=\gro_j^*,\\
0, \quad o.w.
\end{cases}
\]
Now, we show that $\y^*$ is the solution of Problem (\ref{clstr_opt}). First, observe that $\y^*$ is a feasible solution since $\y^* \in S_{nuclear}$ and $0 \leq \y^*_{ij} \leq 1$. Next, note that
\[ 2 \adj^u_{ij} -1=\begin{cases}
1, \quad if \quad \gro_i^*=\gro_j^*\\
-1, \quad o.w.
\end{cases}
\]
Clearly, for any $0 \leq \y_{ij} \leq 1$ we have: 
\[\sum_{i,j} \Big[\mathbf{1}_{\gro_i^*=\gro_j^*} + (0\times \mathbf{1}_{\gro_i^*\neq \gro_j^*})\Big] \geq \sum_{i,j} \Big[ \y_{ij} \mathbf{1}_{\gro_i^*=\gro_j^*} + (-\y_{ij}) \mathbf{1}_{\gro_i^*\neq\gro_j^*}\Big],\]
which leads to: 
\[\sum_{i,j} (2\adj^u_{ij}-1) \y^*_{ij} \geq \sum_{i,j} (2\adj^u_{ij}-1) \y_{ij}.\]
Hence, $\y^*$ is the optimal solution of Problem (\ref{clstr_opt}), and Algorithm \ref{unify cms2} outputs the true community assignment $\gro^*$.
\end{proof}

\end{document}